\definecolor{exactgray}{gray}{0.45} 
\def\eqref#1{equation~\ref{#1}}
\def\1{\bm{1}}
\DeclareMathAlphabet{\mathsfit}{\encodingdefault}{\sfdefault}{m}{sl}
\SetMathAlphabet{\mathsfit}{bold}{\encodingdefault}{\sfdefault}{bx}{n}
\title{High-Dimensional Gaussian Process Regression with Soft Kernel Interpolation}
\author{\name Chris Camaño \email ccamano@caltech.edu \\
    \addr Department of Computing and Mathematical Sciences, \\
    California Institute of Technology, Pasadena, CA, 91125, USA
    \AND
    \name Daniel Huang \email dan@base26labs.com \\
    \addr Department of Computer Science, \\
    San Francisco State University, San Francisco, CA, 94132, USA\thanks{This research was conducted while the author was on unpaid leave from San Francisco State University.} \\
    \addr Base26, CA, USA
}
\theoremstyle{plain}
\newtheorem{theorem}{Theorem}[section]
\newtheorem{lemma}[theorem]{Lemma}
\theoremstyle{definition}
\newtheorem{definition}[theorem]{Definition}
\theoremstyle{remark}
\newcommand\eg{\textit{e.g.}}
\newcommand\ie{\textit{i.e.}}
\newcommand\bx{\mathbf{x}}
\newcommand\by{\mathbf{y}}
\newcommand\bw{\mathbf{w}}
\newcommand\bz{\mathbf{z}}
\newcommand\bZ{\mathbf{Z}}
\newcommand\bI{\mathbf{I}}
\newcommand\bC{\mathbf{C}}
\newcommand\bK{\mathbf{K}}
\newcommand\bY{\mathbf{Y}}
\newcommand\cO{\mathcal{O}}
\newcommand{\softgp}[0]{\text{SoftKI}}
\begin{document}

\maketitle

\begin{abstract}
We introduce \emph{Soft Kernel Interpolation} (\softgp{}), a method that combines aspects of Structured Kernel Interpolation (SKI) and variational inducing point methods, to achieve scalable Gaussian Process (GP) regression on high-dimensional datasets. \softgp{} approximates a kernel via softmax interpolation from a smaller number of interpolation points learned by optimizing a combination of the \softgp{} marginal log-likelihood (MLL), and when needed, an approximate MLL for improved numerical stability. Consequently, it can overcome the dimensionality scaling challenges that SKI faces when interpolating from a dense and static lattice while retaining the flexibility of variational methods to adapt inducing points to the dataset. We demonstrate the effectiveness of \softgp{} across various examples and show that it is competitive with other approximated GP methods when the data dimensionality is modest (around $10$). 
\end{abstract}

\section{Introduction}
\label{sec:intro}

Gaussian processes (GPs) are flexible function approximators based on Bayesian inference. However, there are scaling concerns. For a dataset comprising $n$ data points, constructing the $n \times n$ kernel (covariance) matrix required for GP inference incurs a space complexity of $\mathcal{O}(n^2)$. Naively, posterior inference using direct solvers costs $\mathcal{O}(n^3)$ time, which quickly becomes infeasible for moderate $n$. In recent years, this shortcoming has led to the widespread adoption of conjugate gradient (CG) based GPs enabling ``exact" GP inference in $\mathcal{O}(n^2)$ time~\citep{cunn08,cuta16}. Despite these developments, exact GP regression remains challenging in practice, often requiring bespoke engineering solutions~\citep{wang19,gardner2021gpytorch}.

A popular response to this scaling issue is based on \textit{variational inference methods}~\citep{titsias2009variational, hensman2013gaussian}. These approaches build a variational approximation of the posterior GP by learning the locations of $m \ll n$ \emph{inducing points}~\citep{quinonero2005unifying, snelson2005sparse}. Inducing points and their corresponding \emph{inducing variables} introduce latent variables with normal priors that form a low-rank approximation of the covariance structure in the original dataset. This approach improves the time complexity of posterior inference to $\mathcal{O}(nm^2)$ for Sparse Gaussian Process Regression (SGPR)~\citep{titsias2009variational} and $\mathcal{O}(m^3)$ for Stochastic Variational Gaussian Process Regression (SVGP)~\citep{hensman2013gaussian}, with the latter introducing additional variational parameters. 

Another successful  approach is based on \emph{Structured Kernel Interpolation} (SKI)~\citep{wilson2015kernel} and its variants such as product kernel interpolation (SKIP)~\citep{gardner2018}, Simplex-SKI~\citep{kapoor2021skiing}, and Sparse-Grid SKI~\citep{Yadav23}. SKI-based methods achieve scalability by constructing computationally tractable approximate kernels via interpolation from a pre-computed and dense rectilinear grid of \emph{interpolation points}. This structure enables fast matrix-vector multiplications (MVMs), which leads to downstream acceleration when paired with CG-based kernel inversion strategies. 
Superficially, this approach effectively responds to the aforementioned scaling issues, but unfortunately has the consequence of causing the complexity of posterior inference to become explicitly dependent on the dimensionality $d$ of the data (\eg, $\cO(n4^d + dm^d \log m)$ for a MVM in SKI and $\cO(d^2(n + m))$ for a MVM in Simplex-SKI). Moreover, the static grid used in SKI approximations does not have the flexibility of a variational method to adapt to the dataset at hand. This limitation motivates us to search for accurate and scalable GP regression algorithms that can attain the best of both approaches.

In this paper, we introduce \emph{soft kernel interpolation} (\softgp{}), which combines aspects of inducing points and SKI to enable scalable GP regression on high-dimensional datasets. Our main observation is that while SKI can support large numbers of interpolation points, it uses them in a sparse manner---only a few interpolation points contribute to the interpolated value of any single data point. Consequently, we should still be able to interpolate successfully provided that we place enough of them in good locations relative to the dataset, \ie, learn their locations as in a variational approach. This change removes the explicit dependence of the cost of the method on the data dimensionality \( d \). It also opens the door to more directly optimize with an approximate GP marginal log-likelihood (MLL) to learn the locations of interpolations points as opposed to a variational approximation that introduces latent variables with normal priors over the values observed at inducing point locations.

Our method approximates a kernel by interpolation from a softmax of $m \ll n$ learned interpolation points, hence the name \emph{soft kernel interpolation} (Section~\ref{subsec:softki}). The interpolation points are learned by optimizing a combination of the \softgp{} MLL, and when needed, an approximate MLL for improved numerical stability in single-precision floating point arithmetic (Section~\ref{subsec:meth:learn}). It is able to leverage GPU acceleration and stochastic optimization for scalability.
Since the kernel structure is dynamic, we rely on $m \ll n$ interpolation points to obtain a time complexity of $\cO(nm^2)$ and space complexity of $\cO(nm)$ for posterior inference (Section~\ref{subsec:meth:post}). We evaluate \softgp{} on a variety of datasets from the \texttt{UCI} repository~\citep{uci_repo} and demonstrate that it achieves test root mean square error (RMSE) comparable to other inducing point methods for datasets with moderate dimensionality (approximately $d=10$) (Section~\ref{subsec:exp:uci}). To further explore the effectiveness of \softgp{} in high dimensions, we apply \softgp{} to molecule datasets ($d$ in the hundreds to thousands) and show that \softgp{} is scalable and competitive in these settings as well (Section~\ref{subsec:exp:molecule}).
Lastly, we examine the numerical stability of \softgp{} (Section~\ref{subsec:exp:numerical_stability}).

\section{Background and Related Work}
\label{sec:bg}

\noindent\textbf{Notation.}
Matrices will be denoted by upper-case boldface letters (e.g.\ $\mathbf{A}\in\mathbb{R}^{n\times m}$). When specifying a matrix entry-wise generated by an underlying function we write $\mathbf{A}=[g(i,j)]_{i,j}$, meaning $\mathbf{A}_{ij}=g(i,j)$. At times, given points $x_1, \dots, x_n \in \mathbb{R}^d$, we stack them into $\bx \in \mathbb{R}^{nd}$ by  
$
\mathbf{x}^\top = (x_1^\top, \dots, x_n^\top),
$
so that entries $d(i-1)+1$ through $di$ are the components of $\bx_i$. Any function $f:\mathbb{R}^d\to\mathbb{R}$ extends column-wise via  
$
f(\mathbf{x}) = (f(\bx_1), \dots, f(\bx_n))^\top.
$

\subsection{Gaussian Processes}
\noindent Let \(k: \mathbb{R}^d \times \mathbb{R}^d \to \mathbb{R}\) be a positive semi-definite kernel function. A (centered) \emph{Gaussian process} (GP) with kernel \(k\) is a distribution over functions \(f:\mathbb{R}^d \to \mathbb{R}\) such that for any finite collection of inputs \(\{x_i\}_{i=1}^n\), the vector of function values $f(\bx)$ is jointly Gaussian $f(\bx) \sim \mathcal{N}(\mathbf{0},\,\mathbf{K}_{\bx\bx})$,
where the kernel matrix \(\mathbf{K}_{\bx\bx}\in\mathbb{R}^{n \times n}\) has entries
$
[\mathbf{K}_{\bx\bx}]_{ij} = k(x_i,\,x_j).
$
Given two collections \(\{x_i\}_{i=1}^n\) and \(\{x'_j\}_{j=1}^{n'}\), the cross-covariance matrix is denoted \(\mathbf{K}_{\bx\bx'}\in\mathbb{R}^{n \times n'}\) is defined by
$
[\mathbf{K}_{\bx\bx'}]_{ij} = k(x_i,\,x'_j).
$

To perform GP regression on the labeled dataset $\mathcal{D}:=\{ (x_i, y_i) : x_i \in \mathbb{R}^d, y_i \in \mathbb{R} \}_{i=1}^n$, we assume the data is generated as follows: 
\begin{align}
    f(\bx) & \sim \mathcal{N}(\mathbf{0}, \mathbf{K}_{\bx\bx}) \tag{GP} \\
    \by \,|\, f(\bx) & \sim \mathcal{N}(f(\bx), \beta^2\mathbf{I}) \tag{likelihood}
\end{align}
where $f$ is a function sampled from a GP and each observation $y_i$ is $f$ evaluated at $x_i$ perturbed by independent and identically distributed (i.i.d.) Gaussian noise with zero mean and variance $\beta^2$. The posterior predictive distribution has the following closed-form solution~\citep{Rasmussen}
\begin{equation}
p(f(*) \,|\, \bx, \by) = \mathcal{N}(\mathbf{K}_{*\bx}(\mathbf{K}_{\bx\bx} + \mathbf{\Lambda})^{-1}\by, \mathbf{K}_{**} - \mathbf{K}_{*\bx}(\mathbf{K}_{\bx\bx} + \mathbf{\Lambda})^{-1}\mathbf{K}_{\bx *}) 
\label{eq:gp:posterior}
\end{equation}
where $\mathbf{\Lambda} = \beta^2\mathbf{I}$. Using direct methods, the time complexity of inference is $\cO(n^3)$ which is the complexity of solving the system of linear equations in $n$ variables $(\mathbf{K}_{\bx\bx} + \mathbf{\Lambda})\mathbf{\alpha} = \by$ for $\mathbf{\alpha}$ so that the posterior mean (Equation~\ref{eq:gp:posterior}) is $\mathbf{K}_{*\bx}\mathbf{\alpha}$.

A GP's \emph{hyperparameters} $\mathbf{\theta}$ include the noise $\beta^2$ and other parameters such as those involved in the definition of a kernel such as its \emph{lengthscale} $\ell$ and \emph{output scale} $\sigma$. Thus $\mathbf{\theta} = (\beta, \ell, \sigma)$. The hyperparameters can be learned by maximizing the MLL of a GP
$$
\log p(\by \,|\, \bx; \mathbf{\theta}) = \log\mathcal{N}(\by \,|\, \mathbf{0}, \mathbf{K}_{\bx\bx}(\mathbf{\theta}) + \mathbf{\Lambda}(\mathbf{\theta}))
$$
where $\mathcal{N}(\cdot \,|\, \mathbf{\mu}, \mathbf{\Sigma})$ is notation for the probability density function (PDF) of a Gaussian distribution with mean $\mathbf{\mu}$ and covariance $\mathbf{\Sigma}$ and we have explicitly indicated the dependence of $\mathbf{K}_{\bx\bx}$ and $\mathbf{\Lambda}$ on $\mathbf{\theta}$. 

\subsection{Sparse Gaussian Process Regression}
\label{background:SGPR}

Many scalable GP methods address the high computational cost of GP inference by approximating the kernel using a Nystr\"{o}m method~\citep{WI00}. This approach involves selecting a smaller set of $m$ inducing points, $\bz \subset \bx$, to serve as representatives for the complete dataset. The original $n \times n$ kernel $\mathbf{K}_{\bx \bx}$ is then approximated as 
$$
 \bK_{\bx\bx}^\text{SGPR} = \mathbf{K}_{\bx\bz} \mathbf{K}_{\bz \bz}^{-1} \mathbf{K}_{\bz\bx}   \approx \mathbf{K}_{\bx \bx},
$$
leading to an overall inference complexity of $\mathcal{O}(nm^2)$. 

Variational inducing point methods such as SGPR~\citep{titsias2009variational} combines the Nystr\"{o}m GP kernel approximation with a variational optimization procedure so that the positions of the inducing points can be learned. The variational approximation introduces latent \emph{inducing variables} $f(\bz)$ to model the values observed at inducing points $\bz$ with joint distribution
\[
p(f(\bx), f(\bz))
= \mathcal{N}\!\left(
\begin{pmatrix}
f(\bx) \\
f(\bz)
\end{pmatrix} \,\middle|\,
\begin{pmatrix}
\mathbf{0} \\
\mathbf{0}
\end{pmatrix},
\begin{pmatrix}
\mathbf{K}_{\bx\bx} & \mathbf{K}_{\bx\bz} \\
\mathbf{K}_{\bz\bx} & \mathbf{K}_{\bz\bz}
\end{pmatrix}
\right).
\]

The inducing points are learned by maximizing the \emph{evidence lower bound} (ELBO) 
\begin{align}
    \operatorname{ELBO}(q) = \mathbb{E}_{q(\bZ)}\bigl[\log(p(\bY \,|\, \bZ)\bigr]-\operatorname{KL}(q(\bZ) \,||\, p(\bZ))
\label{eq:elbo}
\end{align}
where $\operatorname{KL}$ is the KL divergence between two probability distributions, $q$ is a computationally tractable variational distribution, $\bY$ is observed data ($\bY = \by$ for SGPR), and $\bZ$ are latent variables ($\bZ = (f(\bx), f(\bz))$ for SGPR). In the case of SGPR regression, the variational family $q(f(\bx), f(\bz)) = p(f(\bx) \,|\, f(\bz)) q(f(\bz))$ is chosen to approximate $p(f(\bx), f(\bz))$ so that Equation~\ref{eq:elbo} simplifies to
\[
\operatorname{ELBO}(q) =
\log \mathcal{N}\left(\by \mid \mathbf{0}, \bK_{\bx\bx}^\text{SGPR} + \mathbf{\Lambda}\right)
- \frac{1}{2} \operatorname{tr}\left(\mathbf{K}_{\bx\bx} - \bK_{\bx\bx}^\text{SGPR}\right).
\label{svgpobj}
\]
Since the ELBO is a lower bound on the MLL $\log p(\by \,|\, \bx; \mathbf{\theta})$, we can maximize the ELBO via gradient-based optimization as a proxy for maximizing $\log p(\by \,|\, \bx; \mathbf{\theta})$ to learn the location of the inducing points $\bz$ by treating it as a GP hyperparameter, \ie, $\mathbf{\theta} = (\beta, \ell, \sigma, \bz)$. The time complexity of computing the $\text{ELBO}$ is $\cO(nm^2)$.

The posterior predictive distribution has closed form
$$
q(f(*) \,|\, \bx, \by) = \mathcal{N}(f(*) \,|\, \mathbf{K}_{* \bz}\mathbf{C}^{-1}\mathbf{K}_{\bz\bx}\mathbf{\Lambda^{-1}} \by,
\mathbf{K}_{**} - \mathbf{K}_{*\bz}(\mathbf{K}_{\bz\bz}^{-1} - \mathbf{C^{-1}})\mathbf{K}_{\bz *})
$$
where $\mathbf{C} = \mathbf{K}_{\bz\bz} + \mathbf{K}_{\bz\bx}\mathbf{\Lambda^{-1}} \mathbf{K_{\bx\bz}}$. The time complexity of posterior inference is $\cO(nm^2 + m^3)$ since it requires solving $\mathbf{C}\mathbf{\alpha} = \mathbf{K}_{\bz\bx}\mathbf{\Lambda}^{-1}\by$ for $\mathbf{\alpha}$, which is dominated by the cost of forming $\mathbf{C}$.

\noindent\textbf{Additional variational methods.}
Building on SGPR, SVGP~\citep{hensman2013gaussian} extends the optimization process used for SGPR to use stochastic variational inference. This further reduces the computational cost of GP inference to $\mathcal{O}(m^3)$ since optimization is now over a variational distribution on   $\bz$, rather than the full posterior. Importantly SVGP is highly scalable to GPUs since its optimization strategy is amenable to minibatch optimization procedures. More recent research has introduced Variational Nearest Neighbor Gaussian Processes (VNNGP)~\citep{wu24}, which replaces the low-rank prior of SVGP with a sparse approximation of the precision matrix by retaining only correlations among each point’s $K$ nearest neighbors. By constructing a sparse Cholesky factor with at most $K+1$ nonzeros per row, VNNGP can reduce the per-iteration cost of evaluating the SVGP objective (Equation~\ref{svgpobj}) to $\mathcal{O}\bigl((n_b+m_b)K^3\bigr)$ when minibatching over $n_b$ data points and $m_b$ inducing points.

\subsection{Structured Kernel Interpolation}
SKI approaches scalable GP regression by approximating a large covariance matrix $\mathbf{K_{\bx\bx}}$ by interpolating from cleverly chosen interpolation points. In particular, SKI makes the approximation 
$$
\mathbf{K}_{\bx\bx'}^{\text{SKI}}
= \mathbf{W}_{\bx\bz}\,\mathbf{K}_{\bz\bz}\,\mathbf{W}_{\bz\bx}
\approx \mathbf{K}_{\bx\bx}.
$$
where $\mathbf{W}_{\bx\bz}$ is a sparse matrix of interpolation weights, $\mathbf{W}_{\bz\bx} = \mathbf{W}_{\bx\bz}^\top$, and $\bz$ are interpolation points. The interpolation points can be interpreted as quasi-inducing points since
$$
\mathbf{K}_{\bx\bx'}^{\text{SKI}}
= \left(\mathbf{W}_{\bx\bz}\mathbf{K}_{\bz\bz}\right) \mathbf{K}_{\bz\bz}^{-1} \left(\mathbf{K}_{\bz\bz}\mathbf{W}_{\bz\bx}\right)
= \widehat{\mathbf{K}}_{\bx\bz}\,\mathbf{K}_{\bz\bz}^{-1}\,\widehat{\mathbf{K}}_{\bz\bx}.
$$

where
$\mathbf{\widehat{K}}_{\bx\bz} = \mathbf{W}_{\bx\bz}\mathbf{K}_{\bz\bz}$. However, interpolation points are \textit{not} associated with inducing variables since they are used for kernel interpolation rather than defining a variational approximation. The posterior predictive distribution is then the standard GP posterior predictive
\begin{equation}
p(f(*) \,|\, \bx, \by) \approx \mathcal{N}(\mathbf{K}_{*\bx}(\mathbf{K}_{\bx\bx}^{\text{SKI}} + \mathbf{\Lambda})^{-1}\by, \mathbf{K}_{**} - \mathbf{K}_{*\bx}(\mathbf{K}_{\bx\bx}^{\text{SKI}} + \mathbf{\Lambda})^{-1}\mathbf{K}_{\bx *})
\label{eq:skipred}
\end{equation}
where $\mathbf{K}_{\bx\bx}$ is replaced with $\mathbf{K}_{\bx\bx}^{\text{SKI}}$. When Equation~\ref{eq:skipred} is solved via linear CG with structured interpolation (\eg, cubic), the resulting method is called KISS-GP~\citep{wilson2015kernel}.
Since the time complexity of CG depends on access to fast MVMs, KISS-GP makes two design choices that are compatible with CG. First, KISS-GP uses cubic interpolation weights~\citep{1163711} so that $\mathbf{W}_{\bx\bz}$ has only four non-zero entries in each row, i.e., $\mathbf{W}_{\bx\bz}$ is sparse. Second, KISS-GP chooses the interpolation points $\bz$ to be on a fixed rectilinear grid  in $\mathbb{R}^d$ which induces multilevel Toeplitz structure in $\mathbf{K}_{\bz\bz}$ if $k(x,y)$ is a stationary kernel. 

Putting the two together, the resulting kernel $\mathbf{K}_{\bx\bx}^{\text{SKI}}$ is a highly structured structured matrix which enables fast MVMs. If we wish to have \(m\) distinct interpolation points for each dimension, the grid will contain $m^d$ interpolation points leading to a total complexity of GP inference of  $\mathcal{O}(n 4^d + dm^d\log(m) )$ since there are $\cO(4^d)$ nonzero entries per row of $\mathbf{W}_{\bx\bz}$ and a MVM with a Toeplitz structured kernel like $\mathbf{K}_{\bz\bz}$ can be done in $\cO(|\bz| \log (|\bz|))$ time via a Fast Fourier Transform (FFT)~\citep{wilson2014covariance}. Thus, for low-dimensional problems, SKI can provide significant speedups compared to vanilla GP inference and supports a larger number of interpolation points compared to SGPR. 
\begin{table}[t]
\centering
\caption{Cost of a single MVM with the approximate covariance for various GP approximations. For listed methods based on SKI we take $m$ points in each dimension. The variable $r$ represents the rank used in the SKIP approximation (typically $r\approx 100$), and the variable $\ell$ is the \textit{grid resolution} taken in Sparse-Grid GP (typically $\ell\approx 1-5$).}
\label{tab:gp-mvm-complexities}
\begin{tabular}{lcc}
\toprule
\textbf{Method} & \textbf{Time Complexity of one MVM} \\
\midrule
KISS-GP (cubic) 
  & \(\displaystyle \mathcal O\bigl(n4^d + dm^d\log(m)\bigr)\) \\

SKIP
 
& \(\mathcal{O}\bigl(d(rn 
+ m\log m \bigr))\)
 \\

Simplex-GP 
  & \(\displaystyle \mathcal O\bigl(d^2(n+m))\) \\

Sparse-Grid GP (simplicial)
   & \(\displaystyle \mathcal O\bigl(nd^{2+\ell}
  +\ell^d 2^\ell)\) \\
SGPR 
  & \(\displaystyle \mathcal O\bigl(nm^2+m^3\bigr)\) \\
SoftKI 
  & \(\displaystyle \mathcal O\bigl(nm^2+m^3\bigr)\) \\
\bottomrule
\label{table:SKI}
\end{tabular}

\end{table}

\subsection{Other Variants of Structured Kernel Interpolation.}
\label{sec:related}

While SKI can provide significant speedups over vanilla GP inference, its scaling in $d$ restricts its usage to low-dimensional settings where \(4^d < n\). To address this, \citet{gardner2018} introduced SKIP, which optimizes SKI by expressing a \(d\)-dimensional kernel as a product of \(d\) one-dimensional kernels. This reduces the MVM cost with $\mathbf{K}_{\bz\bz}$ from \(\mathcal{O}(d m^d \log(m))\) to \(\mathcal{O}(d m \log (m))\), using only \(m\) grid points per component kernel instead of \(m^d\). However, SKIP is limited to dimensions roughly in the range of $d=10 \text{--} 30$ for large datasets, as it requires substantial memory and may suffer from low-rank  approximation errors.

Recent work on improving the scaling properties of SKI has been focused on efficient incorporation of simplicial interpolation, which when implemented carefully in the case of Simplex-GP~\citep{kapoor2021skiing} and Sparse-Grid GP~\citep{Yadav23} can reduce the cost of a single MVM with $\mathbf{W}_{\bx\bz}$ down to $\mathcal{O}\left(n d^2\right)$. These methods differ in the grid construction used for interpolation with Simplex-SKI opting for a permutohedral lattice \citep{Adam10}, while Sparse-Grid GP uses sparse grids~\citep{bungartz2004sparse} and a custom recursive MVM algorithm. These modifications vastly improve KISS-GP leading to $\mathbf{K}_{\bz\bz}$ MVM costs of $\mathcal{O}(d^2(n+m))$ for Simplex-GP and $\mathcal{O}\left(nd^{2+\ell} +\ell^d 2^\ell\right)$ for Sparse-Grid GP where $\ell$ is a small constant. While both of these methods greatly improve the scaling of SKI, they still feature a problematic dependency on $d$ limiting their extension to arbitrarily high dimensional datasets. The complexity of SKI and related variants are tabulated in~\Cref{table:SKI}.

\section{Soft Kernel Interpolation}
\label{sec:meth}

In this section, we introduce \softgp{} (Algorithm~\ref{alg:softki}). \softgp{} takes the same starting point as SKI, namely that $\mathbf{K}_{\bx\bx}^{\text{SKI}} =  \mathbf{W}_{\bx\bz} \mathbf{K}_{\bz\bz} \mathbf{W}_{\bz\bx}$ can be used as an approximation of $\mathbf{K}_{\bx\bx}$. However, we will deviate in that we will abandon the structure given by a lattice and opt to learn the locations of the interpolation points $\bz$ instead. This raises several issues. First, we revisit the choice of interpolation scheme since we no longer assume a static structure (Section~\ref{subsec:softki}). Second, we require an efficient procedure for learning the interpolation points (Section~\ref{subsec:meth:learn}). Third, we require an alternative route to recover a scalable GP since our approach removes structure in the kernel that was used in SKI for efficient inference (Section~\ref{subsec:meth:post}).

\subsection{Soft Interpolation Strategies}
\label{subsec:softki}

Whereas a cubic (or higher-order) interpolation scheme is natural in SKI when using a static lattice structure, our situation is different since the locations of the interpolation points are now dynamic. In particular, we would ideally want our interpolation scheme to have the flexibility to adjust the contributions of interpolation points used to interpolate a kernel value in a data-dependent manner. To accomplish this, we propose 
\emph{softmax interpolation weights}.

\label{subsec:meth:soft}
\begin{definition}[Softmax interpolation weights]
For a finite collection of inputs $\{x_i\}_{i=1}^n$, $x_i \in \mathbb{R}^d$, and a set of interpolation points $\{z_j\}_{j=1}^m$, $z_j \in \mathbb{R}^d$, define the \textit{softmax interpolation weight matrix } as
\begin{equation}
      \mathbf{\Sigma}_{\bx\bz} = \left[\frac{\exp \left(-\left\|x_i-z_j\right\|\right)}{\sum_{k=1}^m \exp \left(-\left\|x_i-z_k\right\|\right)}\right]_{i j}\label{softki_kernel}.   
\end{equation}
\end{definition}
With this definition in hand we can now define the $\softgp{}$ kernel approximation
$$
    \mathbf{K}_{\bx\bx}^{\text{SoftKI}} = \mathbf{\Sigma}_{\bx\bz} \bK_{\bz\bz} \mathbf{\Sigma}_{\bz\bx} \approx \mathbf{K}_{\bx\bx}.
$$
Since a softmax can be interpreted as a probability distribution, we can view \softgp{} as interpolating from a probabilistic mixture of interpolation points. Each input point $x_i$ is interpolated from all points, with an exponential weighting favoring closer interpolation points (See Figure~\ref{fig:interpolation}). Each row $\mathbf{\Sigma}_{x_i\bz}$ represents the softmax interpolation weights between a single point  $x_i$  and all $m$ interpolation points. Note that $\mathbf{\Sigma}_{\bx\bz}$ is not strictly sparse since softmax interpolation continuously assigns weights over each interpolation point for each $x_i$. However, as $d$ increases, the weights for distant points become negligible. This leads to an effective sparsity, with each $x_i$  predominantly influenced only by its nearest interpolation points during the reconstruction of $\mathbf{K}_{\bx\bx}$.

To add more flexibility to the model, the softmax interpolation scheme can be extended to contain a learnable temperature parameter $T$ as below
$$
\mathbf{\Sigma}_{\bx\bz} = \left[ \frac{\exp \left(-\left\|x_i/T - z_j \right\|\right)}{\sum_{k=1}^m \exp \left(-\left\|x_i/T - z_k\right\|\right)} \right]_{ij} \,.
$$
This additional hyperparameter is needed because unlike the SGPR kernel $\bK_{\bx\bx}^{\text{SGPR}} = \bK_{\bx\bz}\bK_{\bz\bz}^{-1}\bK_{\bz\bx}$ where the lengthscale influences each term, the lengthscale only affects $\bK_{\bz\bz}$ in the \softgp{} kernel. The temperature acts as lengthscale-like hyperparameter on the interpolation scheme that controls the distance between the datapoints $\bx$ and the interpolation points $\bz$. When $T = 1$, we obtain the original scheme give in  Equation~\ref{softki_kernel}. Analogous to how automatic relevance detection (ARD)~\citep{mackay1994bayesian} can be used to set lengthscales for different dimensions, we can also use a different temperature per dimension. When learning both temperatures and lengthscales, we cap the lengthscale range for more stable hyperparameter optimization since they have a push-and-pull effect.

\begin{figure}[t]
\centering
\includegraphics[scale=0.1]{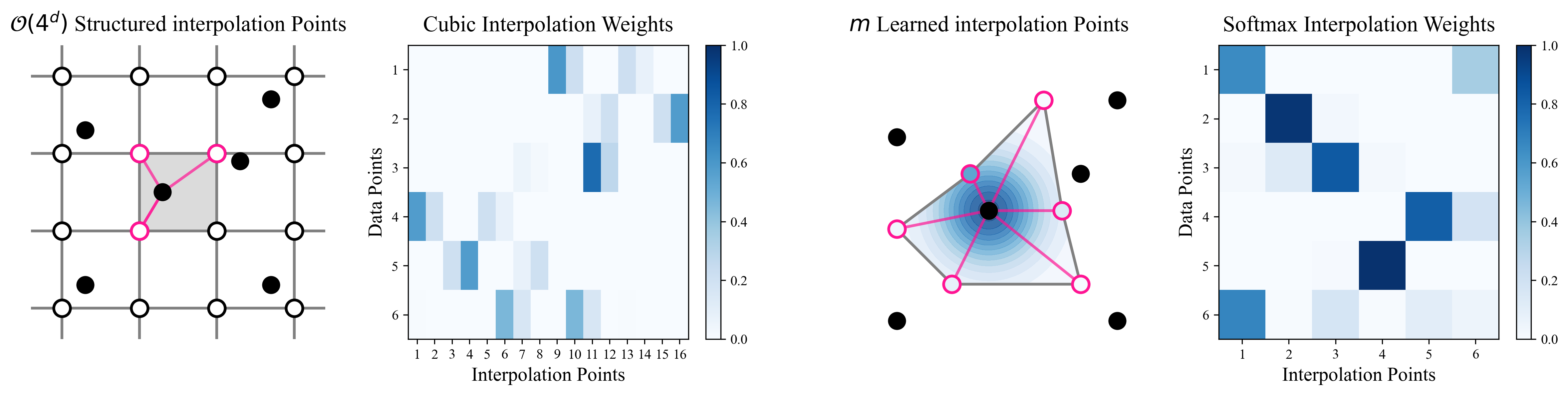}
\caption{\textbf{Structured \& Soft Kernel Interpolation:} Comparison of interpolation procedure for local cubic interpolation during KISS-GP (\textit{left}) and softmax interpolation (\textit{right)} during \softgp{}. Here white points $\circ$ indicate interpolation points $z_i\in \bz$ and black points $\bullet$ are data points $x_i\in \bx$. In these diagrams magenta line segments indicate which interpolation points are being used for a given data point during the interpolation procedure of each method. In KISS-GP, a small local subset of grid points is used, whereas in \softgp{} all interpolation points are included, weighted by a softmax distribution centered at a given data point.}
    \label{fig:interpolation}
\end{figure}
\subsection{Learning Interpolation Points}
\label{subsec:meth:learn}

Similar to a SGPR, we can treat the interpolation points $\bz$ as hyperparameters of a GP and learn them by optimizing an appropriate objective. We propose optimizing the MLL $\log p(\by \,|\, \bx; \mathbf{\theta})$ with a method based on gradient descent for a \softgp{} which for $\mathbf{D}_\theta = \mathbf{K}_{\bx\bx}^{\text{SoftKI}}(\theta) + \mathbf{\Lambda}(\theta)$ has closed the form solution
\begin{align}
     \log p(\mathbf{y} \mid \bx; \mathbf{\theta}) 
    = -\frac{1}{2}\Big[\mathbf{y}^{\top} \mathbf{D}_\theta^{-1} \mathbf{y} 
    + \log\det (\mathbf{D}_\theta) +n\log(2\pi)\Big], \label{mll}
\end{align}
and derivative
\begin{align}
    \frac{\partial \log p(\mathbf{y} \mid \bx; \mathbf{\theta})}{\partial \mathbf{\theta}}
    &= -\frac{1}{2}\Big[\mathbf{y}^{\top} \mathbf{D}_\theta^{-1} 
    \frac{\partial\mathbf{D}_\theta}{\partial \mathbf{\theta}} \mathbf{D}_\theta^{-1}\mathbf{y}
    - \operatorname{tr}\Big(\mathbf{D}_\theta^{-1}
    \frac{\partial\mathbf{D}_\theta}{\partial \mathbf{\theta}}\Big)\Big]. \label{eq:grad_mll}
\end{align}
Each evaluation of $\log p(\by \,|\, \bx; \mathbf{\theta})$ has time complexity $\cO(nm^2)$ and space complexity $\cO(nm)$ where $m$ is the number of interpolation points. 

\noindent\textbf{Hutchinson's pseudoloss.}\label{subsec:Hutch}
Occasionally, we observe that the MLL is numerically unstable in single-precision floating point arithmetic (see Section~\ref{subsec:exp:numerical_stability} for further details). To help stabilize the MLL in these situations, we use an approximate MLL based on work in approximate GP theory~\citep{gardner2018,maddox21,wenger23} that identify efficiently computable estimates of the log determinant (Equation~\ref{mll}) and trace term (Equation~\ref{eq:grad_mll}) in the GP MLL. In more detail, these methods combine stochastic trace estimation via the Hutchinson's trace estimator~\citep{Girard89,hutchinson1989} with blocked CG so that the overall cost remains quadratic in the size of the kernel. \citet{maddox2022low} term this approximate MLL the \emph{Hutchinson's pseudoloss} and show that it remains stable under low-precision conditions.

\begin{figure}
    \centering
    \includegraphics[width=1\linewidth]{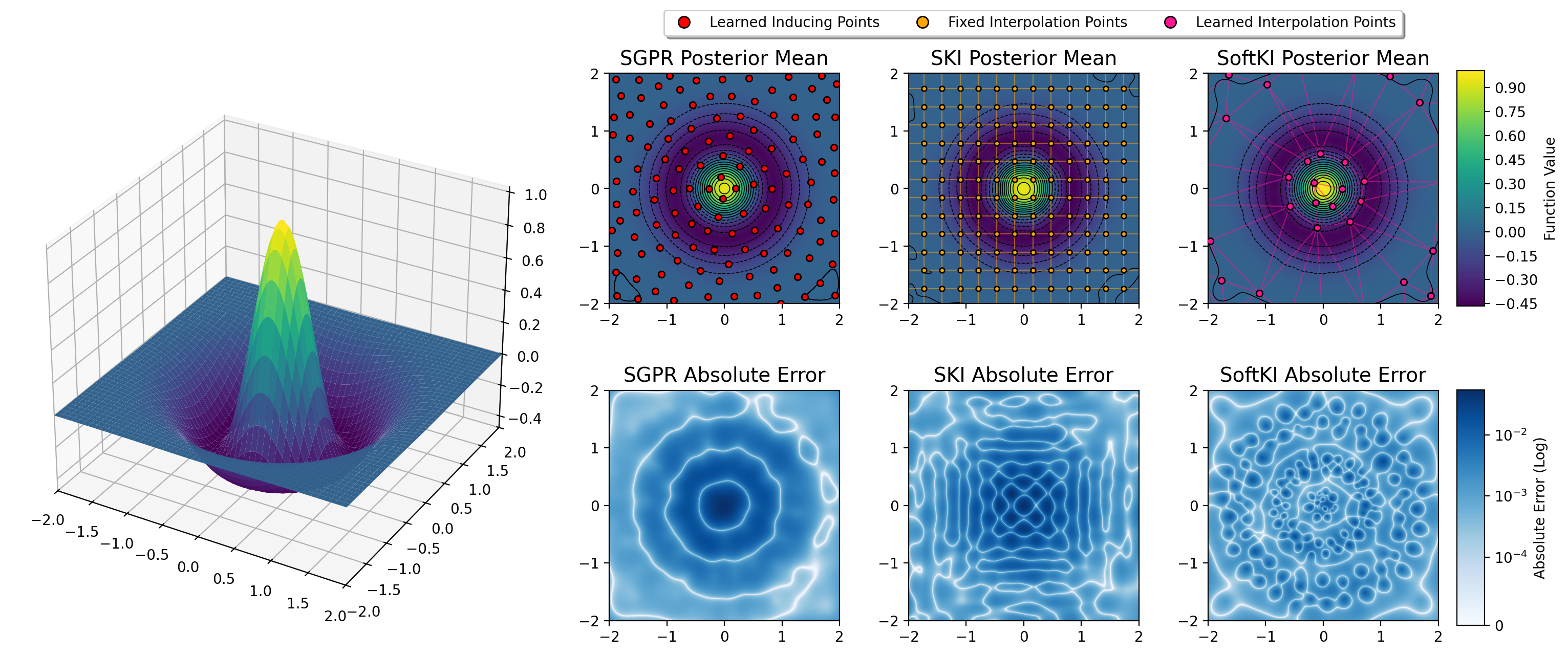}
  \caption{\textbf{Inducing vs.\ Interpolation Points:} Comparison of inducing points learned by SGPR, static rectilinear lattice points in SKI, and adaptive interpolation points learned by SoftKI on the Ricker wavelet ~\citep{ricker53} after 100 epochs of hyperparameter optimization. Contour plots of the posterior mean are paired with each method’s absolute‐error. Each method achieves comparable accuracy: SGPR’s inducing points follow the sample distribution, and SoftKI’s interpolation points adapt to the true function’s local geometry. Additional experiment details can be found in Appendix~\ref{app:exp:ricker}}
\label{fig:inducing_vs_interpolation}
\end{figure}

\begin{definition}[Hutchinson Pseudoloss]
Let \(\mathbf{u}_0,\mathbf{u}_1,\dots,\mathbf{u}_l\) be the solutions obtained by using block conjugate gradients for the system
$
\mathbf{D}_{\mathbf{\theta}}\bigl(\mathbf{u}_0 \;\mathbf{u}_1\dots\mathbf{u}_l\bigr)
=
\bigl(\mathbf{y}\;\bw_1\dots\bw_l\bigr),
$
where each \(\bw_j\in \mathbb{R}^n\) is a Gaussian random vector normalized to unit length. The Hutchinson pseudoloss approximation of the GP marginal log likelihood in Equation~\ref{mll} is given as 
\begin{align}\label{eq:hutch_mll}
    \log \tilde{p}(\by \,|\, \bx; \mathbf{\theta}) = -\frac{1}{2} \Big[\mathbf{u}_0^\top \mathbf{D}_{\mathbf{\theta}} \mathbf{u}_0+\frac{1}{l} \sum_{j=1}^l \mathbf{u}_j^\top (\mathbf{D}_{\mathbf{\theta}} \bw_j) \Big] 
\end{align}
with derivative
\begin{align}\label{eq:hutch_mll_grad}
 \frac{\partial \log \tilde{p}(\by \,|\, \bx; \mathbf{\theta})}{\partial \mathbf{\theta}} = 
 -\frac{1}{2} \Big[\mathbf{u}_0^\top
    \frac{\partial\mathbf{D}_\theta}{\partial \mathbf{\theta}} \mathbf{u}_0
    -\frac{1}{l} \sum_{j=1}^l \mathbf{u}_j^\top \frac{\partial\mathbf{D}_\theta}{\partial \mathbf{\theta}}\bw_j\Big]
\end{align}
\end{definition}
By computing Equation~\ref{eq:hutch_mll} as described above as opposed to explicitly computing a stochastic Lanczos quadrature approximation~\citep{UCS17}, the gradient can also be efficiently computed using back propagation. Note that the gradient of Hutchinson’s pseudoloss approximates the gradient of the GP MLL, where the trace term is estimated using a stochastic trace estimator. The accuracy of this approximation depends on the number of probe vectors $\ell$. Consequently, gradient-based optimization of the GP MLL can be approximated by optimizing Hutchinson’s pseudoloss instead.

\noindent\textbf{Stabilized MLL.}
We combine \softgp{}'s MLL with Hutchinson's pseudoloss to arrive at the objective
$$
    \log \hat{p}(\by \mid \bx; \mathbf{\theta}) = \begin{cases}
        \log p(\by \mid \bx; \mathbf{\theta}) & \mbox{when stable} \\
        \log \tilde{p}(\by \mid \bx; \mathbf{\theta})  & \mbox{otherwise}
    \end{cases}
$$
for optimizing \softgp{}'s hyperparameters. In practice, this means that we default to using \softgp{}'s MLL and fallback to Hutchinson's pseudoloss when numeric instability is encountered.
This enables SoftKI to accurately recover approximations of the MLL's gradient during gradient-based optimization, even when the current
positioning of interpolation points would otherwise make the direct computation of the MLL's gradient unstable, while relying on the exact MLL as much as possible.

\noindent\textbf{Stochastic optimization.}
Instead of computing $\nabla \log \hat{p}(\by \,|\, \bx; \mathbf{\theta})$ on the entire dataset $\bx$, we can compute $\nabla \log \hat{p}(\by \,|\, \bx_b; \mathbf{\theta})$ on a minibatch of data $\bx_b$ of size $b$ to perform stochastic optimization (see line 7 of Algorithm~\ref{alg:softki}). The stochastic estimate provides an unbiased estimator of the gradient. In this way, \softgp{} can leverage powerful stochastic optimization techniques used to train neural networks such as Adam~\citep{kingma2014adam}, and hardware acceleration such as graphics processing units (GPUs). The time complexity of evaluating $\nabla \log \hat{p}(\by \,|\, \bx_b; \mathbf{\theta})$ is cheap, costing $\cO(b^2 m)$ operations with space complexity $\cO(bm)$. The lower space complexity makes it easier to use GPUs since they have more memory constraints compared to CPUs. We emphasize that we are performing stochastic gradient descent and not stochastic variational inference as in SVGP~\citep{hensman2013gaussian}. In particular, we do not define a distribution on the interpolation points $\bz$ nor make a variational approximation.

\begin{algorithm}[t]
\caption{\softgp{} Regression. The procedure \texttt{kmeans} performs k-means clustering, \texttt{batch} splits the dataset into batches, and \texttt{softmax\_interpolation} produces a softmax interpolation matrix (see Section~\ref{subsec:meth:soft}).}
\label{alg:softki}
\begin{algorithmic}[1]
    \Require SoftKI GP hyperparameters \( \mathbf{\theta} = (\beta, \ell, \sigma, \mathbf{z}\in \mathbb{R}^{(m \times d)}, T) \), kernel function \( k(x,y) \).
    \Require Dataset \( \mathcal{D} = \{\mathbf{x}\in \mathbb{R}^{(n \times d)}, \mathbf{y}\in \mathbb{R}^{(n \times 1)}\} \).
    \Require Optimization hyperparameters: batch size \( b \) and learning rate \(\eta\).
    \Ensure Learned SoftKI coefficients \(\alpha\).
    \Statex \textcolor{gray}{\hrulefill}\Comment{\textcolor{gray}{Model Training}}
    \State \( \mathbf{z} \gets \texttt{kmeans}(\mathbf{x}, m) \)
    \For{\(i = 1\) \textbf{to} epochs}
        \For{\((\mathbf{x}_b, \mathbf{y}_b)\) in \(\texttt{batch}(\mathcal{D}, b)\)}
            \State \(\mathbf{\Sigma}_{\mathbf{x}_b\mathbf{z}} \gets \texttt{softmax\_interpolation}(\mathbf{x}_b,\mathbf{z})\)
            \State \( \mathbf{K}_{\mathbf{z}\mathbf{z}} \gets [\, k(z_i, z_j) \,]_{ij}\)
            \State \( \mathbf{K}_{\mathbf{x}_b \mathbf{x}_b}^{\text{SoftKI}} \gets \mathbf{\Sigma}_{\mathbf{x}_b\mathbf{z}} \mathbf{K}_{\mathbf{z}\mathbf{z}} \mathbf{\Sigma}_{\mathbf{x}_b\mathbf{z}}^\top \)
            \State \( \mathbf{\theta} \gets \mathbf{\theta} + \eta\, \nabla_{\mathbf{\theta}} \log \widehat{p}\Big(\mathbf{y}_b\,\big|\,\mathbf{x}_b;\, \mathbf{K}_{\mathbf{x}_b \mathbf{x}_b}^{\text{SoftKI}} + \mathbf{\Lambda}\Big) \)
            \Comment{\textcolor{gray}{Stabilized MLL (Section~\ref{subsec:Hutch})} }
        \EndFor
    \EndFor
    \Statex \textcolor{gray}{\hrulefill}\Comment{\textcolor{gray}{Stabilized Inference (Section~\ref{subsec:meth:post}}}
    \State \( \mathbf{\Sigma}_{\mathbf{x}\mathbf{z}} \gets \texttt{softmax\_interpolation}(\mathbf{x},\mathbf{z}) \)
    \State \( \mathbf{U}_{\mathbf{z}\mathbf{z}}^\top \mathbf{U}_{\mathbf{z}\mathbf{z}} \gets \texttt{cholesky}(\mathbf{K}_{\mathbf{z}\mathbf{z}}) \)
    \State \( \mathbf{Q}, \mathbf{R} \gets \texttt{QR} \!\left(\begin{pmatrix} \mathbf{\Lambda}^{-1/2}\,\mathbf{\Sigma}_{\mathbf{x}\mathbf{z}} \mathbf{K}_{\mathbf{z}\mathbf{z}} \\[1ex] \mathbf{U}_{\mathbf{z}\mathbf{z}} \end{pmatrix}\right) \)
    \State \( \alpha \gets \mathbf{R}^{-1}\, \mathbf{Q}^\top \!\begin{pmatrix} \mathbf{\Lambda}^{-1/2}\, \mathbf{y} \\[1ex] 0 \end{pmatrix} \)
    \State \Return \(\alpha\)
\end{algorithmic}
\end{algorithm}
    
\subsection{Posterior Inference}
\label{subsec:meth:post}

Because $\bK^\text{SoftKI}_{\bx\bx}$ is low-rank, we can use the matrix inversion lemma to rewrite the posterior predictive of \softgp{} as
$$
p(f(*) \,|\, \bx, \by) = \mathcal{N}(\mathbf{\widehat{K}_{*\bz}}\mathbf{\widehat{C}}^{-1}\mathbf{\widehat{K}_{\bz\bx}}\mathbf{\Lambda}^{-1}\by, \mathbf{K}_{* *}^\text{SoftKI} - \mathbf{K}_{* \bx}^\text{SoftKI}
 (\mathbf{\Lambda}^{-1} - \mathbf{\Lambda}^{-1}\mathbf{\widehat{K}_{\bx\bz}}\mathbf{\widehat{C}}^{-1}\mathbf{\widehat{K}_{\bz\bx}}\mathbf{\Lambda}^{-1}) \mathbf{K}_{\bx *}^\text{SoftKI})
$$
where
$\mathbf{\widehat{C}} = \mathbf{K_{\bz\bz}} + \mathbf{\widehat{K}_{\bz\bx}}\mathbf{\Lambda}^{-1}\mathbf{\widehat{K}_{\bx\bz}}$ (see Appendix~\ref{sec:softki-deriv}). 
To perform posterior mean inference, we solve 
\begin{align}
\mathbf{\widehat{C}}\alpha = \mathbf{\widehat{K}_{\bz\bx}}\mathbf{\Lambda}^{-1}\by \label{eq:24}
\end{align}
for weights $\alpha$. Note that this is the posterior mean of a SGPR with $\mathbf{C}$ replaced with $\widehat{\mathbf{C}}$ and $\mathbf{K_{\bx\bz}}$ replaced with $\mathbf{\widehat{K}_{\bx\bz}}$. Moreover, note that $\mathbf{\widehat{C}}$ is simply $\mathbf{C}$ with $\mathbf{K_{\bx\bz}}$ replaced with $\mathbf{\widehat{K}_{\bx\bz}}$. Since $\mathbf{\widehat{C}}$ is a $m \times m$ matrix, the solution of the system of linear equations has time complexity $\cO(m^3)$. The formation of $\mathbf{\widehat{C}}$ requires the multiplication of a $m \times n$ matrix with a $n \times m$ matrix which has time complexity $\cO(nm^2)$. Thus the complexity of \softgp{} posterior mean inference is $\cO(nm^2)$ since it is dominated by the formation of $\mathbf{\widehat{C}}$. We refer the reader to the supplementary material for discussion of the posterior covariance (Appendix~\ref{sec:softki-deriv}).

\noindent\textbf{Solving with QR.}
Unfortunately, solving Equation~\ref{eq:24} for $\alpha$ can be numerically unstable. \citet{JMLR:v10:foster09a} introduce a stable QR solver approach for a Subset of Regressors (SoR) GP~\citep{NIPS2000_3214a6d8} which we adapt to a \softgp{}. Since the form of the \softgp{} and SGPR posterior are similar, this QR solver approach can also be adapted to improve the performance of SGPR (see Appendix~\ref{app:alternative_post}).

Define the block matrix 
$$
\mathbf{A} = \begin{pmatrix}
    \mathbf{\Lambda}^{-1/2} \mathbf{\widehat{K}}_{\bx\bz} \\
    \mathbf{U}_{\bz\bz}
\end{pmatrix}
$$
where $\mathbf{U}_{\bz\bz}^\top\mathbf{U}_{\bz\bz} = \mathbf{K}_{\bz\bz}$ is the upper triangular Cholesky decomposition of $\mathbf{K}_{\bz\bz}$ so that
$$
\mathbf{A}^\top \mathbf{A}
= \widehat{\mathbf{K}}_{\bz\bx}  \mathbf{\Lambda}^{-1}  \widehat{\mathbf{K}}_{\bx\bz}
+ \mathbf{K}_{\bz\bz}
= \widehat{\mathbf{C}}.
$$
Let $\mathbf{Q}\mathbf{R} = \mathbf{A}$ be the QR decomposition of $\mathbf{A}$ so that $\mathbf{Q}$ is a $(n + m) \times m$ orthonormal matrix and $\mathbf{R}$ is $m \times m$ right triangular matrix. Then 
\begin{align}
    \widehat{\mathbf{C}} \alpha &= \widehat{\bK}_{\bz\bx} \mathbf{\Lambda}^{-1}\by \tag{$\iff$} \\
    (\mathbf{Q} \mathbf{R})^\top (\mathbf{Q} \mathbf{R})\alpha & = (\mathbf{Q} \mathbf{R})^\top \begin{pmatrix}
    \mathbf{\Lambda}^{-1/2}\by \\
    0
    \end{pmatrix} \tag{$\iff$} \\
    \mathbf{R} \alpha & = \mathbf{Q}^\top \begin{pmatrix}
    \mathbf{\Lambda}^{-1/2}\by \\
    0
    \end{pmatrix} \label{eq:qr_solve} \,.
\end{align}
We thus solve Equation~\ref{eq:qr_solve} for $\alpha$ via a triangular solve in $\mathcal{O}(m^2)$ time. For additional experiments demonstrating the inability of other linear solvers to offer comparable accuracy to the QR-stabilized solve detailed in this section, see Appendix~\ref{app:alternative_post}.

\section{Experiments}
\label{sec:exp}

We compare the performance of \softgp{}s against popular scalable GPs on selected data sets from the \texttt{UCI} data set repository~\citep{uci_repo}, a common GP benchmark (Section~\ref{subsec:exp:uci}). Next, we test \softgp{}s on high-dimensional molecule data sets from the domain of computational chemistry (Section~\ref{subsec:exp:molecule}). Finally, we explore the numerical stability of \softgp{} (Section \ref{subsec:exp:numerical_stability}).Our implementation can be found at: \url{https://github.com/danehuang/softki_gp_kit}.
\begin{table}[t]
    \centering
    \begin{minipage}{\linewidth}
        \centering
        \begin{tabular}{lrrllll}
        \toprule
        Dataset & $n$ & $d$ & \textcolor{exactgray}{Exact GP} & SoftKI $m=512$ & SGPR $m=512$& SVGP $m=1024$\\
        \midrule
        \texttt{3droad} & 391386 & 3 & \textcolor{exactgray}{$--$} & $0.583 \pm 0.010$ & $--$ & $\mathbf{0.389} \pm 0.001$ \\
        \texttt{Kin40k} & 36000 & 8 & \textcolor{exactgray}{$0.039 \pm 0.001$} & $0.169 \pm 0.008$ & $0.177 \pm 0.004$ & $\mathbf{0.165} \pm 0.005$ \\
        \texttt{Protein} & 41157 & 9 & \textcolor{exactgray}{$0.044 \pm 0.000$} & $\mathbf{0.596} \pm 0.016$ & $0.602 \pm 0.015$ & $0.607 \pm 0.012$ \\
        \texttt{Houseelectric} & 1844352 & 11 & \textcolor{exactgray}{$--$} & $\mathbf{0.047 \pm 0.001}$ & $--$ & $\mathbf{0.047} \pm 0.000$ \\
        \texttt{Bike} & 15641 & 17 & \textcolor{exactgray}{$0.040 \pm 0.003$} & $\mathbf{0.062} \pm 0.001$ & $0.108 \pm 0.004$ & $0.084 \pm 0.006$ \\
        \texttt{Elevators} & 14939 & 18 & \textcolor{exactgray}{$0.108 \pm 0.014$} & $\mathbf{0.360} \pm 0.006$ & $0.395 \pm 0.005$ & $0.384 \pm 0.008$ \\
        \texttt{Keggdirected} & 43944 & 20 & \textcolor{exactgray}{$0.070 \pm 0.003$} & $\mathbf{0.080} \pm 0.005$ & $0.099 \pm 0.004$ & $0.082 \pm 0.004$ \\
        \texttt{Pol} & 13500 & 26 & \textcolor{exactgray}{$0.035 \pm 0.001$} & $\mathbf{0.075} \pm 0.002$ & $0.127 \pm 0.002$ & $0.122 \pm 0.002$ \\
        \texttt{Keggundirected} & 57247 & 27 & \textcolor{exactgray}{$0.110 \pm 0.004$} & $\mathbf{0.115} \pm 0.004$ & $0.128 \pm 0.009$ & $0.121 \pm 0.007$ \\
        \texttt{Buzz} & 524925 & 77 & \textcolor{exactgray}{$--$} & $\mathbf{0.240} \pm 0.001$ & $--$ & $0.250 \pm 0.002$ \\
        \texttt{Song} & 270000 & 90 & \textcolor{exactgray}{$--$} & $\mathbf{0.777} \pm 0.004$ & $--$ & $0.794 \pm 0.006$ \\
        \texttt{Slice} & 48150 & 385 & \textcolor{exactgray}{$--$} & $\mathbf{0.022} \pm 0.006$ & $0.520 \pm 0.001$ & $0.082 \pm 0.001$ \\
        \bottomrule
        \end{tabular}
        \caption{Test RMSE on \texttt{UCI} datasets. Best results obtained by an approximate GP are bolded. Exact GP included for reference. Entries marked "$--$" indicate datasets that triggered out of memory errors.}
        \label{tab:exp:uci_rmse}
    \end{minipage}

    \vspace{1em}

    \begin{minipage}{\linewidth}
        \centering
        \begin{tabular}{lrrllll}
        \toprule
        Dataset & $n$ & $d$ & \textcolor{exactgray}{Exact GP} & SoftKI $m=512$ & SGPR $m=512$& SVGP $m=1024$\\
        \midrule
        \texttt{3droad} & 391386 & 3 & \textcolor{exactgray}{$--$} & $0.953 \pm 0.041$ & $--$ & $\mathbf{0.597} \pm 0.008$ \\
        \texttt{Kin40k} & 36000 & 8 & \textcolor{exactgray}{$-1.636 \pm 0.006$} & $0.055 \pm 0.043$ & $\mathbf{-0.104} \pm 0.007$ & $-0.082 \pm 0.007$ \\
        \texttt{Protein} & 41157 & 9 & \textcolor{exactgray}{$-0.904 \pm 0.008$} & $\mathbf{0.905} \pm 0.026$ & $1.031 \pm 0.009$ & $1.047 \pm 0.009$ \\
        \texttt{Houseelectric} & 1844352 & 11 & \textcolor{exactgray}{$--$} & $1.274 \pm 1.425$ & $--$ & $\mathbf{-1.492} \pm 0.007$ \\
        \texttt{Bike} & 15641 & 17 & \textcolor{exactgray}{$-1.662 \pm 0.065$} & $-0.379 \pm 0.016$ & $-0.296 \pm 0.011$ & $\mathbf{-0.680} \pm 0.017$ \\
        \texttt{Elevators} & 14939 & 18 & \textcolor{exactgray}{$-0.590 \pm 0.075$} & $\mathbf{0.406} \pm 0.021$ & $0.587 \pm 0.006$ & $0.586 \pm 0.009$ \\
        \texttt{Keggdirected} & 43944 & 20 & \textcolor{exactgray}{\texttt{nan}} & $0.421 \pm 0.063$ & $-0.896 \pm 0.041$ & $\mathbf{-0.934} \pm 0.017$ \\
        \texttt{Pol} & 13500 & 26 & \textcolor{exactgray}{$-1.750 \pm 0.020$} & $\mathbf{-0.710} \pm 0.028$ & $-0.428 \pm 0.002$ & $-0.394 \pm 0.012$ \\
        \texttt{Keggundirected} & 57247 & 27 & \textcolor{exactgray}{\texttt{nan}} & $0.302 \pm 0.136$ & $\mathbf{-0.595} \pm 0.039$ & $-0.590 \pm 0.021$ \\
        \texttt{Buzz} & 524925 & 77 & \textcolor{exactgray}{$--$} & $0.276 \pm 0.348$ & $--$ & $\mathbf{0.130} \pm 0.008$ \\
        \texttt{Song} & 270000 & 90 & \textcolor{exactgray}{$--$} & $\mathbf{1.179} \pm 0.008$ & $--$ & $1.288 \pm 0.002$ \\
        \texttt{Slice} & 48150 & 385 & \textcolor{exactgray}{$--$} & $1.258 \pm 0.149$ & $1.330 \pm 0.002$ & $\mathbf{-0.662} \pm 0.002$ \\
        \bottomrule
        \end{tabular}
        \caption{Test NLL on \texttt{UCI} datasets. Best approximate results  obtained by an approximate GP are bolded. Exact GP included for reference. Entries marked "$--$" indicate datasets that triggered out of memory errors. Entries marked \texttt{nan} indicate that the computations were unstable.}
        \label{tab:exp:uci_nll}
    \end{minipage}
\end{table}

\subsection{Benchmark on UCI Regression}
\label{subsec:exp:uci}

We evaluate the efficacy of a \softgp{} against other scalable GP methods on data sets of varying size $n$ and data dimensionality $d$ from the \texttt{UCI} repository~\citep{uci_repo}. We choose SGPR and SVGP as two scalable GP methods since these methods can be applied in a relatively blackbox fashion, and thus, can be applied to many data sets. It is not possible to apply SKI to most datasets that we test on due to scalability issues with data dimensionality, and so we omit it. For additional comparisons to other alternative SKI architectures, see Appendix~\ref{app:ski}.

\noindent\textbf{Experiment details.}
For this experiment, we split the data set into $0.9$ for training and $0.1$ for testing. We standardize the data to have mean $0$ and standard deviation $1$ using the training data set.
We use a Mat\'ern 3/2 kernel and a learnable output scale for each dimension (ARD). We choose $m = 512$ inducing points for \softgp{}. Following~\citet{wang19}, we use $m = 512$ for SGPR and $m = 1024$ for SVGP. We learn model hyperparameters for \softgp{} by maximizing Hutchinson's pseudoloss, and for SGPR and SVGP by maximizing the ELBO. For reference, we also include results for a GP fit with block CG, termed an Exact GP. For Exact GP we use the \texttt{KeOps}~\citep{JMLR:v22:20-275} library integration with \texttt{GPyTorch} to handle the additional scaling challenges of working with the full $n \times n$ kernel.

We perform $50$ epochs of training using the Adam optimizer~\citep{kingma2014adam} for all methods with a learning rate of $\eta=0.01$. The learning rate for SGPR is $\eta=0.1$ since we are not performing batching. We use a default implementation of SGPR and SVGP from \texttt{GPyTorch}. For \softgp{} and SVGP, we use a minibatch size of $1024$. 
We report the average test RMSE (Table~\ref{tab:exp:uci_rmse}) and NLL (Table~\ref{tab:exp:uci_nll}) for each method across three seeds. Additional timing information is given in Appendix~\ref{app:exp:supp}.

\noindent\textbf{Results.}
We observe that \softgp{} has competitive test RMSE performance compared to SGPR and SVGP, exceeding them when the dimension is modest ($d \approx 10$). For the test NLL, we observe cases where \softgp{}'s test NLL is smaller or larger relative to the test RMSE. As a reminder, the NLL for a GP method consists of two components: the RMSE and the complexity of the model. Consequently, a small test NLL relative to the test RMSE indicates a relatively small amount of noise in the data set. Conversely, a large test NLL relative to the test RMSE indicates a large amount of noise in the dataset. We can see instances of this in the \texttt{bike} dataset where \softgp{} achieves comparable test RMSE to SGPR and SVGP, but results in larger NLL. This indicates that \softgp{}'s kernel is more complex compared to the variational GP kernels.

\subsection{High-Dimensional Molecule Dataset}
\label{subsec:exp:molecule}

Since \softgp{}'s are performant on high dimensional data sets from the \texttt{UCI} repository, we also test the performance on molecular potential energy surface data on the \texttt{MD22}~\citep{chmiela2023accurate} dataset. These are high-dimensional datasets that give the geometric coordinates of atomic nuclei in biomolecules and their respective energies.
\begin{table}[t]
    \centering
    \begin{minipage}{\linewidth}
        \centering
        \begin{tabular}{lrrllll}
        \toprule
        Dataset & $n$ & $d$ & \textcolor{exactgray}{Exact GP} & SoftKI $m{=}512$ & SGPR $m{=}512$ & SVGP $m{=}1024$ \\
        \midrule
        \texttt{Ac-ala3-nhme} & 76598 & 126 & \textcolor{exactgray}{$0.017 \pm 0.000$} & $\mathbf{0.790} \pm 0.010$ & $0.852 \pm 0.005$ & $0.836 \pm 0.005$ \\
        \texttt{Dha} & 62777 & 168 & \textcolor{exactgray}{$0.016 \pm 0.000$} & $0.886 \pm 0.012$ & $0.897 \pm 0.012$ & $\mathbf{0.882} \pm 0.010$ \\
        \texttt{Stachyose} & 24544 & 261 & \textcolor{exactgray}{$0.020 \pm 0.000$} & $\mathbf{0.363} \pm 0.012$ & $0.643 \pm 0.002$ & $0.555 \pm 0.001$ \\
        \texttt{At-at} & 18000 & 354 & \textcolor{exactgray}{$0.020 \pm 0.001$} & $\mathbf{0.528} \pm 0.011$ & $0.591 \pm 0.011$ & $0.558 \pm 0.009$ \\
        \texttt{At-at-cg-cg} & 9137 & 354 & \textcolor{exactgray}{$0.021 \pm 0.000$} & $0.502 \pm 0.007$ & $0.565 \pm 0.029$ & $\mathbf{0.461} \pm 0.026$ \\
        \texttt{Buckyball-catcher} & 5491 & 444 & \textcolor{exactgray}{$0.018 \pm 0.001$} & $\mathbf{0.153} \pm 0.006$ & $0.393 \pm 0.008$ & $0.307 \pm 0.015$ \\
        \texttt{DW-nanotube} & 4528 & 1110 & \textcolor{exactgray}{$0.012 \pm 0.000$} & $\mathbf{0.031} \pm 0.001$ & $1.001 \pm 0.048$ & $0.045 \pm 0.000$ \\
        \bottomrule
        \end{tabular}
        \caption{Test RMSE on \texttt{MD22} datasets. Best results obtained by an approximate GP are bolded. Exact GP results are included for reference. Exact GPs are roughly $20-875\times$  slower than SoftKI (see Table~\ref{tab:suppexp:timing_m22}), and thus, only run for 50 epochs.}
        \label{tab:exp:md22_rmse}
    \end{minipage}

    \vspace{1em}

    \begin{minipage}{\linewidth}
        \centering
        \begin{tabular}{lrrllll}
        \toprule
        Dataset & $n$ & $d$ & \textcolor{exactgray}{Exact GP} & SoftKI $m{=}512$ & SGPR $m{=}512$ & SVGP $m{=}1024$ \\
        \midrule
        \texttt{Ac-ala3-nhme} & 76598 & 126 & \textcolor{exactgray}{$0.000 \pm 0.000$} & $\mathbf{1.188} \pm 0.014$ & $1.367 \pm 0.003$ & $1.356 \pm 0.002$ \\
        \texttt{Dha} & 62777 & 168 & \textcolor{exactgray}{$-0.975 \pm 0.844$} & $\mathbf{1.299} \pm 0.014$ & $1.418 \pm 0.006$ & $1.409 \pm 0.007$ \\
        \texttt{Stachyose} & 24544 & 261 & \textcolor{exactgray}{$-1.330 \pm 0.003$} & $\mathbf{0.522} \pm 0.059$ & $1.123 \pm 0.002$ & $1.012 \pm 0.003$ \\
        \texttt{At-at} & 18000 & 354 & \textcolor{exactgray}{$-1.418 \pm 0.002$} & $\mathbf{0.795} \pm 0.007$ & $1.041 \pm 0.006$ & $1.003 \pm 0.008$ \\
        \texttt{At-at-cg-cg} & 9137 & 354 & \textcolor{exactgray}{$-1.426 \pm 0.000$} & $\mathbf{0.742} \pm 0.010$ & $0.990 \pm 0.024$ & $0.800 \pm 0.016$ \\
        \texttt{Buckyball-catcher} & 5491 & 444 & \textcolor{exactgray}{$-1.567 \pm 0.003$} & $\mathbf{-0.278} \pm 0.061$ & $0.689 \pm 0.005$ & $0.449 \pm 0.015$ \\
        \texttt{DW-nanotube} & 4528 & 1110 & \textcolor{exactgray}{$-1.992 \pm 0.000$} & $\mathbf{-1.551} \pm 0.049$ & $1.520 \pm 0.021$ & $-0.954 \pm 0.009$ \\
        \bottomrule
        \end{tabular}
        \caption{Test NLL on \texttt{MD22} datasets. Best results obtained by an approximate GP are bolded. Exact GP results are included for reference. Exact GPs are roughly $20-875\times$ slower than SoftKI (see Table~\ref{tab:suppexp:timing_m22}), and thus, only run for 50 epochs.}
        \label{tab:exp:md22_nll}
    \end{minipage}
\end{table}

\noindent\textbf{Experiment details.}
For consistency, we keep the experimental setup the same as for \texttt{UCI} regression but up the number of epochs of training to 200 due to slower convergence on these datasets. We still use the Mat\'ern kernel and not a molecule-specific kernel that incorporates additional information such as the atomic number of each atom (\eg, a Hydrogen atom) or invariances (\eg, rotational invariance). We standardize the data set to have mean $0$ and standard deviation $1$. We note that in an actual application of GP regression to this setting, we may only opt to center the targets to be mean $0$. This is because energy is a relative number that can be arbitrarily shifted, whereas scaling the distances between atoms will affect the physics. As before, we include results for Exact GPs as a reference. Since they are roughly $20$ to $875$ times slower than SoftKI, so we only run them for 50 epochs (see Table~\ref{tab:suppexp:timing_m22}).

\noindent\textbf{Results.} Table~\ref{tab:exp:md22_rmse} and Table~\ref{tab:exp:md22_nll} compares the test RMSE and test NLL of various GP models trained on \texttt{MD22}. We see that \softgp{} is a competitive approximate method on these datasets, especially on the test NLL. GPs that fit forces have been successfully applied to fit such data sets to chemical accuracy. In our case, we do not fit derivative information.

\subsection{Numerical Stability}
\label{subsec:exp:numerical_stability}

\begin{table*}[t]
    \centering
    \begin{tabular}{l|ll|ll}
    \toprule
    \multicolumn{1}{c|}{} & \multicolumn{2}{c|}{Test RMSE} & \multicolumn{2}{c}{Test NLL} \\ \hline
    Dataset & 
    MLL 
    & 
    Hutchinson Pseduoloss
    & 
    MLL
    & 
    Hutchinson Pseduoloss \\
    \midrule
    3droad & \texttt{nan}  & \textbf{0.541 $\pm$ 0.005} & \texttt{nan} & \textbf{1.004 $\pm$ 0.175} \\
    Kin40k & \textbf{0.169 $\pm$ 0.008} & 0.183 $\pm$ 0.006 & \textbf{0.055 $\pm$ 0.043} & 0.07 $\pm$ 0.029 \\
    Protein & \textbf{0.596 $\pm$ 0.016} & 0.602 $\pm$ 0.016 & \textbf{0.904 $\pm$ 0.03} & 0.914 $\pm$ 0.028 \\
    Houseelectric & \texttt{nan}  & \textbf{0.049 $\pm$ 0.002} & \texttt{nan}  & \textbf{1.275 $\pm$ 1.103} \\
    Bike & \texttt{nan}  & \textbf{0.073 $\pm$ 0.005} & \texttt{nan}  & \textbf{-0.101 $\pm$ 0.179} \\
    Elevators & \textbf{0.36 $\pm$ 0.007} & \textbf{0.36 $\pm$ 0.007} & \textbf{0.404 $\pm$ 0.024} & \textbf{0.404 $\pm$ 0.025} \\
    Keggdirected & \textbf{0.08 $\pm$ 0.005} & 0.082 $\pm$ 0.004 & \textbf{0.293 $\pm$ 0.169} & 0.628 $\pm$ 0.289 \\
    Pol & \textbf{0.075 $\pm$ 0.002} & 0.084 $\pm$ 0.001 & -0.694 $\pm$ 0.06 & \textbf{-0.783 $\pm$ 0.033} \\
    Keggundirected & \textbf{0.114 $\pm$ 0.005} & 0.118 $\pm$ 0.004 & 0.308 $\pm$ 0.025 & \textbf{0.291 $\pm$ 0.089} \\
    Buzz & \texttt{nan}  & \textbf{0.24 $\pm$ 0.0} & \texttt{nan}  & \textbf{0.599 $\pm$ 0.555} \\
    Song & \textbf{0.777 $\pm$ 0.004} & \textbf{0.777 $\pm$ 0.004} & 1.179 $\pm$ 0.008 & \textbf{1.178 $\pm$ 0.007} \\
    Slice & \textbf{0.022 $\pm$ 0.006} & 0.044 $\pm$ 0.005 & 1.328 $\pm$ 0.045 & \textbf{0.607 $\pm$ 0.058} \\
    \bottomrule
    \end{tabular}
    \caption{Comparison of \softgp{} test RMSE and NLL when training with the \softgp{} MLL $\log p(\by \mid \bx; \boldsymbol{\theta})$ versus the Hutchinson pseudoloss $\log \tilde{p}(\by \mid \bx; \boldsymbol{\theta})$. Best results are bolded. Entries labeled \texttt{nan}, represent instances where using the exact MLL resulted in numerical instability in \softgp{} (see Section ~\ref{subsec:exp:numerical_stability}).}
    \label{tab:exp:logp}
\end{table*}

In Section~\ref{subsec:meth:learn}, we advocated for the use of Hutchinson's pseudoloss to overcome numerical stability issues that arise when calculating the \softgp{} MLL.
This adjustment is specifically to address situations where the matrix $\mathbf{K}_{\bx_b \bx_b}^\text{SoftKI} + \mathbf{\Lambda}$ is not positive semi-definite in \texttt{float32} precision. In other approximate GP models implemented in \texttt{GPyTorch}, this challenge is typically addressed by performing a Cholesky decomposition followed by an efficient low-rank computation of the log determinant through the matrix determinant lemma.

In our experience, even when using versions of the Cholesky decomposition that add additional jitter along the diagonal there are still situations where \( \mathbf{K}_{\bz\bz}\) can be poorly conditioned, particularly when $n$ is large. We conjecture that this behavior originates from situations where the learned interpolation points of a \softgp{} coincide at similar positions, driving the effective rank of \( \mathbf{K}_{\bz\bz}\) down. In these situations, Hutchinson's pseudoloss offers a more stable alternative because it does not directly rely on the matrix being invertible. 

In Table~\ref{tab:exp:logp}, we replicate the \texttt{UCI} experiment of Section~\ref{subsec:exp:uci} using the \softgp{} MLL $\log p(\by \,|\, \bx ; \mathbf{\theta})$ and Hutchinson's pseudoloss $\log \tilde{p}(\by \,|\, \bx; \mathbf{\theta})$. For most datasets, optimizing with the \softgp{} MLL produces better test RMSE. However, more numericaly instability is also encountered, with \softgp{} failing on four datasets that we tested. In these cases, we find that Cholesky decomposition fails.
\section{Conclusion}
\label{sec:concl}

In this paper, we introduce \softgp{}, an approximate GP  designed for regression on large and high-dimensional datasets. \softgp{} combines aspects of SKI and inducing points methods to retain the benefits of kernel interpolation while also scaling to higher dimensional datasets. We have tested \softgp{} on a variety of datasets and shown that it is possible to perform kernel interpolation in high dimensional spaces in a way that is competitive with other approximate GP abstractions that leverage inducing points. Exploring methods that enforce stricter sparsity in the interpolation matrices, such as through thresholding, could enable the use of sparse matrix algebra further improving the cost of inference.

\textbf{Acknowledgements} Chris Camaño acknowledges support by the National Science Foundation Graduate Research Fellowship under Grant No. 2139433,  and the Kortschak scholars program. Additionally, we would like to extend our appreciation to Ethan N. Epperly for his thoughtful comments and advice on this work. We thank the Systems Team in Academic Technology at San Francisco State University for computing resources.

\bibliography{main}
\bibliographystyle{tmlr}

\appendix

\section{SoftKI Posterior Predictive Derivation}
\label{sec:softki-deriv}

As a reminder, the \softgp{} posterior predictive is
\begin{equation}
p(f(*) \,|\, \bx, \by) = \mathcal{N}(\mathbf{\widehat{K}_{*\bz}}\mathbf{\widehat{C}}^{-1}\mathbf{\widehat{K}_{\bz\bx}}\mathbf{\Lambda}^{-1}\by, \mathbf{K}_{* *}^\text{SoftKI} - \mathbf{K}_{* \bx}^\text{SoftKI}
 (\mathbf{\Lambda}^{-1} - \mathbf{\Lambda}^{-1}\mathbf{\widehat{K}_{\bx\bz}}\mathbf{\widehat{C}}^{-1}\mathbf{\widehat{K}_{\bz\bx}}\mathbf{\Lambda}^{-1}) \mathbf{K}_{\bx *}^\text{SoftKI})
\end{equation}
where $\widehat{\bC} = \bK_{\bz\bz} + \bK_{\bz\bx}\mathbf{\Lambda}^{-1}\bK_{\bx\bz}$. To derive this, we first recall how to apply the matrix inversion lemma to the posterior mean of a SGPR, which as a reminder uses the covariance approximation $\bK_{\bx\bx} \approx \bK_{\bx\bx}^{\text{SGPR}} = \bK_{\bx\bz}\bK_{\bz\bz}^{-1}\bK_{\bz\bx}$ 
\begin{lemma}[Matrix inversion with GPs]
\begin{align}
    \bK_{*\bz}(\bK_{\bz\bz} + \bK_{\bz\bx}\mathbf{\Lambda}^{-1}\bK_{\bx\bz})^{-1} \bK_{\bz\bx}\mathbf{\Lambda}^{-1} = \bK_{*\bx}^{\text{SGPR}}(\bK_{\bx\bx}^{\text{SGPR}} + \mathbf{\Lambda})^{-1}
\end{align}
\label{lem:matinv}
\end{lemma}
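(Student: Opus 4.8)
The plan is to reduce the identity to a finite-dimensional algebraic fact---essentially the Woodbury ``push-through'' identity---after peeling off the common left factor $\bK_{*\bz}$. Writing $\bK_{*\bx}^{\text{SGPR}} = \bK_{*\bz}\bK_{\bz\bz}^{-1}\bK_{\bz\bx}$ and $\bK_{\bx\bx}^{\text{SGPR}} = \bK_{\bx\bz}\bK_{\bz\bz}^{-1}\bK_{\bz\bx}$, both sides of the claimed equation carry $\bK_{*\bz}$ as a left factor, so it suffices to prove
\[
\bK_{\bz\bz}^{-1}\bK_{\bz\bx}\bigl(\bK_{\bx\bz}\bK_{\bz\bz}^{-1}\bK_{\bz\bx} + \mathbf{\Lambda}\bigr)^{-1} = \bigl(\bK_{\bz\bz} + \bK_{\bz\bx}\mathbf{\Lambda}^{-1}\bK_{\bx\bz}\bigr)^{-1}\bK_{\bz\bx}\mathbf{\Lambda}^{-1}.
\]

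First I would record that every matrix being inverted is in fact invertible: $\bK_{\bz\bz}$ is positive definite (after the usual diagonal jitter), $\mathbf{\Lambda} = \beta^2\bI \succ 0$, and both $\bK_{\bx\bz}\bK_{\bz\bz}^{-1}\bK_{\bz\bx} + \mathbf{\Lambda}$ and $\bK_{\bz\bz} + \bK_{\bz\bx}\mathbf{\Lambda}^{-1}\bK_{\bx\bz}$ are sums of a positive semidefinite matrix and a positive definite one, hence positive definite. This legitimizes all inverses that appear below.

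Next I would verify the ``cleared-denominators'' form of the reduced identity, namely
\[
\bigl(\bK_{\bz\bz} + \bK_{\bz\bx}\mathbf{\Lambda}^{-1}\bK_{\bx\bz}\bigr)\bK_{\bz\bz}^{-1}\bK_{\bz\bx} = \bK_{\bz\bx}\mathbf{\Lambda}^{-1}\bigl(\bK_{\bx\bz}\bK_{\bz\bz}^{-1}\bK_{\bz\bx} + \mathbf{\Lambda}\bigr),
\]
which follows by expanding both sides: the left side is $\bK_{\bz\bx} + \bK_{\bz\bx}\mathbf{\Lambda}^{-1}\bK_{\bx\bz}\bK_{\bz\bz}^{-1}\bK_{\bz\bx}$, the right side is $\bK_{\bz\bx}\mathbf{\Lambda}^{-1}\bK_{\bx\bz}\bK_{\bz\bz}^{-1}\bK_{\bz\bx} + \bK_{\bz\bx}$, and the two agree term by term. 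Left-multiplying this identity by $\bigl(\bK_{\bz\bz} + \bK_{\bz\bx}\mathbf{\Lambda}^{-1}\bK_{\bx\bz}\bigr)^{-1}$ and right-multiplying by $\bigl(\bK_{\bx\bz}\bK_{\bz\bz}^{-1}\bK_{\bz\bx} + \mathbf{\Lambda}\bigr)^{-1}$ yields the displayed reduced identity; left-multiplying once more by $\bK_{*\bz}$ recovers the statement of the lemma.

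I do not expect a real obstacle here: the argument is a short manipulation once the common factor $\bK_{*\bz}$ is removed and the reduced claim is recognized as the push-through identity. The only point requiring a sentence of care is the invertibility of $\bK_{\bz\bz}$ and of the two Schur-type matrices, which is immediate from positive definiteness. Alternatively one could simply invoke the Woodbury matrix identity directly on $(\bK_{\bx\bx}^{\text{SGPR}} + \mathbf{\Lambda})^{-1}$ and simplify, but the ``clear denominators and expand'' route avoids writing out the four-term Woodbury expansion and keeps the bookkeeping minimal.
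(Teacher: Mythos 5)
Your proof is correct, and it takes a somewhat different route from the paper's. The paper starts from the trivial identity $\bK_{\bx\bx}^{\text{SGPR}} + \mathbf{\Lambda} = \bK_{\bx\bx}^{\text{SGPR}} + \mathbf{\Lambda}$, multiplies through by $(\bK_{\bx\bx}^{\text{SGPR}} + \mathbf{\Lambda})^{-1}$, left-multiplies by $\bK_{*\bx}^{\text{SGPR}}$, expands the definitions, and then invokes the Woodbury matrix inversion lemma as a black box at the key step to collapse $\bK_{\bz\bz}^{-1} - \bK_{\bz\bz}^{-1}\bK_{\bz\bx}(\bK_{\bx\bz}\bK_{\bz\bz}^{-1}\bK_{\bz\bx} + \mathbf{\Lambda})^{-1}\bK_{\bx\bz}\bK_{\bz\bz}^{-1}$ into $(\bK_{\bz\bz} + \bK_{\bz\bx}\mathbf{\Lambda}^{-1}\bK_{\bx\bz})^{-1}$. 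You instead strip the common left factor $\bK_{*\bz}$, reduce to the push-through identity, and verify it by clearing denominators and expanding both sides---a two-line computation that requires no external lemma. Your sufficiency reduction is sound (you only need the implication from the reduced identity to the full one, so no cancellation of $\bK_{*\bz}$ is needed), and your invertibility remarks cover the inverses the paper takes for granted. What your route buys is a shorter, self-contained argument that avoids writing out or citing the four-term Woodbury expansion; what the paper's route buys is an explicit chain of equivalences that makes the connection to the standard SGPR posterior derivation (and the named matrix inversion lemma) more visible, which matters since the subsequent Lemma~\ref{lem:matinv2} reuses this result by substitution.
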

\begin{proof}
\begin{align}
\phantom{\iff} & \bK_{\bx\bx}^{\text{SGPR}} + \mathbf{\Lambda} = \bK_{\bx\bx}^{\text{SGPR}} + \mathbf{\Lambda} \tag{identity} \\
\iff & \bI = (\bK_{\bx\bx}^{\text{SGPR}} + \mathbf{\Lambda})^{-1}\bK_{\bx\bx}^{\text{SGPR}} + (\bK_{\bx\bx}^{\text{SGPR}} + \mathbf{\Lambda})^{-1}\mathbf{\Lambda} \tag{mult by $(\bK_{\bx\bx}^{\text{SGPR}} + \mathbf{\Lambda})^{-1}$}\\
\iff & \bI - (\bK_{\bx\bx}^{\text{SGPR}} + \mathbf{\Lambda})^{-1}\bK_{\bx\bx}^{\text{SGPR}} = (\bK_{\bx\bx}^{\text{SGPR}} + \mathbf{\Lambda})^{-1}\mathbf{\Lambda} \tag{rearrange} \\
\iff & \bK_{*\bx}^{\text{SGPR}} - \bK_{*\bx}^{\text{SGPR}}(\bK_{\bx\bx}^{\text{SGPR}} + \mathbf{\Lambda})^{-1}\bK_{\bx\bx}^{\text{SGPR}} = \bK_{*\bx}^{\text{SGPR}}(\bK_{\bx\bx}^{\text{SGPR}} + \mathbf{\Lambda})^{-1}\mathbf{\Lambda} \tag{mult both sides by $\bK_{*\bx}^{\text{SGPR}}$ on left} \\
\iff & \bK_{*\bz}\bK_{\bz\bz}^{-1}\bK_{\bz\bx} - \bK_{*\bz}\bK_{\bz\bz}^{-1}\bK_{\bz\bx}(\bK_{\bx\bz}\bK_{\bz\bz}^{-1}\bK_{\bz\bx} + \mathbf{\Lambda})^{-1}\bK_{\bx\bz}\bK_{\bz\bz}^{-1}\bK_{\bz\bx} = \bK_{*\bx}^{\text{SGPR}}(\bK_{\bx\bx}^{\text{SGPR}} + \mathbf{\Lambda})^{-1}\mathbf{\Lambda} \tag{defn $\bK_{\bx\bx}^{\text{SGPR}}$} \\
\iff & \bK_{*\bz}(\bK_{\bz\bz}^{-1} - \bK_{\bz\bz}^{-1}\bK_{\bz\bx}(\bK_{\bx\bz}\bK_{\bz\bz}^{-1}\bK_{\bz\bx} + \mathbf{\Lambda})^{-1}\bK_{\bx\bz}\bK_{\bz\bz}^{-1})\bK_{\bz\bx} = \bK_{*\bx}^{\text{SGPR}}(\bK_{\bx\bx}^{\text{SGPR}} + \mathbf{\Lambda})^{-1}\mathbf{\Lambda} \tag{factor} \\
\iff & \bK_{*\bz}(\bK_{\bz\bz} + \bK_{\bz\bx}\mathbf{\Lambda}^{-1}\bK_{\bx\bz})^{-1} \bK_{\bz\bx} = \bK_{*\bx}^{\text{SGPR}}(\bK_{\bx\bx}^{\text{SGPR}} + \mathbf{\Lambda})^{-1}\mathbf{\Lambda} \tag{matrix inversion lemma} \\
\iff & \bK_{*\bz}(\bK_{\bz\bz} + \bK_{\bz\bx}\mathbf{\Lambda}^{-1}\bK_{\bx\bz})^{-1} \bK_{\bz\bx}\mathbf{\Lambda}^{-1} = \bK_{*\bx}^{\text{SGPR}}(\bK_{\bx\bx}^{\text{SGPR}} + \mathbf{\Lambda})^{-1} \tag{mult $\mathbf{\Lambda}^{-1}$ on right} \,.
\end{align}
\end{proof}

Now, we extend the matrix inversion lemma to \softgp{}.
\begin{lemma}[Matrix inversion with interpolation]
\begin{align}
    \widehat{\bK}_{*\bz}(\bK_{\bz\bz} + \widehat{\bK}_{\bz\bx}\mathbf{\Lambda}^{-1}\widehat{\bK}_{\bx\bz})^{-1} \widehat{\bK}_{\bz\bx}\mathbf{\Lambda}^{-1} = \mathbf{K}_{* \bx}^\text{SoftKI} (\mathbf{K}_{\bx \bx}^\text{SoftKI} + \mathbf{\Lambda})^{-1}
\end{align}
\label{lem:matinv2}
\end{lemma}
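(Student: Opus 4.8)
The plan is to reduce Lemma~\ref{lem:matinv2} to Lemma~\ref{lem:matinv} by observing that the \softgp{} kernel has exactly the same algebraic shape as the SGPR kernel, only with $\bK_{\bx\bz}$ replaced everywhere by $\widehat{\bK}_{\bx\bz} = \mathbf{\Sigma}_{\bx\bz}\bK_{\bz\bz}$ (and $\bK_{\bz\bx}$ by $\widehat{\bK}_{\bz\bx} = \bK_{\bz\bz}\mathbf{\Sigma}_{\bz\bx}$, its transpose). Indeed, $\mathbf{K}_{\bx\bx}^\text{SoftKI} = \mathbf{\Sigma}_{\bx\bz}\bK_{\bz\bz}\mathbf{\Sigma}_{\bz\bx} = (\mathbf{\Sigma}_{\bx\bz}\bK_{\bz\bz})\bK_{\bz\bz}^{-1}(\bK_{\bz\bz}\mathbf{\Sigma}_{\bz\bx}) = \widehat{\bK}_{\bx\bz}\bK_{\bz\bz}^{-1}\widehat{\bK}_{\bz\bx}$, which is structurally identical to $\bK_{\bx\bx}^{\text{SGPR}} = \bK_{\bx\bz}\bK_{\bz\bz}^{-1}\bK_{\bz\bx}$. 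The same substitution applies on the test block: $\mathbf{K}_{*\bx}^\text{SoftKI} = \widehat{\bK}_{*\bz}\bK_{\bz\bz}^{-1}\widehat{\bK}_{\bz\bx}$, where $\widehat{\bK}_{*\bz} = \mathbf{\Sigma}_{*\bz}\bK_{\bz\bz}$.

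First I would state the substitution explicitly: set $\widehat{\bK}_{\bx\bz} := \mathbf{\Sigma}_{\bx\bz}\bK_{\bz\bz}$ and $\widehat{\bK}_{*\bz} := \mathbf{\Sigma}_{*\bz}\bK_{\bz\bz}$, note $\widehat{\bK}_{\bz\bx} = \widehat{\bK}_{\bx\bz}^\top$ since $\bK_{\bz\bz}$ is symmetric, and record the two identities $\mathbf{K}_{\bx\bx}^\text{SoftKI} = \widehat{\bK}_{\bx\bz}\bK_{\bz\bz}^{-1}\widehat{\bK}_{\bz\bx}$ and $\mathbf{K}_{*\bx}^\text{SoftKI} = \widehat{\bK}_{*\bz}\bK_{\bz\bz}^{-1}\widehat{\bK}_{\bz\bx}$. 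Then I would simply rerun the chain of equivalences in the proof of Lemma~\ref{lem:matinv} verbatim, with $\bK_{\bx\bx}^{\text{SGPR}}$, $\bK_{\bx\bz}$, $\bK_{*\bz}$ replaced by $\mathbf{K}_{\bx\bx}^\text{SoftKI}$, $\widehat{\bK}_{\bx\bz}$, $\widehat{\bK}_{*\bz}$ throughout. Nothing in that derivation used any property of $\bK_{\bx\bz}$ beyond the factorization $\bK_{\bx\bx}^{\text{SGPR}} = \bK_{\bx\bz}\bK_{\bz\bz}^{-1}\bK_{\bz\bx}$, the invertibility of $\bK_{\bz\bz}$ (and of $\mathbf{\Lambda}$ and of $\mathbf{K}_{\bx\bx}^\text{SoftKI} + \mathbf{\Lambda}$), and the symmetry relating the $\bz\bx$ and $\bx\bz$ blocks — all of which hold here. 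Alternatively, one can phrase this as a single sentence: Lemma~\ref{lem:matinv} is an algebraic identity in the symbols $(\bK_{\bx\bz}, \bK_{\bz\bz}, \bK_{*\bz}, \mathbf{\Lambda})$ subject to $\bK_{\bz\bx} = \bK_{\bx\bz}^\top$, so it remains valid under the relabeling $\bK_{\bx\bz}\mapsto\widehat{\bK}_{\bx\bz}$, $\bK_{*\bz}\mapsto\widehat{\bK}_{*\bz}$, which yields exactly the claimed equation.

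There is essentially no hard part: the content is the recognition that $\bK_{\bz\bz}^{-1}$ appears formally in $\mathbf{K}_{\bx\bx}^\text{SoftKI}$ even though the \softgp{} kernel is written without it, so that Lemma~\ref{lem:matinv} transfers by a purely syntactic substitution. The one place that warrants a half-sentence of care is confirming that $\widehat{\bK}_{\bx\bz}\bK_{\bz\bz}^{-1}\widehat{\bK}_{\bz\bx}$ really does collapse to $\mathbf{\Sigma}_{\bx\bz}\bK_{\bz\bz}\mathbf{\Sigma}_{\bz\bx}$ — i.e. that the inserted $\bK_{\bz\bz}^{-1}$ cancels cleanly with the trailing $\bK_{\bz\bz}$ factors in $\widehat{\bK}_{\bx\bz}$ and $\widehat{\bK}_{\bz\bx}$ — which is immediate. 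I would therefore present the proof as: (i) the two factorization identities above, then (ii) ``the proof is now identical to that of Lemma~\ref{lem:matinv} with $\bK_{\bx\bz}$ replaced by $\widehat{\bK}_{\bx\bz}$ and $\bK_{*\bz}$ by $\widehat{\bK}_{*\bz}$,'' optionally reproducing the $\iff$-chain in full for completeness.
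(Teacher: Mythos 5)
Your proposal is correct and is essentially the paper's own proof: the paper likewise notes that $\mathbf{K}_{\bx\bx}^\text{SoftKI} = \mathbf{\Sigma}_{\bx\bz}\bK_{\bz\bz}\bK_{\bz\bz}^{-1}\bK_{\bz\bz}\mathbf{\Sigma}_{\bz\bx} = \widehat{\bK}_{\bx\bz}\bK_{\bz\bz}^{-1}\widehat{\bK}_{\bz\bx}$ and then invokes Lemma~\ref{lem:matinv} with $\bK_{\bx\bz}$ replaced by $\widehat{\bK}_{\bx\bz}$. Your version is if anything slightly more explicit about the test-block factorization and the symmetry $\widehat{\bK}_{\bz\bx} = \widehat{\bK}_{\bx\bz}^\top$, but the argument is the same.
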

\begin{proof}
Recall $\mathbf{K}_{* *}^\text{SoftKI} = \mathbf{\Sigma}_{\bx\bz}\bK_{\bz\bz}\mathbf{\Sigma}_{\bz\bx} =  \mathbf{\Sigma}_{\bx\bz}\bK_{\bz\bz}\bK_{\bz\bz}^{-1}\bK_{\bz\bz}\mathbf{\Sigma}_{\bz\bx} = \widehat{\bK}_{\bx\bz}\bK_{\bz\bz}^{-1}\widehat{\bK}_{\bz\bx}$. The result follows by applying Lemma~\ref{lem:matinv} with $\bK_{\bx\bz}$ replaced with $\widehat{\bK}_{\bx\bz}$.
\end{proof}

The posterior covariance is computed by a simple application of the matrix inversion lemma. To compute it, observe that the same procedure for solving the posterior mean can also be used for solving the posterior covariance by replacing $\by$ with $\mathbf{K}_{\bx *}^\text{SoftKI}$. More concretely,

\begin{align}
& \phantom{=} \mathbf{K}_{* *}^\text{SoftKI} - \mathbf{K}_{* \bx}^\text{SoftKI}(\mathbf{\Lambda}^{-1} - \mathbf{\Lambda}^{-1}\mathbf{\widehat{K}_{\bx\bz}}\mathbf{\widehat{C}}^{-1}\mathbf{\widehat{K}_{\bz\bx}}\mathbf{\Lambda}^{-1}) \mathbf{K}_{\bx *}^\text{SoftKI} \\
& = 
\mathbf{K}_{* *}^\text{SoftKI} - \mathbf{K}_{* \bx}^\text{SoftKI} \mathbf{\Lambda}^{-1} 
\mathbf{K}_{\bx *}^\text{SoftKI}
+ \mathbf{K}_{* \bx}^\text{SoftKI} \mathbf{\Lambda}^{-1}\mathbf{\widehat{K}_{\bx\bz}}\mathbf{\widehat{C}}^{-1}\mathbf{\widehat{K}_{\bz\bx}}\mathbf{\Lambda}^{-1} \mathbf{K}_{\bx *}^\text{SoftKI} \\
& = 
\mathbf{K}_{* *}^\text{SoftKI} - \mathbf{K}_{* \bx}^\text{SoftKI} \mathbf{\Lambda}^{-1} 
\mathbf{K}_{\bx *}^\text{SoftKI}
+ \mathbf{K}_{* \bx}^\text{SoftKI} \mathbf{\Lambda}^{-1} \mathbf{\widehat{K}_{\bx\bz}} \alpha_*
\end{align}
where
\begin{equation}
    \widehat{\bC}\alpha_* = \mathbf{\widehat{K}_{\bz\bx}}\mathbf{\Lambda}^{-1} \mathbf{K}_{\bx *}^\text{SoftKI}
\end{equation}
can be solved for $\alpha_*$ in the same way we solved for the posterior mean (with $\by$ instead of $\mathbf{K}_{\bx *}^\text{SoftKI}$). Since this depends on the inference point $*$, we cannot precompute the result ahead of time as we could with the posterior mean. Nevertheless, the intermediate results of the QR decomposition can be computed once during posterior mean inference and reused for the covariance prediction. Thus, the time complexity of posterior covariance inference is $\cO(nm^2)$ per a test point.

\section{Alternative Methods for Posterior Inference}
\label{app:alternative_post}

\begin{figure}[t]
    \centering
    \subfigure[Protein.]{
        \includegraphics[scale=0.45]{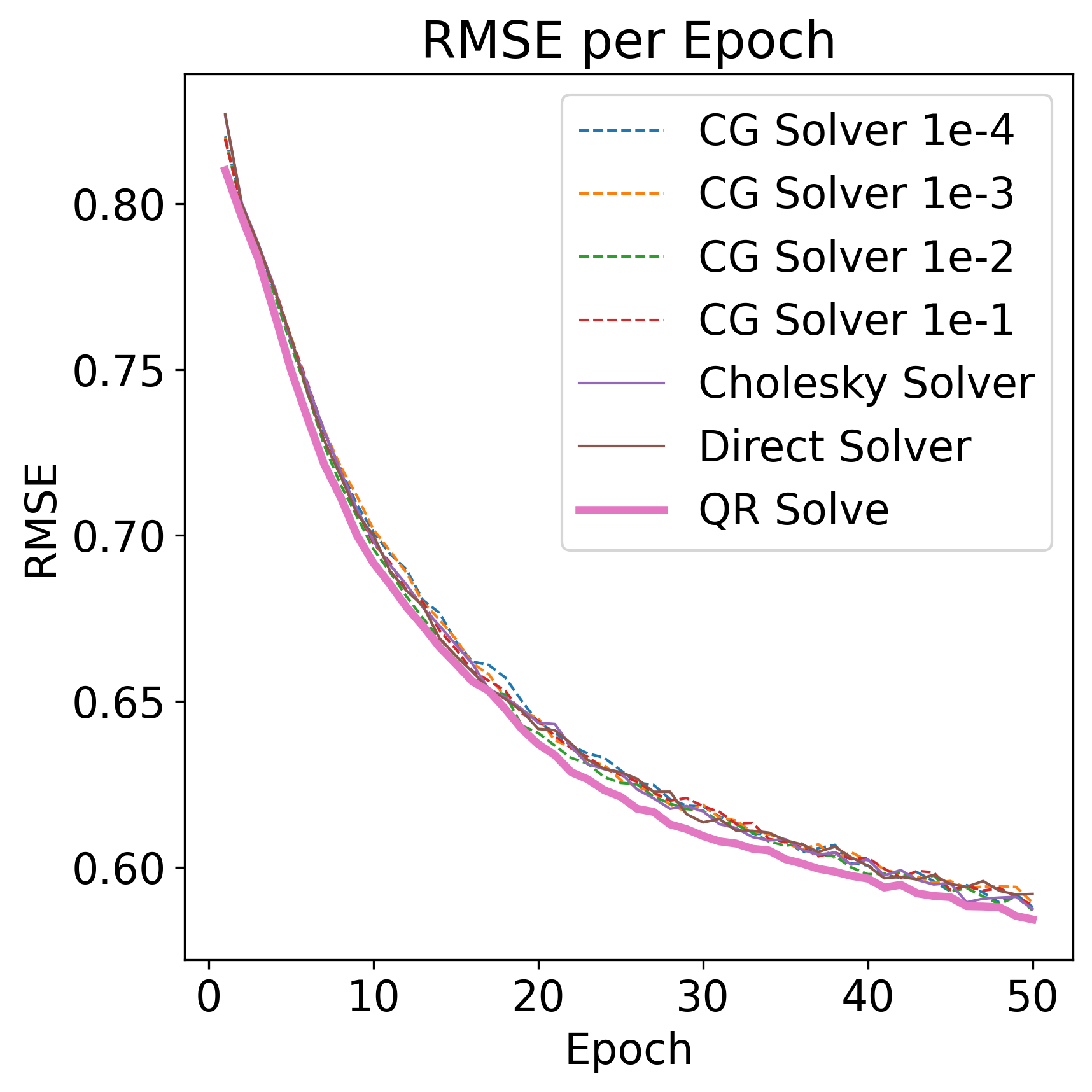}
    }
    \subfigure[Keggdirected.]{
        \includegraphics[scale=0.45]{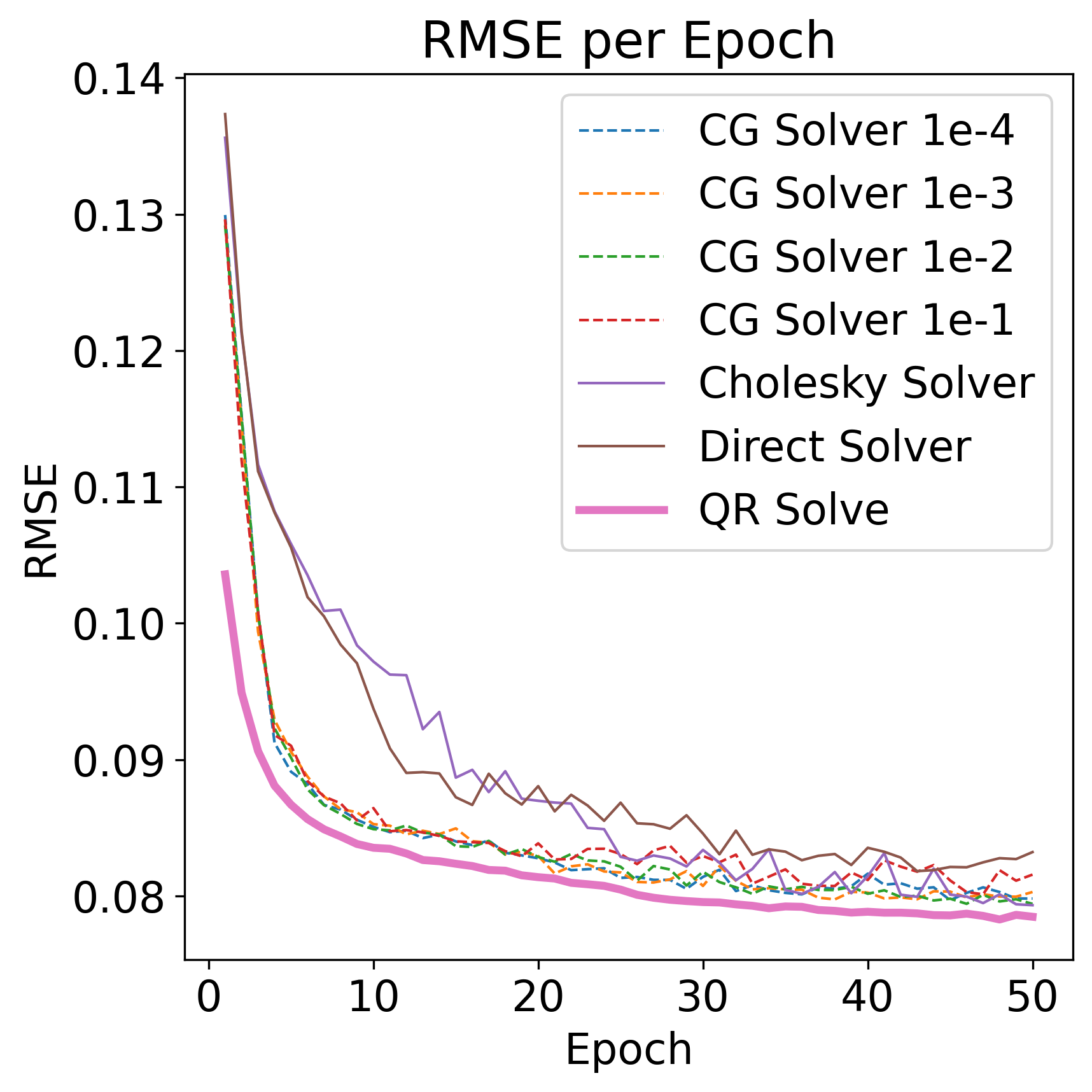}
    }
    \caption{Test RMSE of \softgp{} models trained using different linear solvers described in Section~\ref{app:alternative_post} on the \texttt{keggundirected} dataset (\textit{left}) and the \texttt{protein} dataset (\textit{right}).}
    \label{fig:inference}
\end{figure}

Section~\ref{subsec:meth:post} details the adaptation of the QR stabilized linear solve for approximate kernels~\citep{JMLR:v10:foster09a} to the \softgp{} setting. In this section, we provide empirical evidence that illustrates how other linear solvers fail when confronted with datasets that generate noisy kernels. We focus on two datasets, \texttt{protein} and \texttt{keggundirected}.

As a reminder the QR stabilized linear solve is motivated by the challenge of solving the linear system:

\begin{equation}
\hat{\mathbf{C}}\, \boldsymbol{\alpha} = \hat{\mathbf{K}}_{\mathbf{z}\mathbf{x}}\, \boldsymbol{\Lambda}^{-1}\, \mathbf{y},
\end{equation}

where $\hat{\mathbf{C}}$ is the estimated (potentially noisy) kernel matrix $\hat{\mathbf{C}}=\mathbf{K}_{\mathrm{zz}}+\hat{\mathbf{K}}_{\mathbf{z x}} \boldsymbol{\Lambda}^{-1} \hat{\mathbf{K}}_{\mathrm{xz}}$, and  $\hat{\mathbf{K}}_{\mathrm{xz}} = \mathbf{\Sigma}_{\mathrm{xz}} \mathbf{K}_{\mathrm{zz}}$ is the interpolated kernel between interpolations points $\mathbf{z}$ and training points $\mathbf{x}$.

In this example we evaluate a direct solver, the Cholesky decomposition method, and CG  with convergence tolerances set to $1 \times 10^{-1}$, $1 \times 10^{-2}$, $1 \times 10^{-3}$, and $1 \times 10^{-4}$. Despite adjusting the tolerances for the CG method, our experiments revealed that all solvers---except for the QR-stabilized routine---resulted in training instability. Figure~\ref{fig:inference} depicts the test RMSE performance across the different solvers. The direct and Cholesky solvers, while more stable than the CG solvers, failed to produce reliable solutions. Similarly, the CG method did not converge to acceptable solutions within the tested tolerance levels. In contrast, the QR-stabilized solver consistently produced stable and accurate solutions justifying its choice as a correctional measure that stabilizes a \softgp{} on difficult problems.

\begin{table*}
\centering
\begin{tabular}{lrrlll}
\toprule
Dataset & $n$ & $d$ & \textcolor{exactgray}{SoftKI $m{=}512$} & SGPR $m{=}512$ & SGPR $m{=}512$ QR \\
\midrule
\texttt{Pol} & 13500 & 26 & \textcolor{exactgray}{0.075 $\pm$ 0.002} & 0.127 $\pm$ 0.002 & \textbf{0.121 $\pm$ 0.002} \\
\texttt{Elevators} & 14939 & 18 & \textcolor{exactgray}{0.36 $\pm$ 0.006} & 0.395 $\pm$ 0.005 & \textbf{0.392 $\pm$ 0.005} \\
\texttt{Bike} & 15641 & 17 & \textcolor{exactgray}{0.062 $\pm$ 0.001} & 0.108 $\pm$ 0.004 & \textbf{0.105 $\pm$ 0.004} \\
\texttt{Kin40k} & 36000 & 8 & \textcolor{exactgray}{0.169 $\pm$ 0.008} & 0.177 $\pm$ 0.004 & \textbf{0.169 $\pm$ 0.004} \\
\texttt{Protein} & 41157 & 9 & \textcolor{exactgray}{0.596 $\pm$ 0.016} & 0.602 $\pm$ 0.015 & \textbf{0.601 $\pm$ 0.015} \\
\texttt{Keggdirected} & 43944 & 20 & \textcolor{exactgray}{0.08 $\pm$ 0.005} & 0.099 $\pm$ 0.004 & \textbf{0.082 $\pm$ 0.006} \\
\texttt{Slice} & 48150 & 385 & \textcolor{exactgray}{0.022 $\pm$ 0.006} & 0.52 $\pm$ 0.001 & \textbf{0.513 $\pm$ 0.001} \\
\texttt{Keggundirected} & 57247 & 27 & \textcolor{exactgray}{0.115 $\pm$ 0.004} & 0.128 $\pm$ 0.009 & \textbf{0.124 $\pm$ 0.007} \\
\texttt{3droad} & 391386 & 3 & \textcolor{exactgray}{0.583 $\pm$ 0.01} & $--$ & $--$ \\
\texttt{Song} & 270000 & 90 & \textcolor{exactgray}{0.777 $\pm$ 0.004} & $--$ & $--$ \\
\texttt{Buzz} & 524925 & 77 & \textcolor{exactgray}{0.24 $\pm$ 0.001} & $--$ & $--$ \\
\texttt{Houseelectric} & 1844352 & 11 & \textcolor{exactgray}{0.047 $\pm$ 0.001} & $--$ & $--$ \\
\bottomrule
\end{tabular}
\caption{Test RMSE on the \texttt{UCI} dataset using vanilla SGPR with SGPR using the QR stabilized linear solve described in \cref{subsec:meth:post}. Reference results for SoftKI are given in gray.}
\label{tab:app:sgpr_qr}
\end{table*}

Since the shape of the SoftKI posterior is similar to the SGPR posterior (see Lemma~\ref{lem:matinv2}), we can also adapt the QR procedure used for SoftKI to SGPR. Table~\ref{tab:app:sgpr_qr} illustrates the results of using QR solving for SGPR. We see that QR solving improves the test RMSE for SGPR. However, unlike SoftKI where QR solving is more important for training stability, we did not observe instability with an inversion of $\bK_{\bx\bx}^\text{SGPR} + \mathbf{\Lambda}$ using the matrix inversion lemma.

\section{Additional Experiments}
\label{app:exp}

We provide supplemental data for experiments in the main text (Section~\ref{app:exp:ricker} and Section~\ref{app:exp:supp}). We also report comparisons with SKI-based methods (Section~\ref{app:ski}) and additional experimental results (Section~\ref{app:exp:length_T}).

\subsection{Ricker Wavelet Experiment}
\label{app:exp:ricker}

\begin{table}[t]
\centering
\begin{tabular}{lccccc}
\toprule
Method   & Learning rate ($\eta$) & Kernel                         & Starting noise $(\beta)$ & Inducing points  $(m)$     & Test RMSE                 \\
\midrule
SGPR     & 0.1                    & Matern \(\nu=1.5\)                & 0              & 128 K-means centroids & $6\times 10^{-3}$\\
SKI      & 0.1                    & Matern \(\nu=1.5\)                & 0.1            & \(20\times20\) grid   & $6\times 10^{-3}$ \\
SoftKI   & 0.5                    & Matern \(\nu=1.5\)                & 0.5            & 128 K-means centroids & $2\times 10^{-3}$ \\
\bottomrule
\end{tabular}
\caption{Experimental settings for the Ricker Wavlet experiment.}
\label{tab:ricker_setting}
\end{table}

\begin{figure}[t]
    \centering    \includegraphics[width=0.9\linewidth]{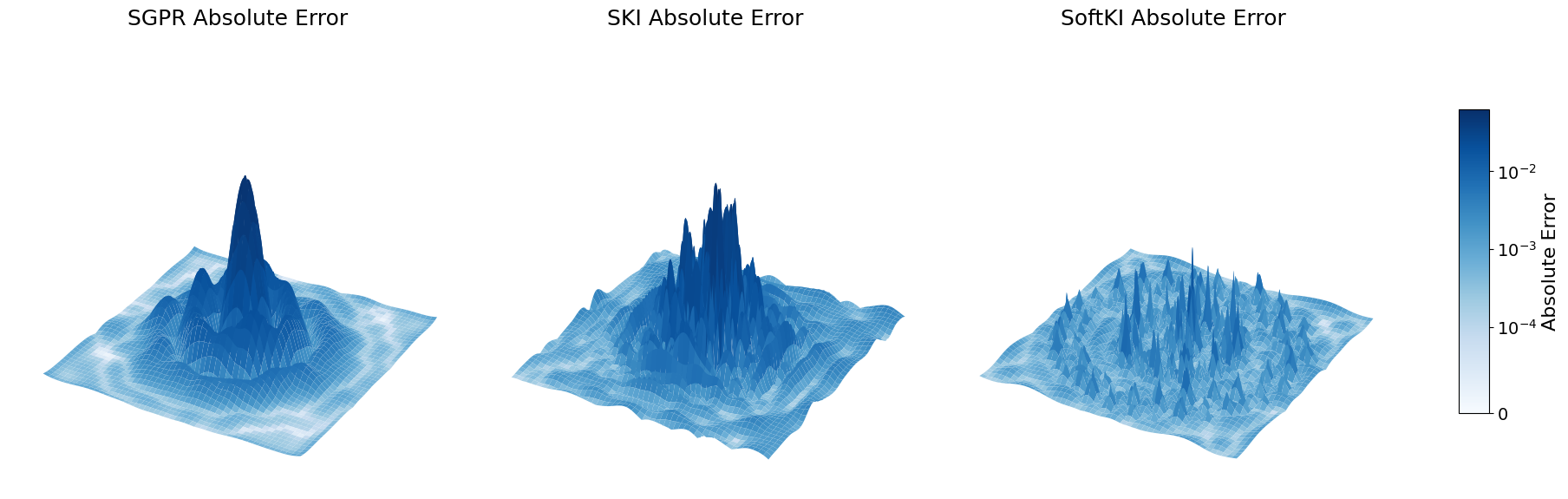}
    \caption{\textbf{(Absolute Error Surfaces):} Comparison of the absolute error on the training set domain from Figure~\ref{fig:inducing_vs_interpolation} visualized as surfaces.}
    \label{fig:err_surface}
\end{figure}

In Section~\ref{subsec:meth:learn} we present an illustrative comparison of SGPR, SKI and \softgp{} on the task of learning the Ricker wavelet. Figure~\ref{fig:err_surface} visualizes the absolute error given as surfaces. Each model was trained for $100$ epochs using the \texttt{Adam} optimizer together with a \texttt{StepLR} learning‐rate scheduler. The dataset is comprised of $3,000$ training points and $200$ test points. We obtain the initial inducing and interpolation locations for both SGPR and \softgp{} by clustering the training inputs with \texttt{KMeans}.

\subsection{Supplemental to Experiments}
\label{app:exp:supp}

\begin{table*}[t]
    \centering
    \begin{tabular}{lrrllll}
    \toprule
    Dataset & $n$ & $d$ & \textcolor{exactgray}{Exact GP} & SoftKI $m=512$ & SGPR $m=512$& SVGP $m=1024$\\
    \midrule
    \texttt{3droad} & 391386 & 3 & \textcolor{exactgray}{$--$} & $\mathbf{5.586 \pm 0.120}$ & $--$ & $8.529 \pm 0.216$ \\
    \texttt{Kin40k} & 36000 & 8 & \textcolor{exactgray}{$0.720 \pm 0.057$} & $0.453 \pm 0.012$ & $\mathbf{0.019 \pm 0.000}$ & $0.823 \pm 0.010$ \\
    \texttt{Protein} & 41157 & 9 & \textcolor{exactgray}{$0.612 \pm 0.015$} & $0.520 \pm 0.006$ & $\mathbf{0.020 \pm 0.000}$ & $0.904 \pm 0.020$ \\
    \texttt{Houseelectric} & 1844352 & 11 & \textcolor{exactgray}{$--$} & $\mathbf{26.066 \pm 1.335}$ & $--$ & $40.330 \pm 1.182$ \\
    \texttt{Bike} & 15641 & 17 & \textcolor{exactgray}{$0.112 \pm 0.001$} & $0.227 \pm 0.011$ & $\mathbf{0.013 \pm 0.000}$ & $0.367 \pm 0.006$ \\
    \texttt{Elevators} & 14939 & 18 & \textcolor{exactgray}{$0.261 \pm 0.012$} & $0.178 \pm 0.004$ & $\mathbf{0.013 \pm 0.000}$ & $0.352 \pm 0.046$ \\
    \texttt{Keggdirected} & 43944 & 20 & \textcolor{exactgray}{$3.904 \pm 0.208$} & $0.554 \pm 0.004$ & $\mathbf{0.021 \pm 0.000}$ & $0.950 \pm 0.011$ \\
    \texttt{Pol} & 13500 & 26 & \textcolor{exactgray}{$0.340 \pm 0.023$} & $0.171 \pm 0.010$ & $\mathbf{0.012 \pm 0.001}$ & $0.325 \pm 0.017$ \\
    \texttt{Keggundirected} & 57247 & 27 & \textcolor{exactgray}{$4.337 \pm 0.402$} & $0.722 \pm 0.043$ & $\mathbf{0.026 \pm 0.000}$ & $1.217 \pm 0.047$ \\
    \texttt{Buzz} & 524925 & 77 & \textcolor{exactgray}{$--$} & $\mathbf{7.107 \pm 0.053}$ & $--$ & $10.974 \pm 0.131$ \\
    \texttt{Song} & 270000 & 90 & \textcolor{exactgray}{$--$} & $\mathbf{3.047 \pm 0.055}$ & $--$ & $5.851 \pm 0.185$ \\
    \texttt{Slice} & 48150 & 385 & \textcolor{exactgray}{$--$} & $1.363 \pm 0.004$ & $\mathbf{0.029 \pm 0.000}$ & $1.069 \pm 0.014$ \\
    \bottomrule
    \end{tabular}
    \caption{Timing per epoch of hyperparameter optimization in seconds.}
    \label{tab:suppexp:timing}
\end{table*}

\begin{table*}[t]
    \centering
    \begin{tabular}{lrrllll}
    \toprule
    Dataset & $n$ & $d$ & \textcolor{exactgray}{Exact GP} & SoftKI $m{=}512$ & SGPR $m{=}512$ & SVGP $m{=}1024$ \\
    \midrule
    \texttt{Ac-ala3-nhme} & 76598 & 126 & \textcolor{exactgray}{$21.300 \pm 0.502$} & $1.113 \pm 0.095$ & $\mathbf{0.035 \pm 0.000}$ & $3.257 \pm 0.406$ \\
    \texttt{Dha} & 62777 & 168 & \textcolor{exactgray}{$36.841 \pm 0.375$} & $1.195 \pm 0.017$ & $\mathbf{0.031 \pm 0.000}$ & $2.364 \pm 0.078$ \\
    \texttt{Stachyose} & 24544 & 261 & \textcolor{exactgray}{$15.747 \pm 0.172$} & $0.534 \pm 0.011$ & $\mathbf{0.017 \pm 0.001}$ & $0.924 \pm 0.029$ \\
    \texttt{At-at} & 18000 & 354 & \textcolor{exactgray}{$2.669 \pm 0.036$} & $0.365 \pm 0.012$ & $\mathbf{0.014 \pm 0.000}$ & $0.699 \pm 0.008$ \\
    \texttt{At-at-cg-cg} & 9137 & 354 & \textcolor{exactgray}{$97.700 \pm 4.388$} & $0.295 \pm 0.027$ & $\mathbf{0.013 \pm 0.000}$ & $0.368 \pm 0.023$ \\
    \texttt{Buckyball-catcher} & 5491 & 444 & \textcolor{exactgray}{$40.875 \pm 3.186$} & $0.175 \pm 0.005$ & $\mathbf{0.011 \pm 0.002}$ & $0.221 \pm 0.007$ \\
    \texttt{DW-nanotube} & 4528 & 1110 & \textcolor{exactgray}{$298.096 \pm 1.270$} & $0.344 \pm 0.026$ & $\mathbf{0.012 \pm 0.002}$ & $0.198 \pm 0.002$ \\
    \bottomrule
    \end{tabular}
    \caption{Timing per epoch of hyperparameter optimization in seconds on \texttt{MD22}.}
    \label{tab:suppexp:timing_m22}
\end{table*}

\noindent\textbf{Hardware details.}
We run all experiments on a single Nvidia RTX 3090 GPU which has 24Gb of VRAM. A GPU with more VRAM can support larger datasets. Our machine uses an Intel i9-10900X CPU at 3.70GHz with 10 cores. This primarily affects the timing of \softgp{} and SVGP which use batched hyperparameter optimization, and thus, move data on and off the GPU more frequently than SGPR.

\noindent\textbf{Timing.}
Table~\ref{tab:suppexp:timing} compares the average training time per epoch in seconds to for hyperparameter optimization of \softgp{} vs SVGP. Note that SGPR is much faster but does not support batched stochastic optimization, and thus, is limited to smaller datasets. We observe that SVGP and \softgp{} have similar performance characteristics due to the mini-batch gradient descent approach that both SVGP and \softgp{} employ. SVGP requires slightly more compute since it additionally learns the parameters of a variational Gaussian distribution.

\subsection{Comparison with SKI-based Methods}
\label{app:ski}

\begin{table*}[t]
\centering
\begin{tabular}{llrrllll}
\toprule
 & Dataset & n & d & \textcolor{exactgray}{Exact GP} & SoftKI $m{=}512$ & SKIP & Simplex-SKI \\
\midrule
\multirow{4}{*}{\rotatebox{90}{RMSE}} & \texttt{Protein} & 41157 & 9 & \textcolor{exactgray}{0.511 ± 0.009} & 0.652 $\pm$ 0.012 & 0.817 ± 0.012 & \textbf{0.571 ± 0.003} \\
& \texttt{Houseelectric} & 1844352 & 11 & \textcolor{exactgray}{0.054 ± 0.000} & \textbf{0.052 $\pm$ 0.0} & $--$ & 0.079 ± 0.002 \\
& \texttt{Elevators} & 14939 & 18 & \textcolor{exactgray}{0.399 ± 0.011} & \textbf{0.423 $\pm$ 0.011} & 0.447 ± 0.037 & 0.510 ± 0.018 \\
& \texttt{Keggdirected} & 43944 & 20 & \textcolor{exactgray}{0.083 ± 0.001} & \textbf{0.089 $\pm$ 0.001} & 0.487 ± 0.005 & 0.095 ± 0.002 \\
\hline
\multirow{4}{*}{\rotatebox{90}{NLL}} & \texttt{Protein} & 41157 & 9 & \textcolor{exactgray}{0.960 ± 0.003} & \textbf{0.992 $\pm$ 0.017} & 1.213 ± 0.020 & 1.406 ± 0.048 \\
& \texttt{Houseelectric} & 1844352 & 11 & \textcolor{exactgray}{0.207 ± 0.001} & \textbf{0.251 $\pm$ 0.005} & $--$ & 0.756 ± 0.075 \\
& \texttt{Elevators} & 14939 & 18 & \textcolor{exactgray}{0.626 $\pm$ 0.043} & \textbf{0.372 $\pm$ 0.004} & 0.869 ± 0.074 & 1.600 ± 0.020 \\
& \texttt{Keggdirected} & 43944 & 20 & \textcolor{exactgray}{0.838 ± 0.031} & \textbf{0.254 $\pm$ 0.156} & 0.996 ± 0.013 & 0.797 ± 0.031 \\
\bottomrule
\end{tabular}

\caption{Comparison with SKI-based methods. Numbers for Exact, SKIP and Simplex-SKI are taken from~\citep{kapoor2021skiing}. Best approximate GP numbers are bolded.}
\label{tab:app:ski}
\end{table*}

As we mentioned in the main text, it is not possible to apply SKI to a majority of the datasets in our experiments due to dimensionality scaling issues. Nevertheless, there are some datasets with smaller dimensionality that SKI-variants such as SKIP~\citep{gardner2018} and Simplex-SKI~\citep{kapoor2021skiing} can be applied to.

To compare against these methods, we attempt to replicate the experimental settings reported by~\citep{kapoor2021skiing} so that we can directly compare against their reported numbers. The reason for doing so is because Simplex-SKI relies on custom Cuda kernels for an efficient implementation. Consequently, it is difficult to replicate on more current hardware and software stack as both the underlying GPU architectures and PyTorch have evolved. From this perspective, we view the higher-level PyTorch and GPyTorch implementation of \softgp{} as one practical strength since it can rely on these frameworks to abstract away these details. 

In the original Simplex-SKI experiments, the methods are tested on $4/9$, $2/9$, and $4/9$ splits for training, validation, and testing respectively. For \softgp{}, we simply train on $4/9$ of the dataset, discard the validation set, and test on the rest. We report the test RMSE and test NLL achieved in Table~\ref{tab:app:ski}. We find that \softgp{} is competitive with Simplex-SKI.

\subsection{Lengthscale and Temperature }
\label{app:exp:length_T}

\begin{figure*}[ht]
    \centering
    \subfigure[\texttt{3droad}]{
        \includegraphics[scale=0.21]{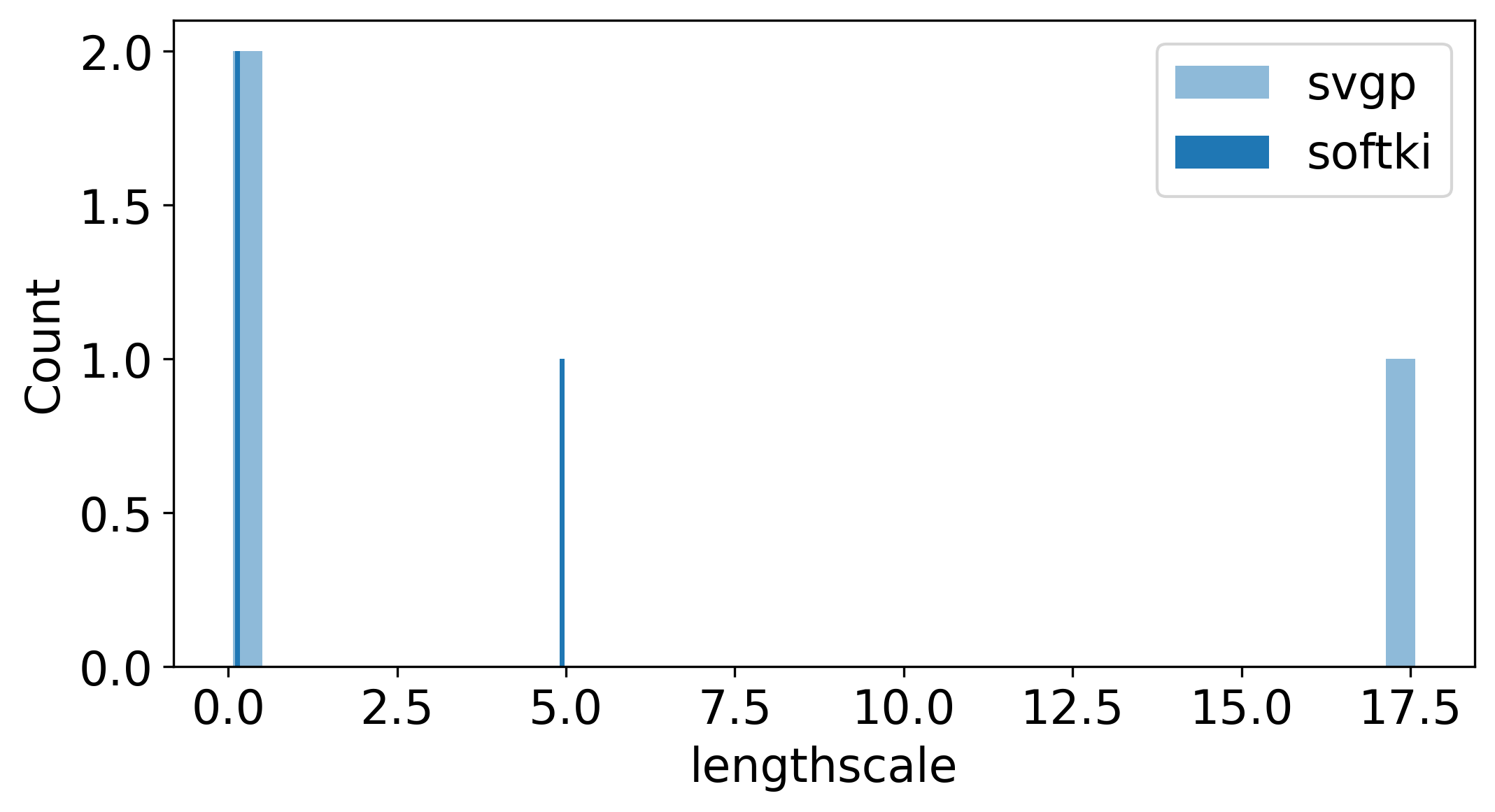}
    }
    \subfigure[\texttt{kin40k}]{
        \includegraphics[scale=0.21]{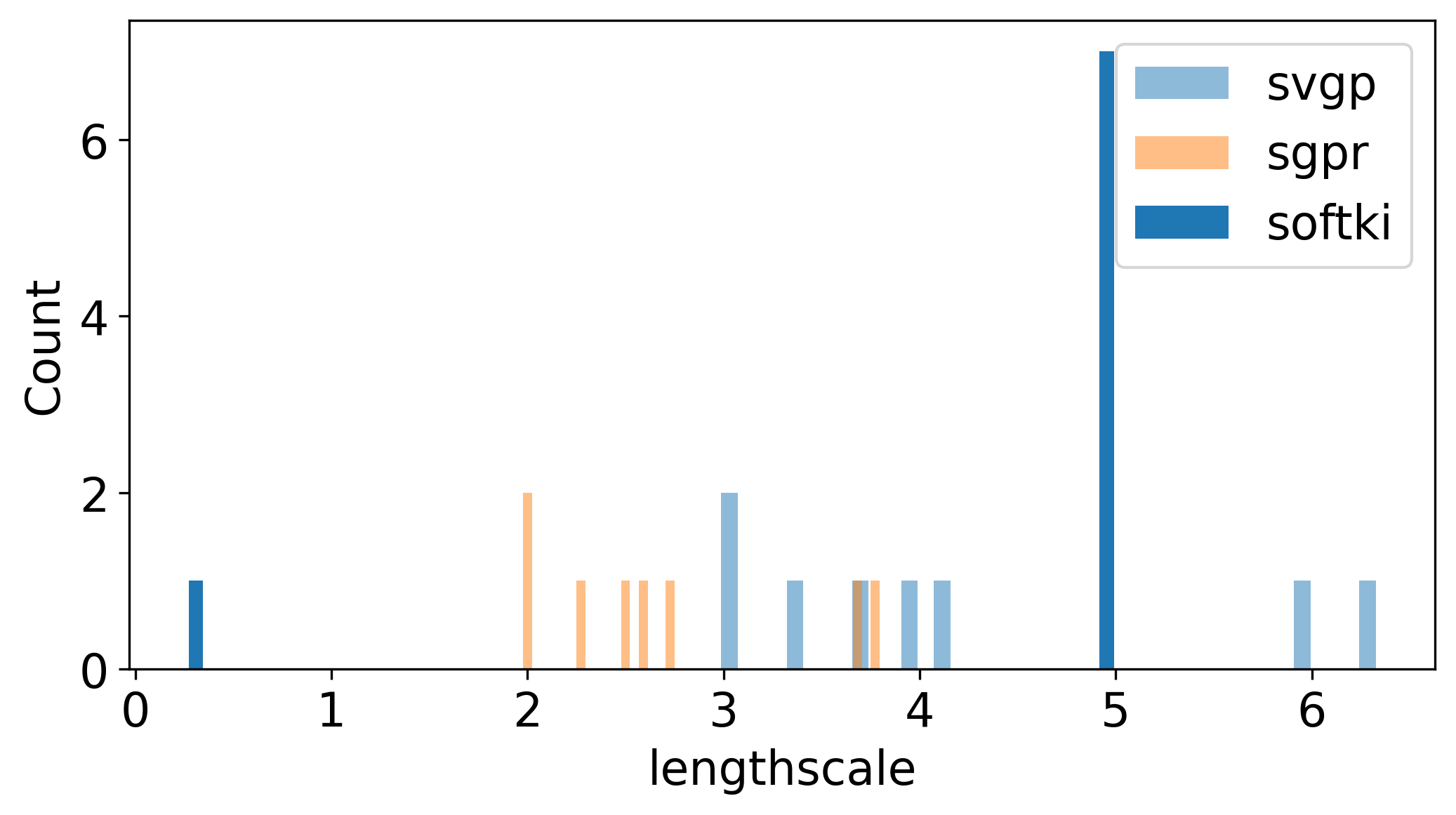}
    }
    \subfigure[\texttt{protein}]{
        \includegraphics[scale=0.21]{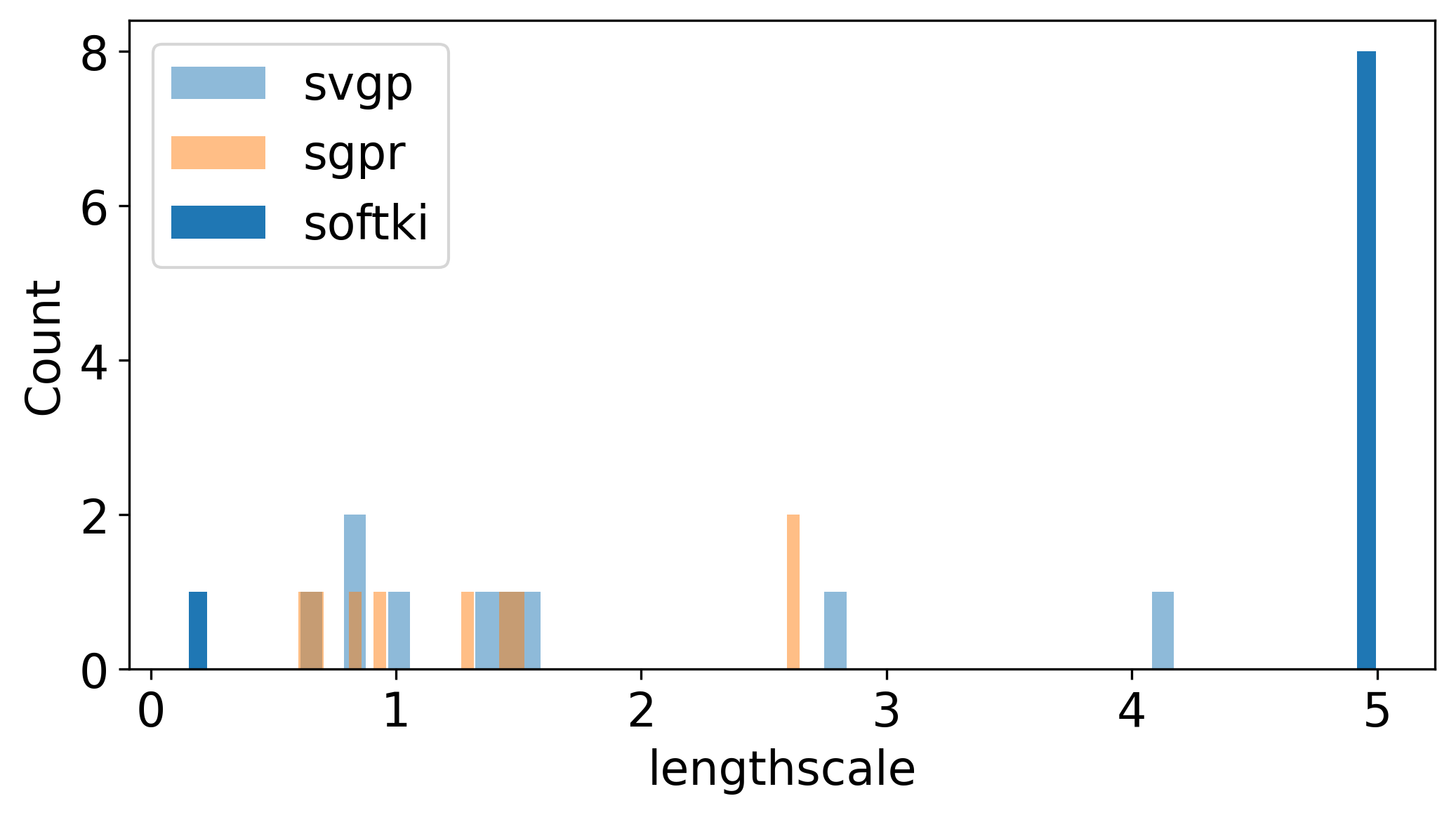}
    }
    \subfigure[\texttt{houseelectric}]{
        \includegraphics[scale=0.21]{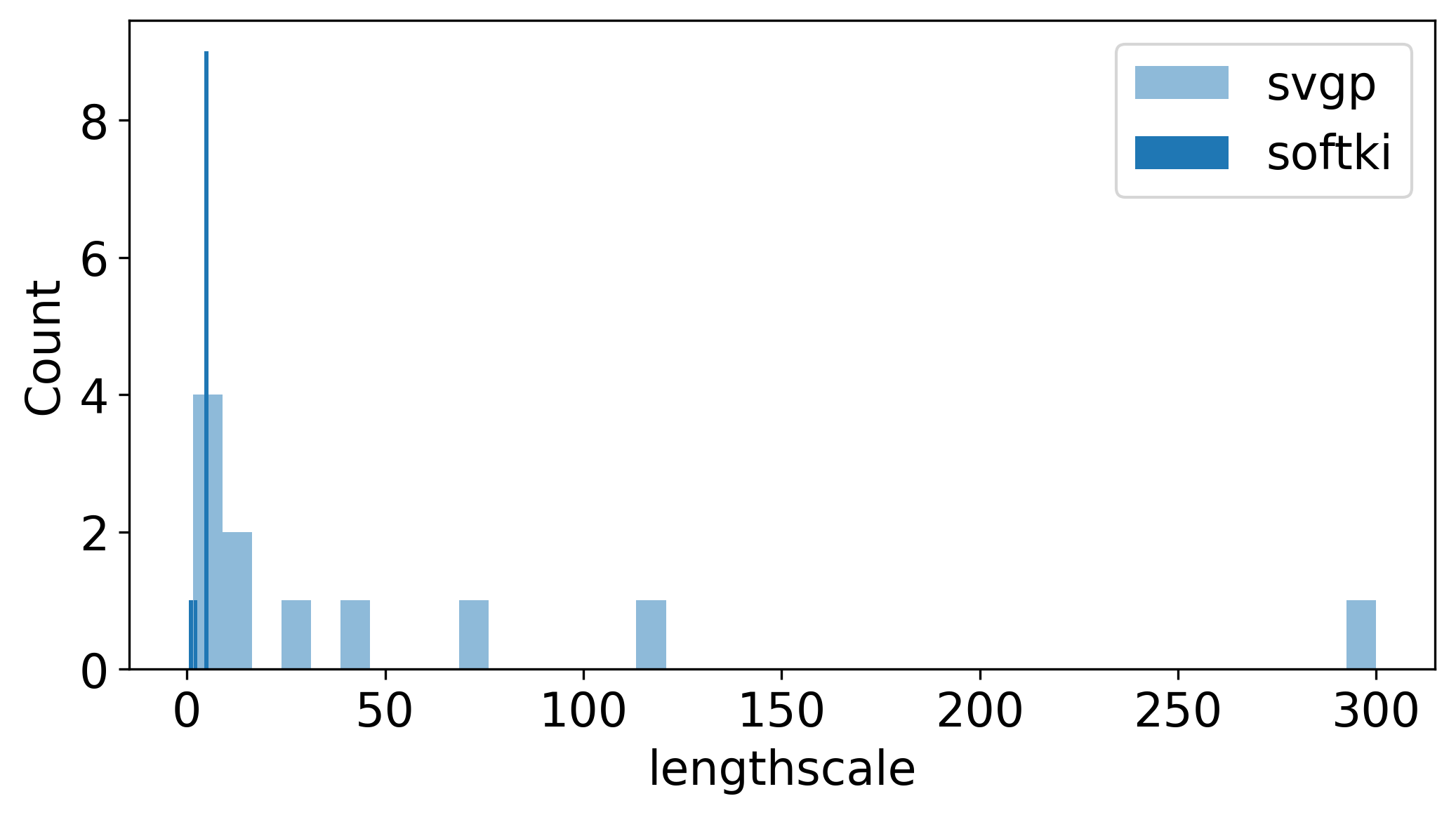}
    }
    \subfigure[\texttt{bike}]{
        \includegraphics[scale=0.21]{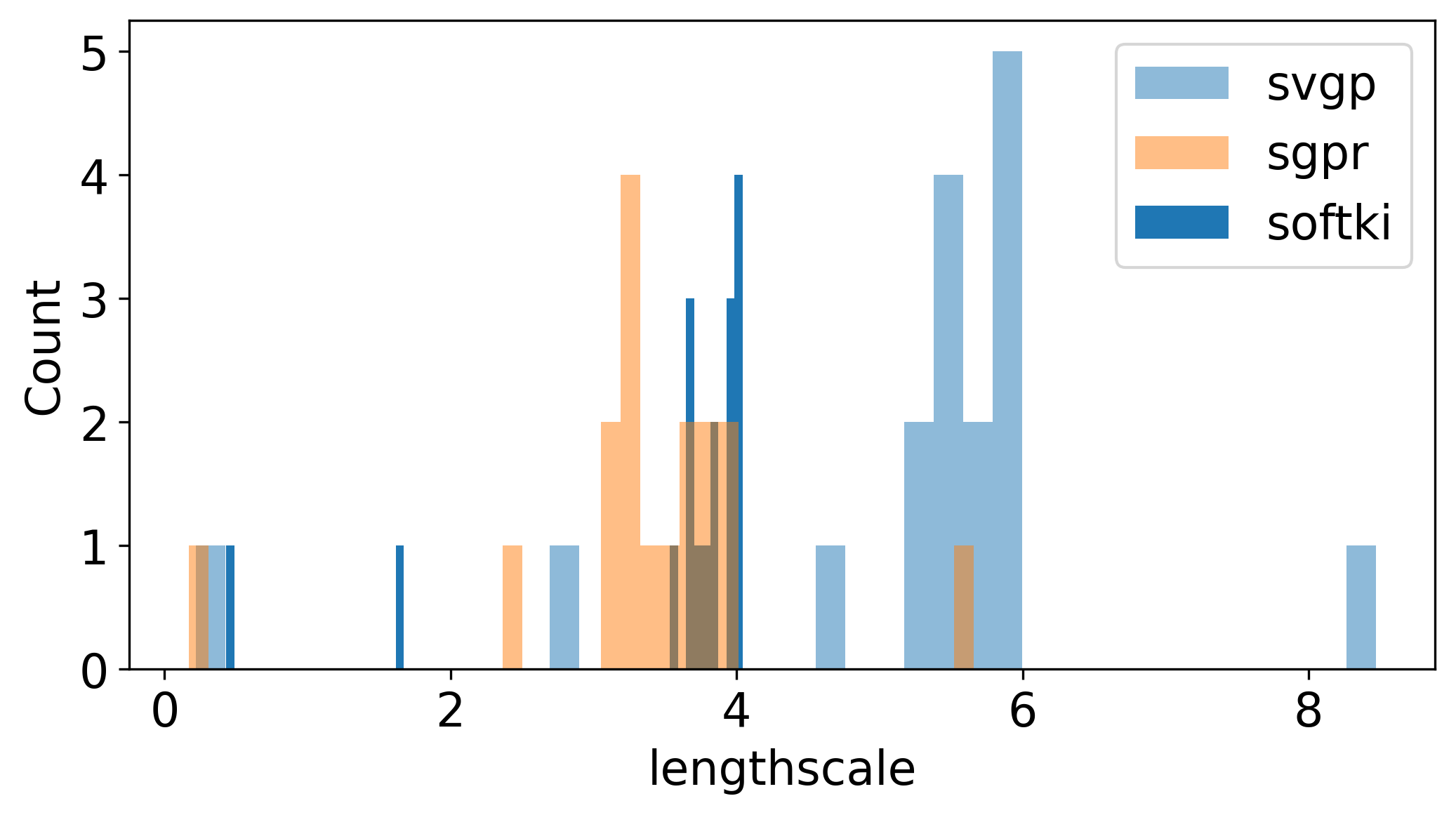}
    }
    \subfigure[\texttt{elevators}]{
        \includegraphics[scale=0.21]{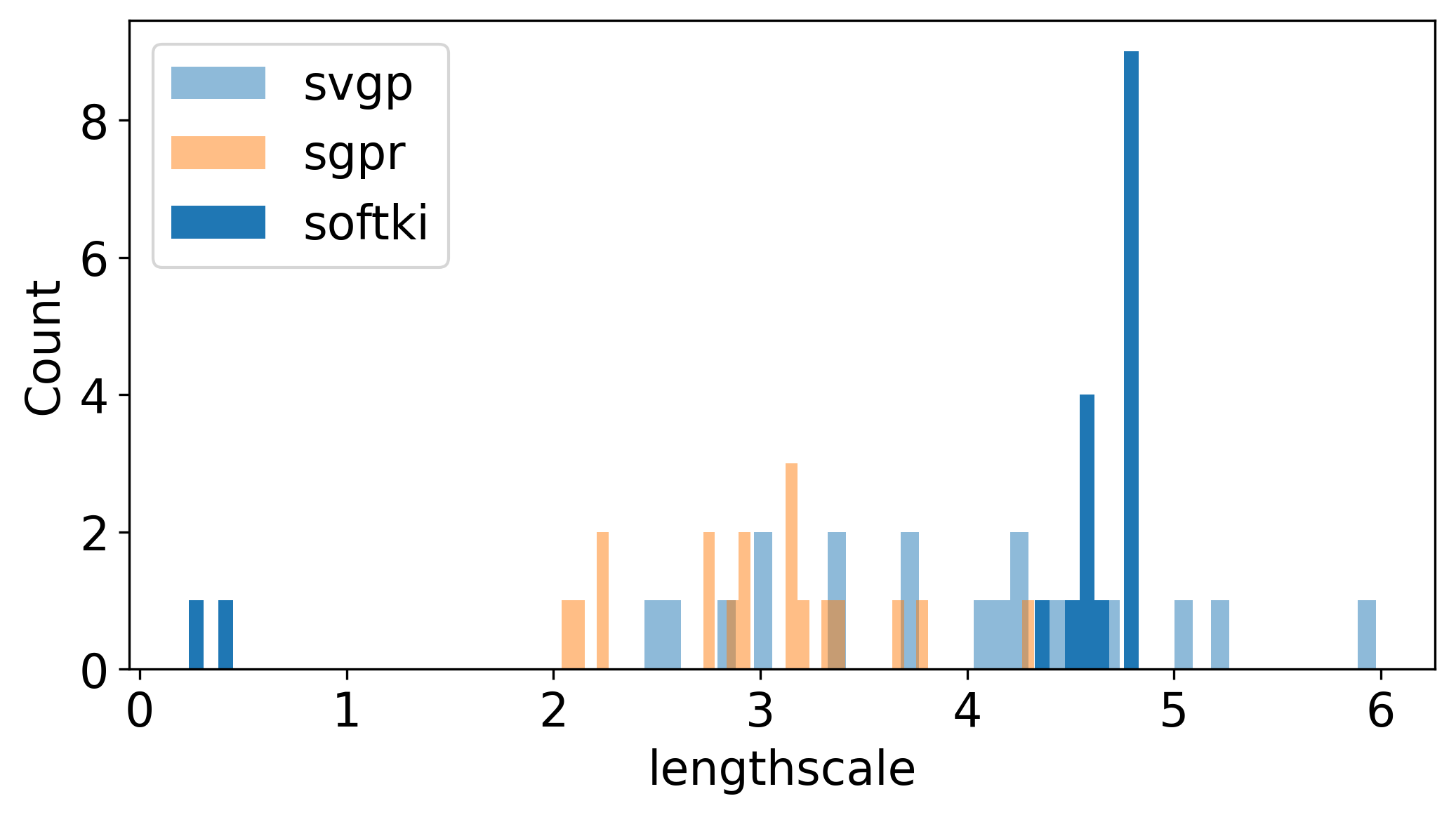}
    }    
    \subfigure[\texttt{keggdirected}]{
        \includegraphics[scale=0.21]{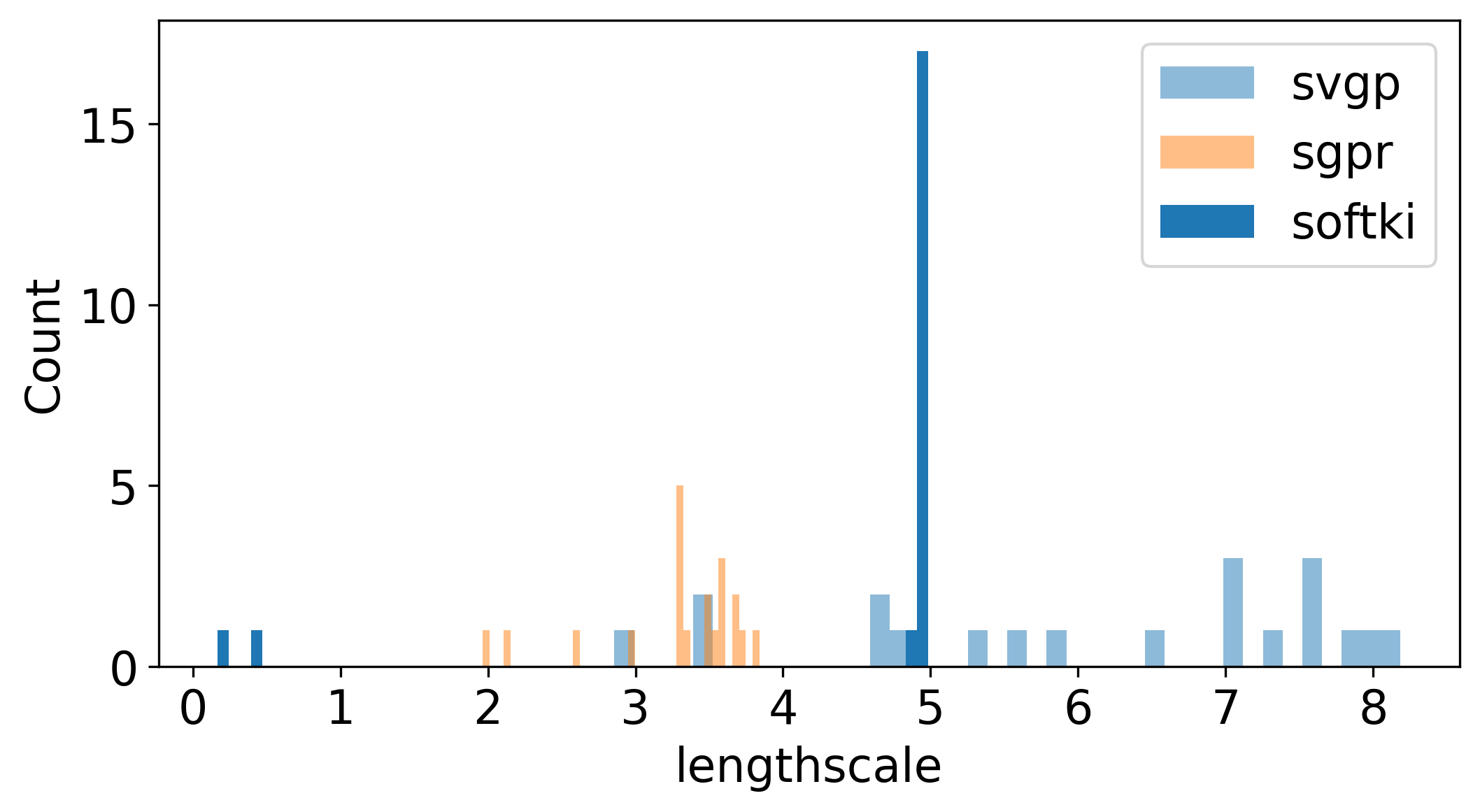}
    }
    \subfigure[\texttt{pol}]{
        \includegraphics[scale=0.21]{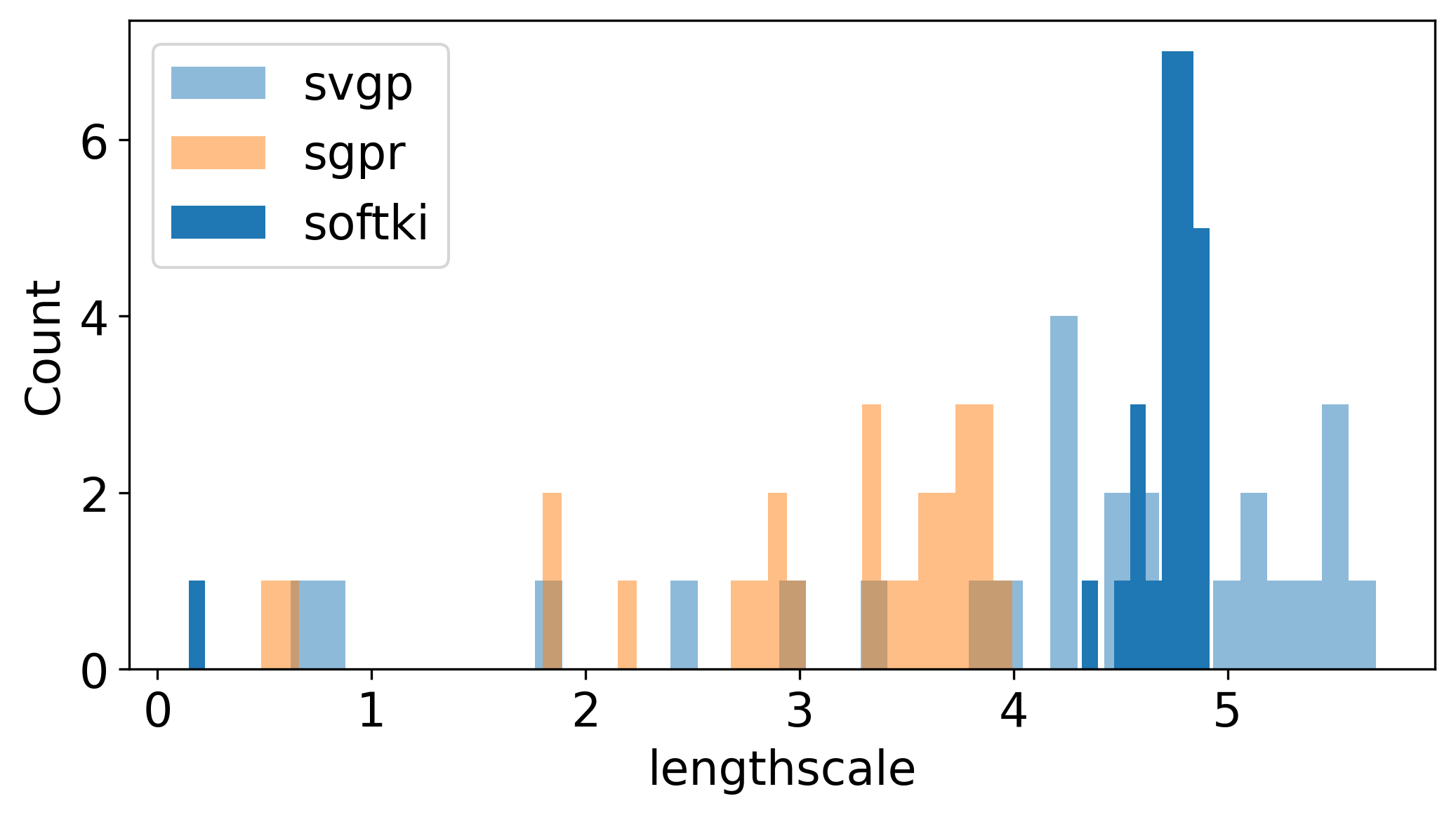}
    }
    \subfigure[\texttt{keggundirected}]{
        \includegraphics[scale=0.21]{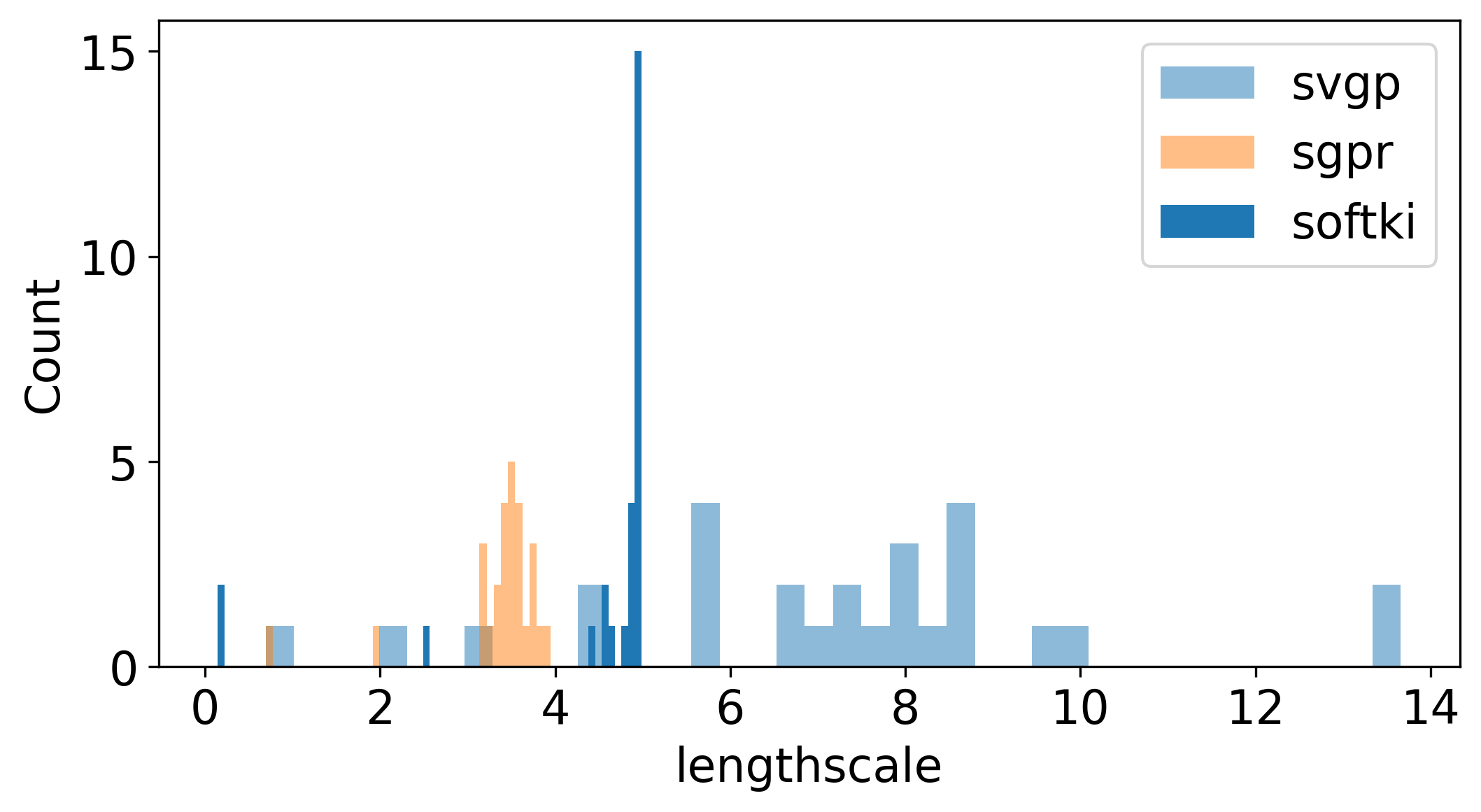}
    }
    \subfigure[\texttt{buzz}]{
        \includegraphics[scale=0.21]{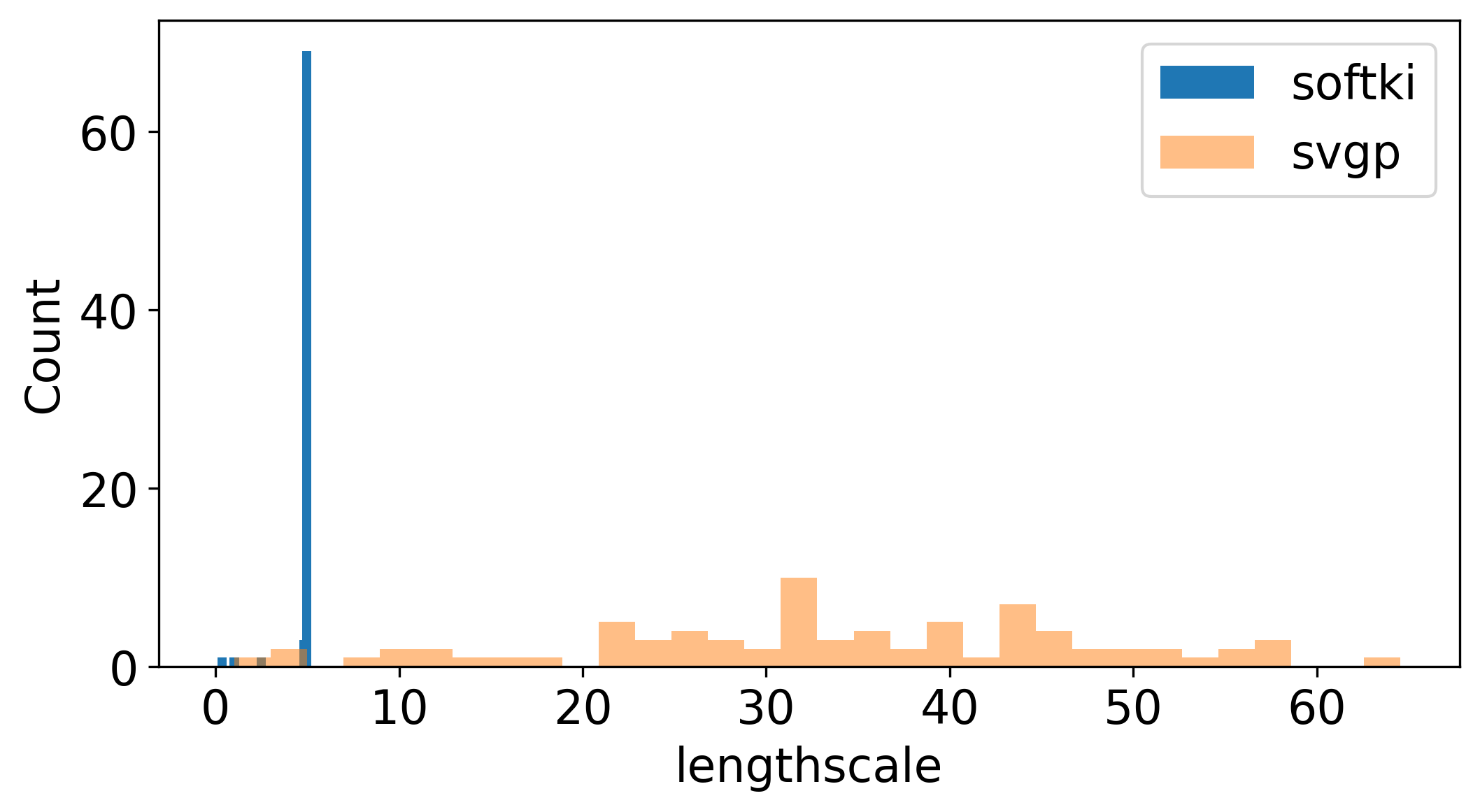}
    }
    \subfigure[\texttt{song}]{
        \includegraphics[scale=0.21]{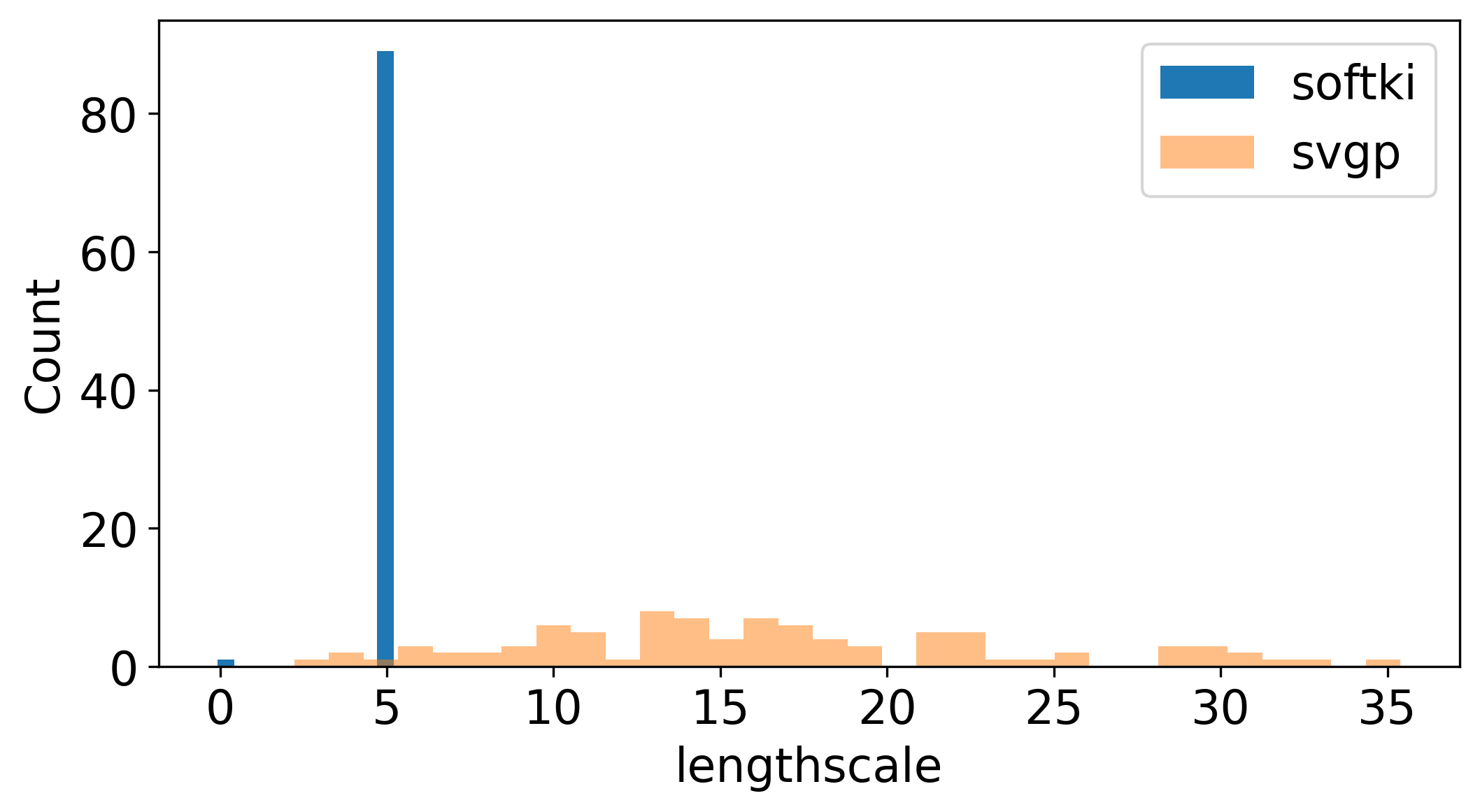}
    }
    \subfigure[\texttt{slice}]{
        \includegraphics[scale=0.21]{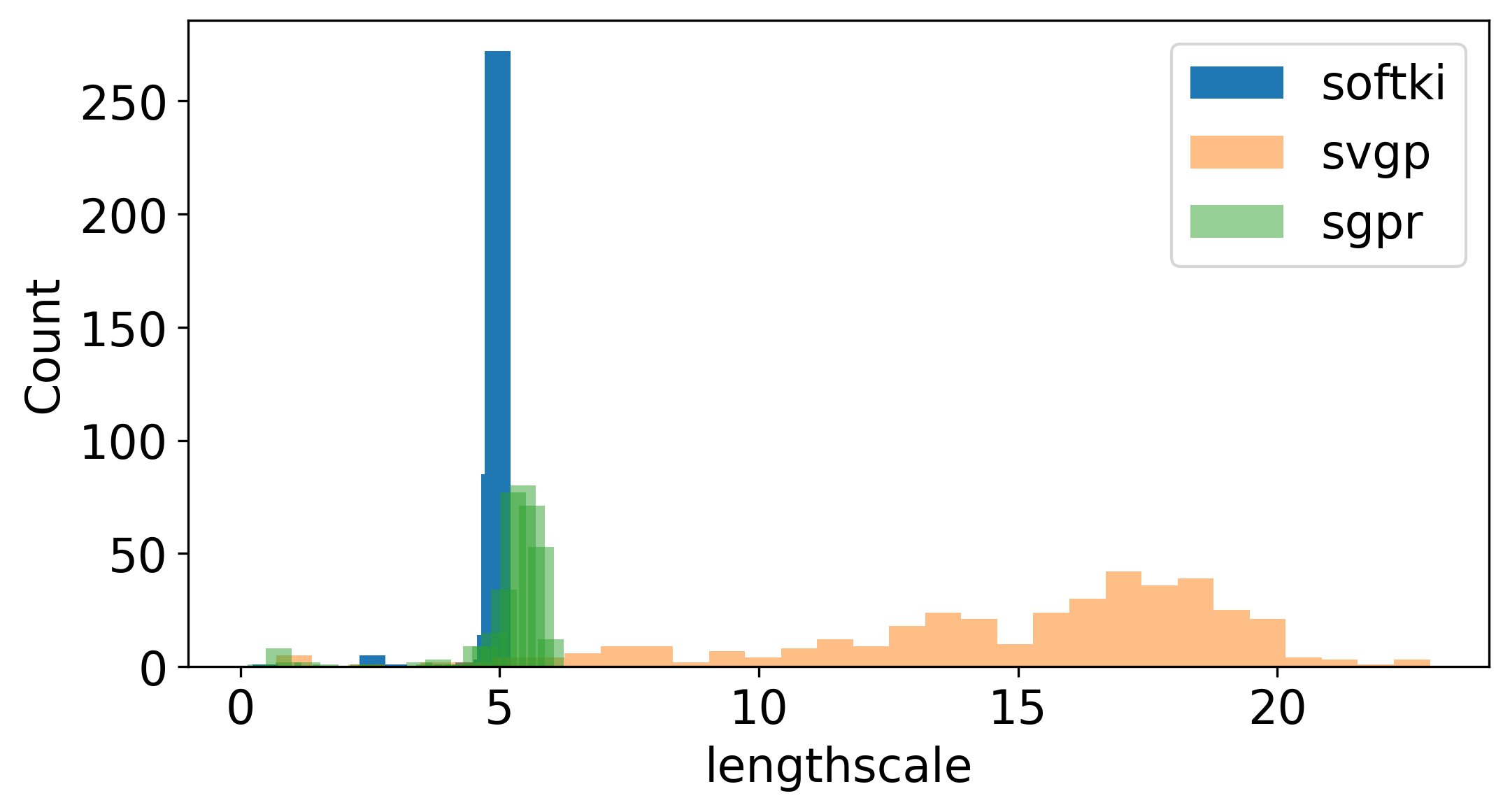}
    }
    \caption{Histogram of lengthscale distributions learned by automatic relevance detection (ARD) by different approximate gaussian process methods on the \texttt{uci} dataset.}
    \label{fig:supp:lengthscale}
\end{figure*}

\begin{figure*}[ht]
    \centering
    \subfigure[\texttt{3droad}]{
        \includegraphics[scale=0.21]{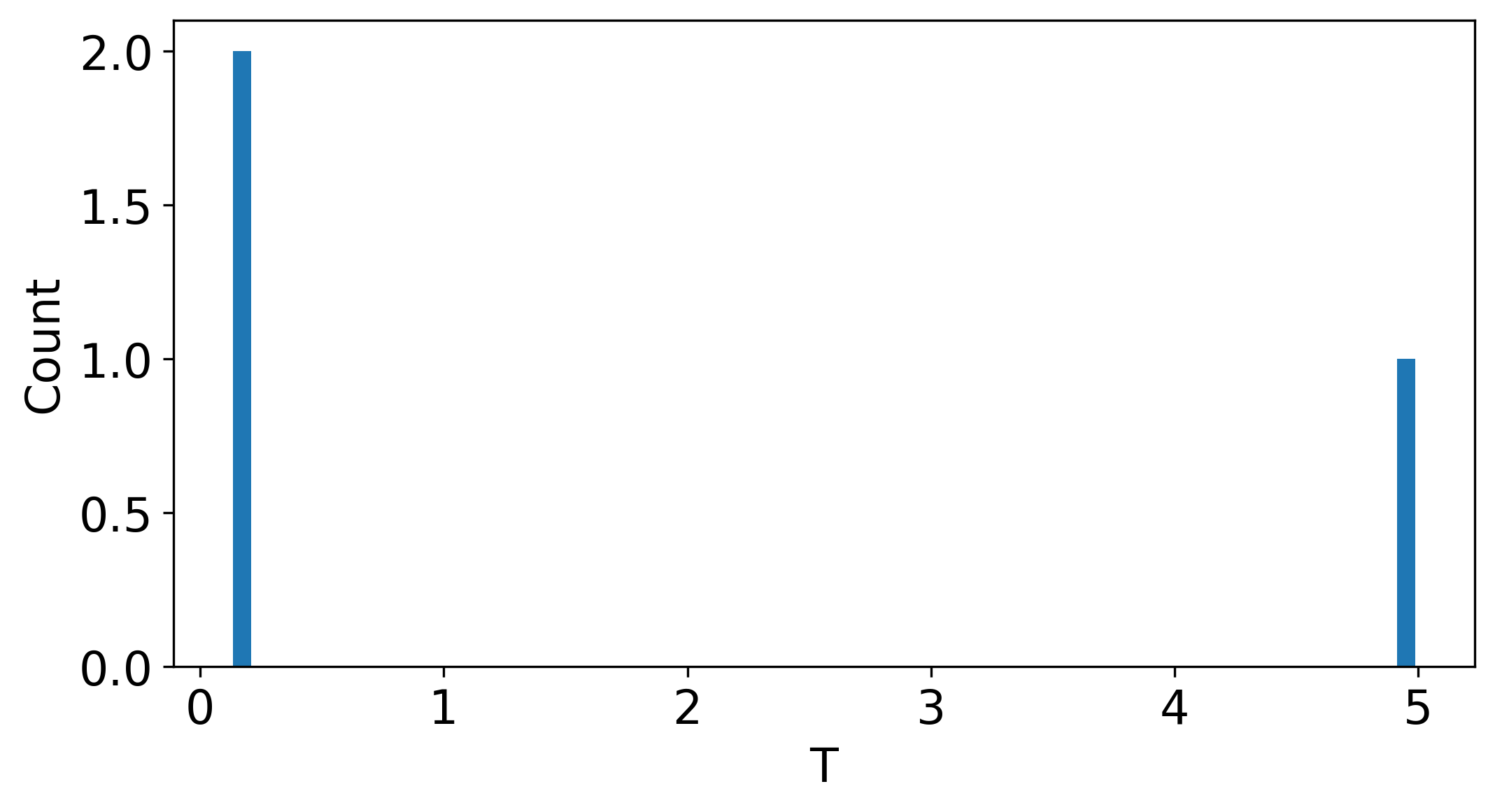}
    }
    \subfigure[\texttt{kin40k}]{
        \includegraphics[scale=0.21]{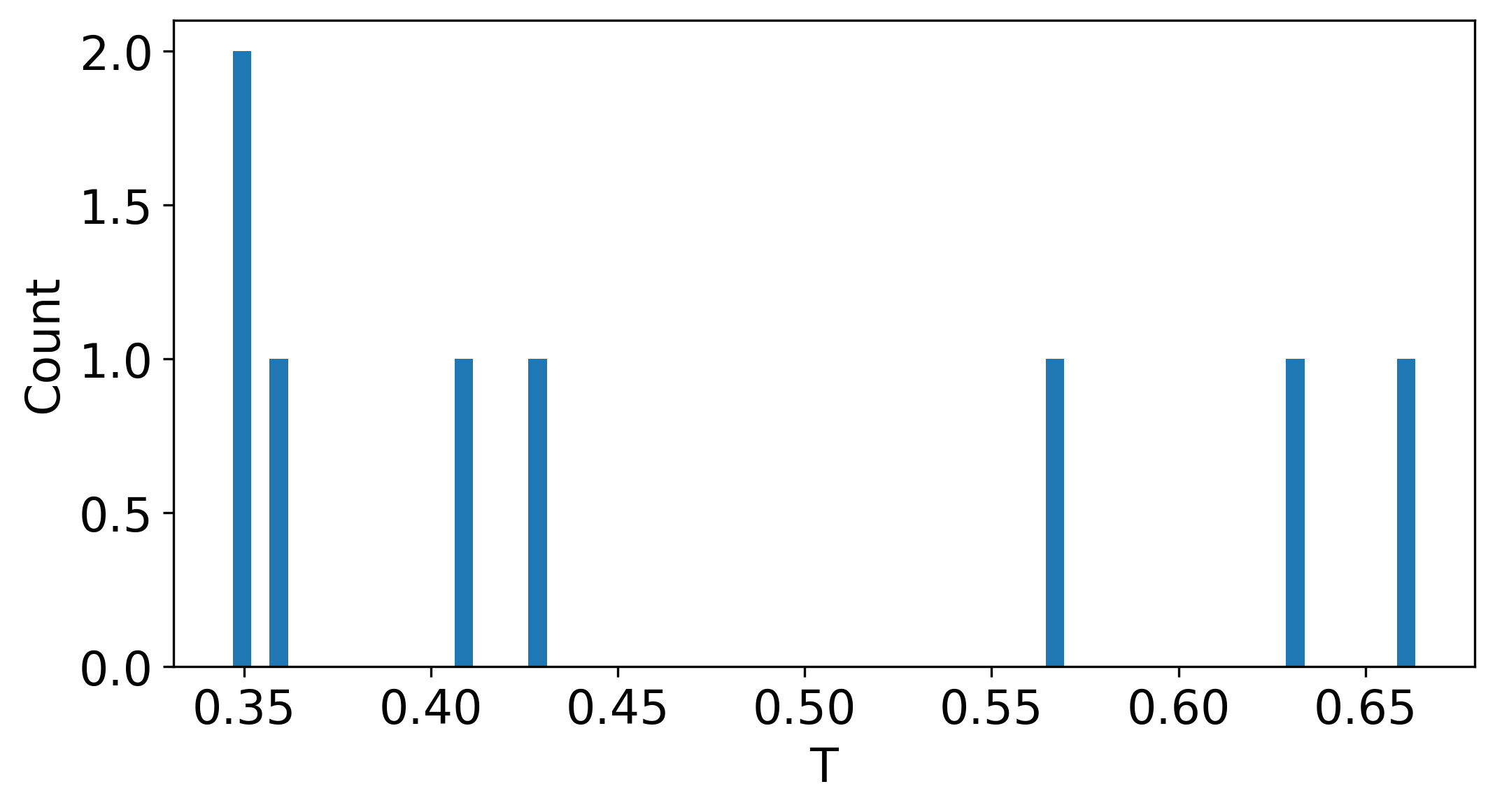}
    }
    \subfigure[\texttt{protein}]{
        \includegraphics[scale=0.21]{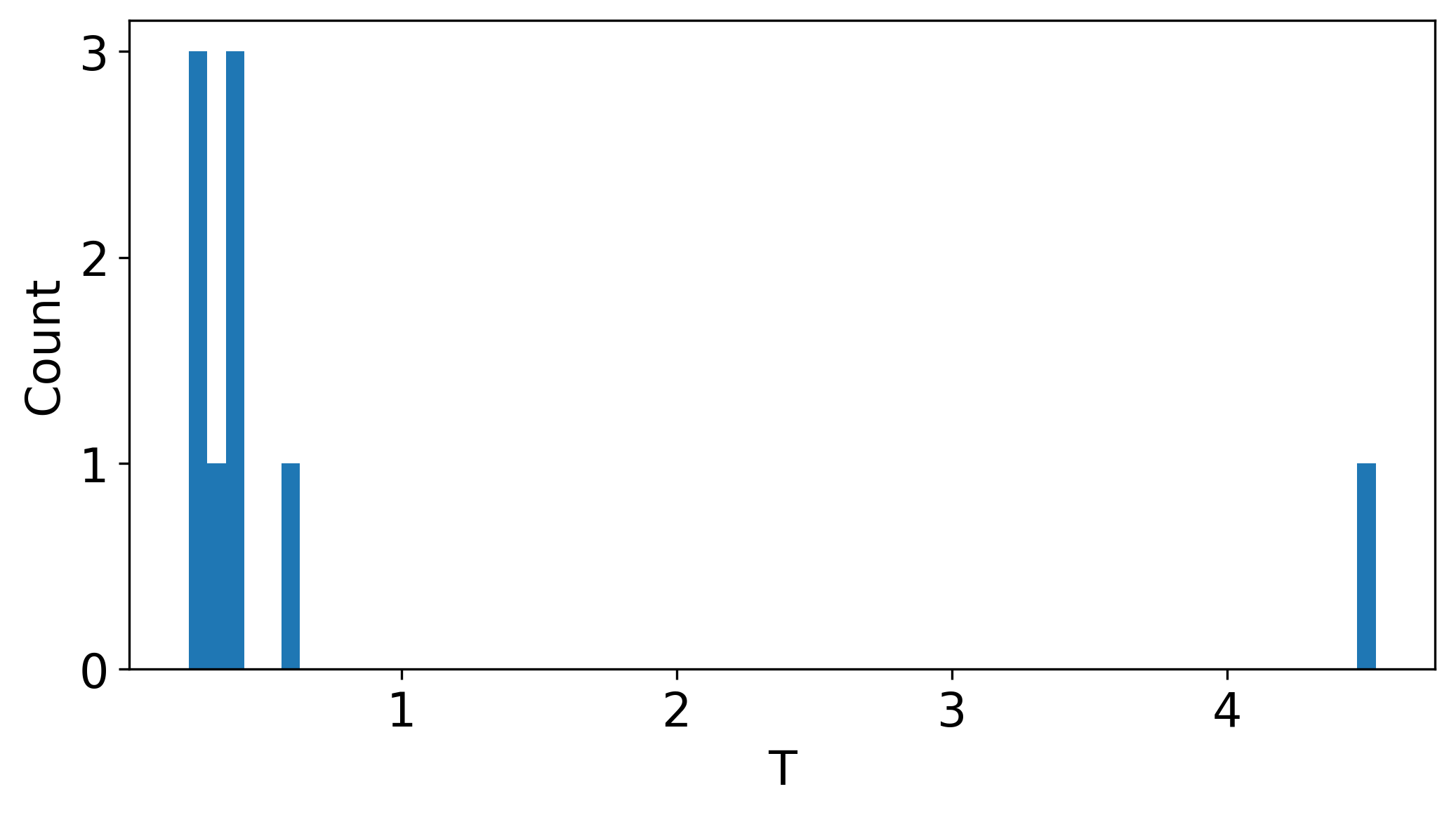}
    }
    \subfigure[\texttt{houseelectric}]{
        \includegraphics[scale=0.21]{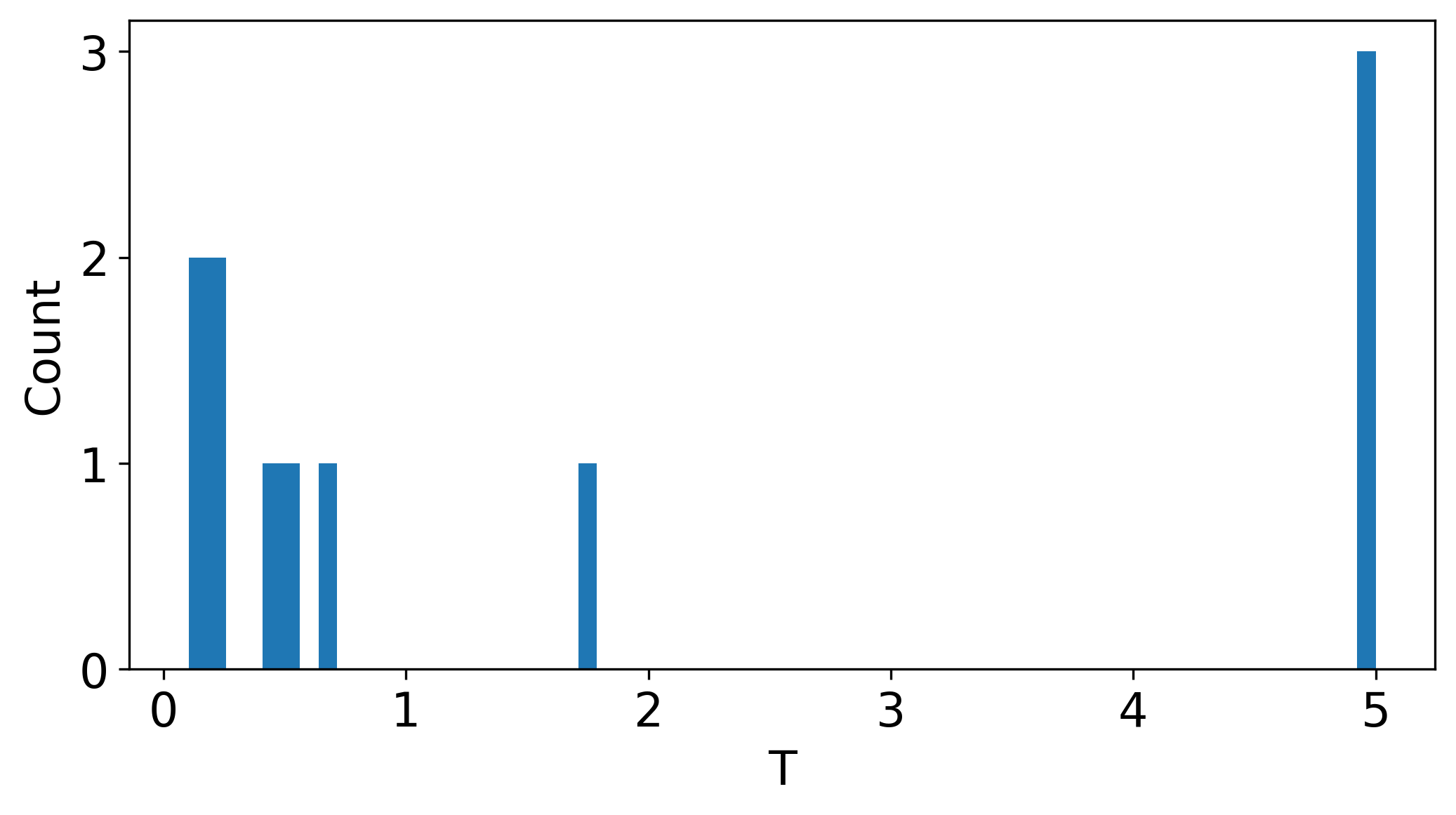}
    }
    \subfigure[\texttt{bike}]{
        \includegraphics[scale=0.21]{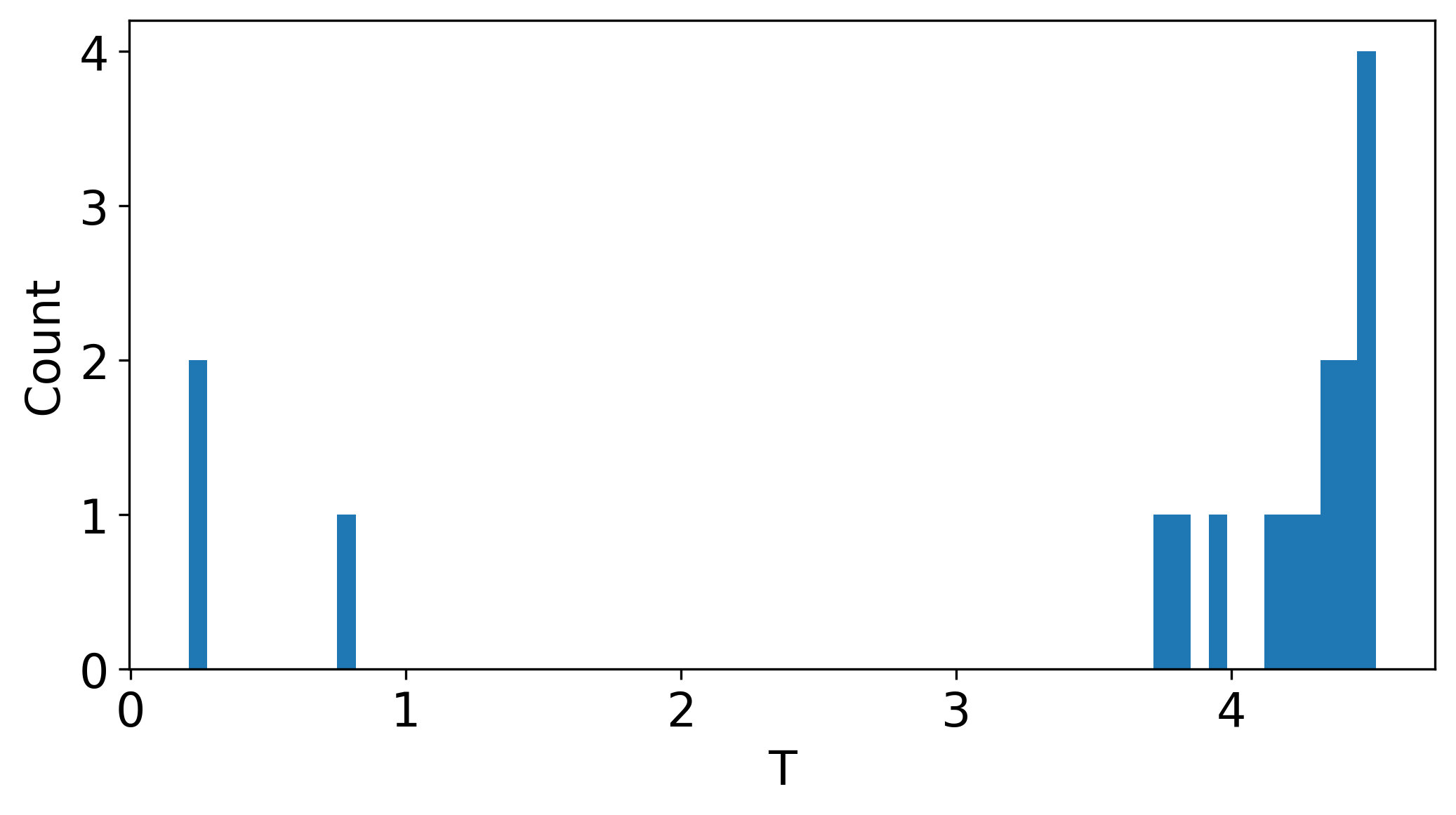}
    }
    \subfigure[\texttt{elevators}]{
        \includegraphics[scale=0.21]{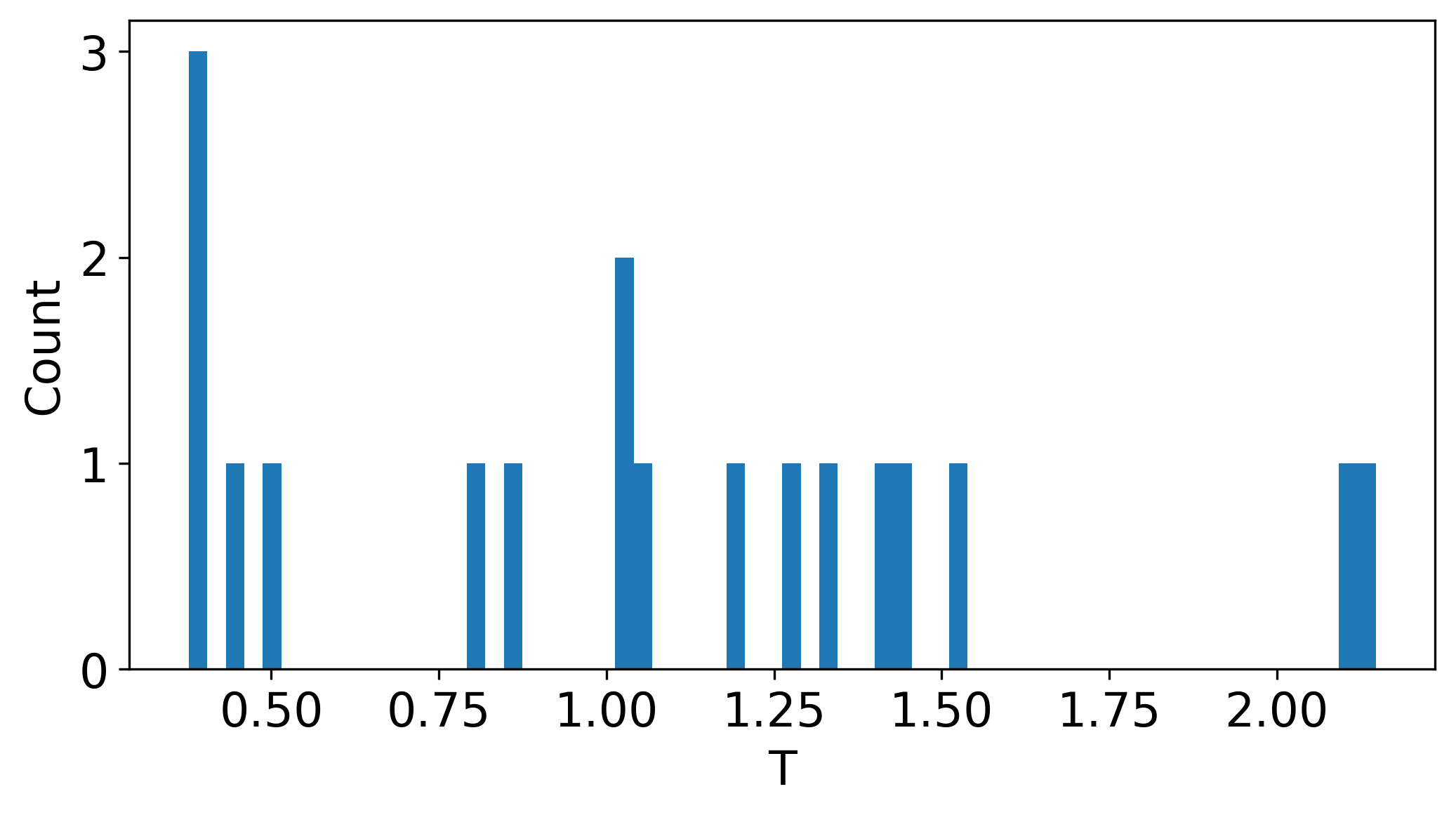}
    }    
    \subfigure[\texttt{keggdirected}]{
        \includegraphics[scale=0.21]{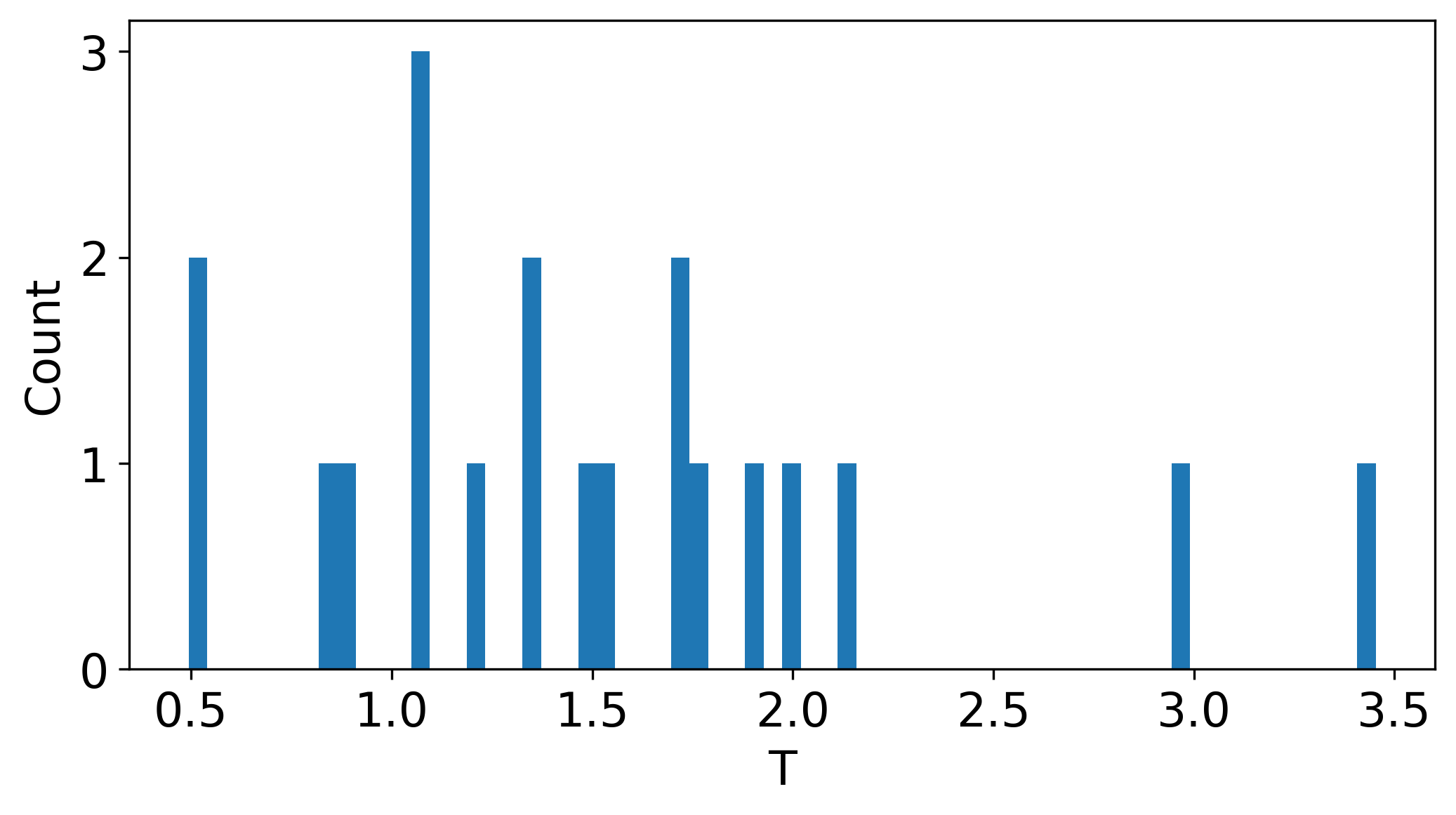}
    }
    \subfigure[\texttt{pol}]{
        \includegraphics[scale=0.21]{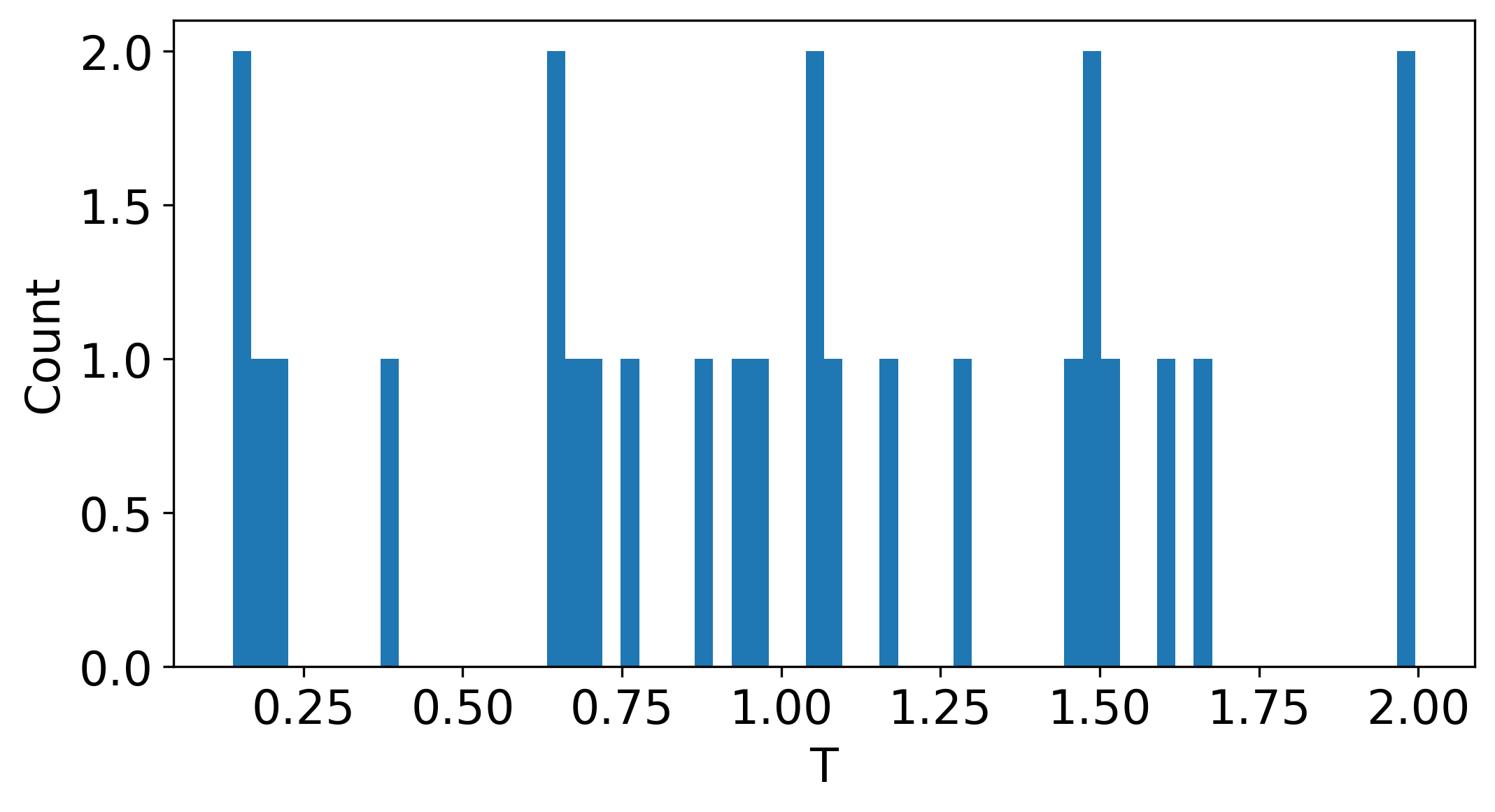}
    }
    \subfigure[\texttt{keggundirected}]{
        \includegraphics[scale=0.21]{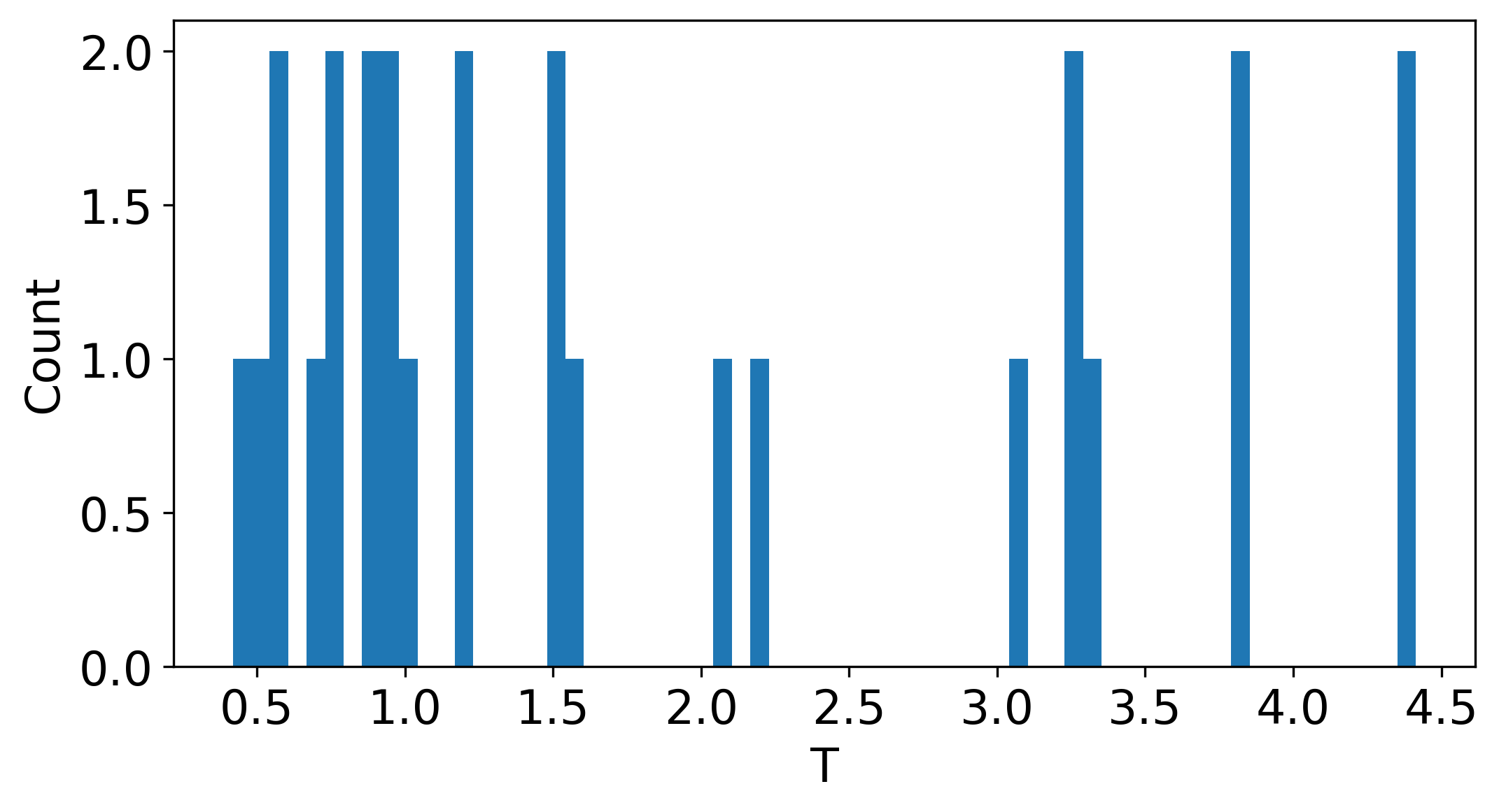}
    }
    \subfigure[\texttt{buzz}]{
        \includegraphics[scale=0.21]{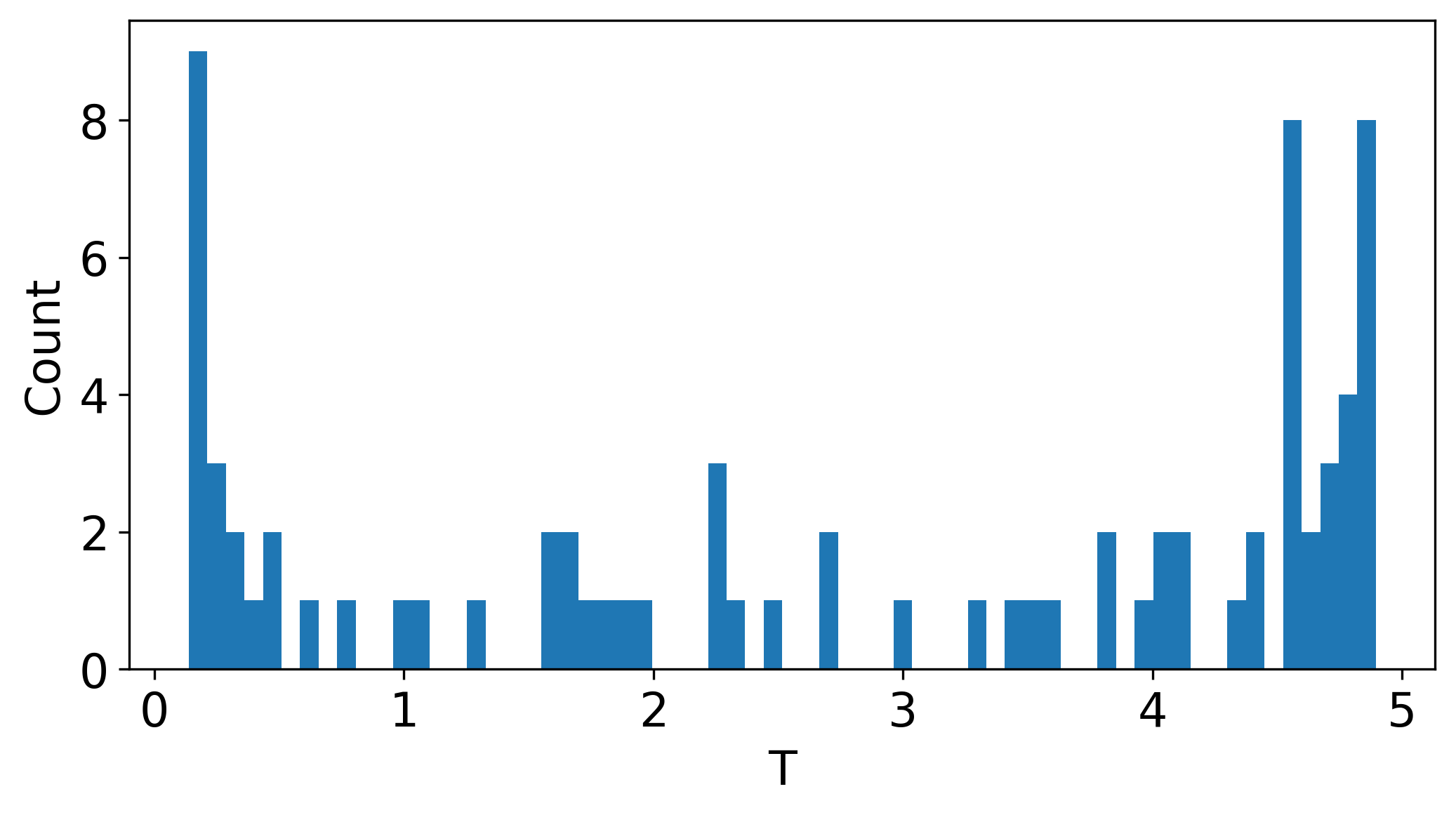}
    }
    \subfigure[\texttt{song}]{
        \includegraphics[scale=0.21]{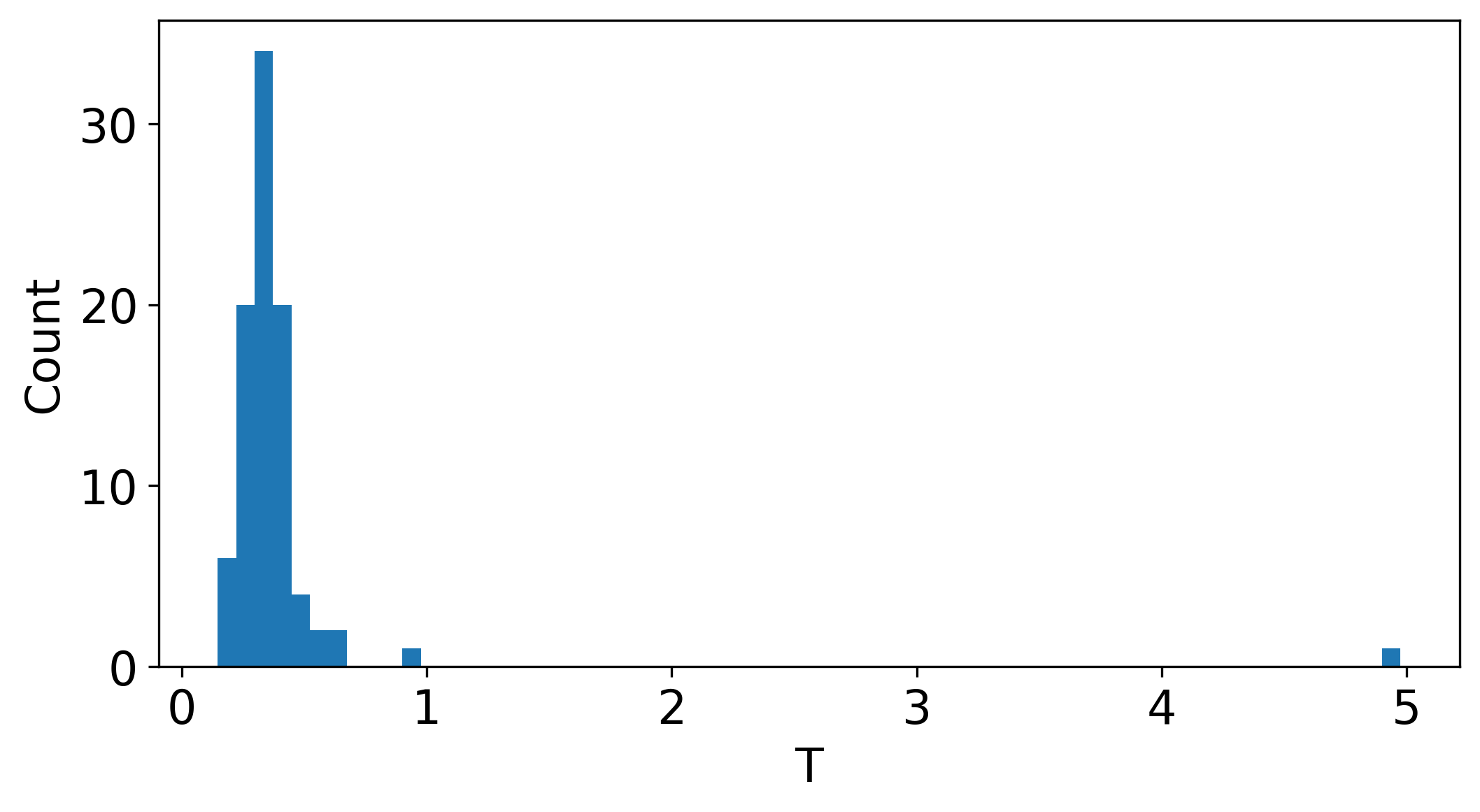}
    }
    \subfigure[\texttt{slice}]{
        \includegraphics[scale=0.21]{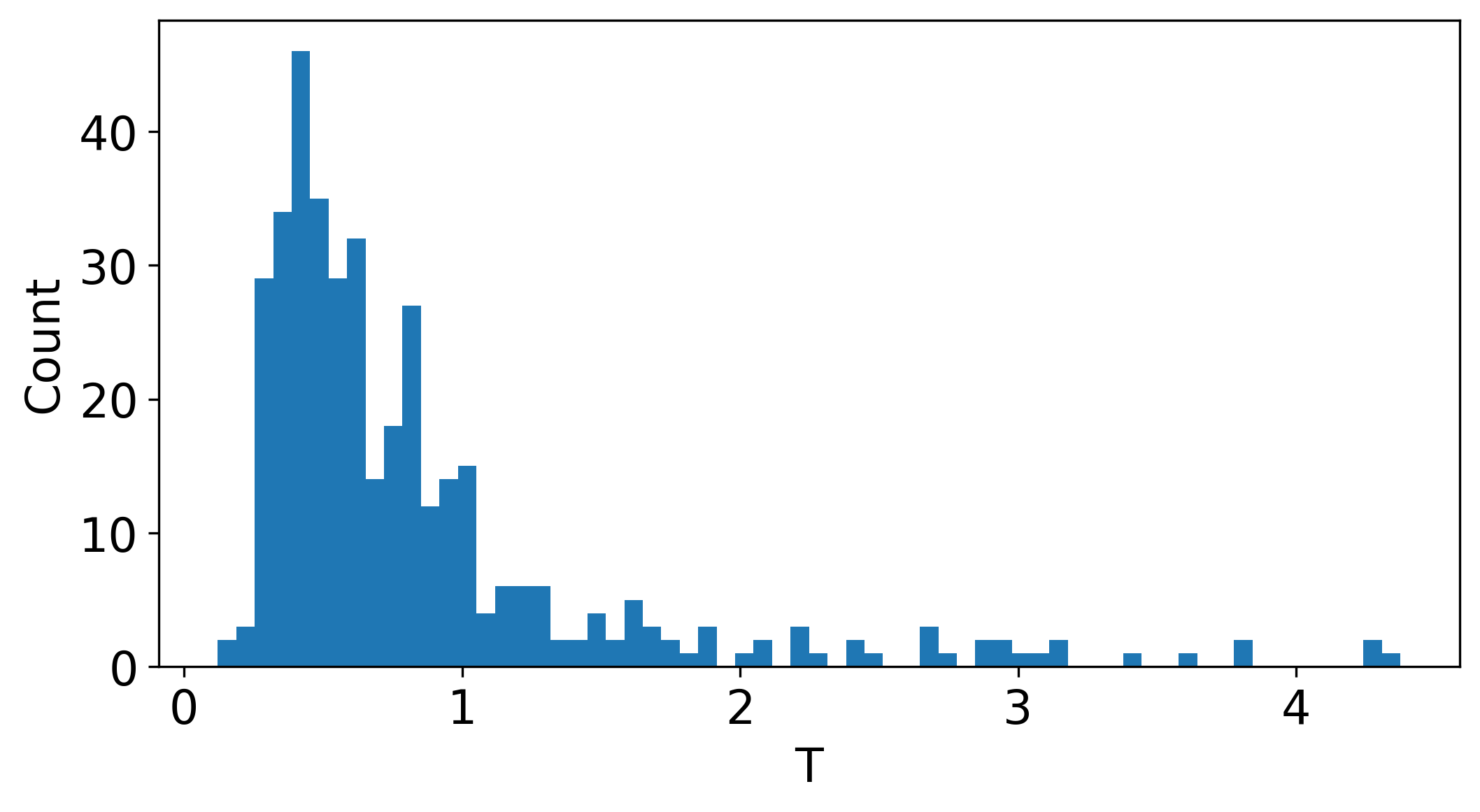}
    }
    \caption{Histogram of Temperatures $T$ learned by \softgp{}  on the \texttt{uci} dataset.}
    \label{fig:supp:T}
\end{figure*}

Figure~\ref{fig:supp:lengthscale} gives the histograms of the lengthscales learned by various methods in the experimention in Section~\ref{subsec:exp:uci}. Figure~\ref{fig:supp:T} gives the histogram of the temperatures learned by \softgp{}. On datasets such as \texttt{song} and \texttt{slice}, we see that the majority of \softgp{}'s lengthscales for each dimension reach the maximum lengthscale of 5. The dimensionality-specific variation is instead captured in the temperature.

\end{document}